\newcommand{\nosemic}{\renewcommand{\@endalgocfline}{\relax}}
\newcommand{\dosemic}{\renewcommand{\@endalgocfline}{\algocf@endline}}
\newcommand{\pushline}{\Indp}
\newcommand{\popline}{\Indm\dosemic}
\let\oldnl\nl
\newcommand{\nonl}{\renewcommand{\nl}{\let\nl\oldnl}}
\renewenvironment{enumerate}[1]{\begin{compactenum}#1}{\end{compactenum}}
\pgfplotsset{compat=1.16}
    \let\Cref\crtCref
    \let\cref\crtcref
\crefname{subsection}{Subsection}{Subsections}
\crefname{lemma}{Lemma}{Lemmas}
\crefname{corollary}{Corollary}{Corollaries}
\crefname{theorem}{Theorem}{Theorems}
\crefname{assumption}{Assumption}{Assumptions}
\declaretheorem[name=Theorem]{theorem}
\declaretheorem[sibling=theorem,name=Lemma]{lemma}
\declaretheorem[sibling=theorem,name=Definition]{definition}
\declaretheorem[sibling=theorem,name=Assumption]{assumption}
\declaretheorem[name=Assumption, numbered=no]{assumption*}
\renewcommand{\maketag@@@}[1]{\hbox{\m@th\normalsize\normalfont#1}}%
\let\reftagform@=\tagform@
\def\tagform@#1{\maketag@@@{\ignorespaces\textcolor{gray}{(#1)}\unskip\@@italiccorr}}
\renewcommand{\eqref}[1]{\textup{\reftagform@{\ref{#1}}}}
\newcommand{\EE}{\mathbb{E}}
\newcommand{\NN}{\mathbb{N}}
\newcommand{\PP}{\mathbb{P}}
\newcommand{\RR}{\mathbb{R}}
\newcommand{\Aa}{\mathcal{A}}
\newcommand{\Cc}{\mathcal{C}}
\newcommand{\Ee}{\mathcal{E}}
\newcommand{\Ff}{\mathcal{F}}
\newcommand{\Ii}{\mathcal{I}}
\newcommand{\Ll}{\mathcal{L}}
\newcommand{\Pp}{\mathcal{P}}
\newcommand{\Ss}{\mathcal{S}}
\newcommand{\Tt}{\mathcal{T}}
\newcommand{\Uu}{\mathcal{U}}
\newcommand{\Xx}{\mathcal{X}}
\newcommand{\Zz}{\mathcal{Z}}
\def\[#1\]{\begin{equation}\begin{aligned}#1\end{aligned}\end{equation}}
\def\*[#1\]{\begin{equation*}\begin{aligned}#1\end{aligned}\end{equation*}}
\def\s*[#1\s]{\small\begin{align*}#1\end{align*}\normalsize}
\newcommand{\lcrx}[4][{-1}]{ 
	\IfEq{#1}{-1}{\left #2 {{{{#3}}}} \right #4}{
   	\IfEq{#1}{0}{#2 {{{{#3}}}} #4}{
	\IfEq{#1}{1}{\bigl #2 {{{{#3}}}} \bigr #4}{
	\IfEq{#1}{2}{\Bigl #2 {{{{#3}}}} \Bigr #4}{
	\IfEq{#1}{3}{\biggl #2 {{{{#3}}}} \biggr #4}{
	\IfEq{#1}{4}{\Biggl #2 {{{{#3}}}} \Biggr #4}{
    \GenericWarning{"4th argument to lcrx must be -1, 0, 1, 2, 3, or 4"}
    }}}}}}} 
\DeclareMathOperator{\E}{\EE} 
\newcommand{\suchthat}{\;\ifnum\currentgrouptype=16 \middle\fi|\;} 
\newcommand{\Ind}{\mathds 1} 
\def\multiset#1#2{\ensuremath{\left(\kern-.3em\left(\genfrac{}{}{0pt}{}{#1}{#2}\right)\kern-.3em\right)}}
\DeclareMathOperator*{\esssup}{\ess\sup} 
\DeclareMathOperator*{\newlim}{\mathrm{lim}\vphantom{\mathrm{infsup}}}
\DeclareMathOperator*{\newmin}{\mathrm{min}\vphantom{\mathrm{infsup}}}
\DeclareMathOperator*{\newmax}{\mathrm{max}\vphantom{\mathrm{infsup}}}
\DeclareMathOperator*{\newsup}{\mathrm{sup}\vphantom{\mathrm{infsup}}}
\DeclareMathOperator*{\newess}{\mathrm{ess}\vphantom{\mathrm{infsup}}}
\renewcommand{\lim}{\newlim}
\renewcommand{\min}{\newmin}
\renewcommand{\max}{\newmax}
\renewcommand{\sup}{\newsup}
\newcommand{\ess}{\newess}
\newcommand{\dee}{\mathrm{d}} 
\newcommand{\grad}{\nabla} 
\DeclareDocumentCommand{\virtualDiff}{m m G{} G{}}{\frac{#1^{#4} \kern 1pt #3}{#1 \kern 1pt #2^{#4}}}
\DeclareDocumentCommand{\pdiff}{m G{} G{}}{\virtualDiff{\partial}{#1}{#2}{#3}} 
\DeclareDocumentCommand{\diff}{m G{} G{}}{\virtualDiff{\text d}{#1}{#2}{#3}} 
\newcommand{\normaldist}{\mathcal{N}}
\newcommand{\sbra}[2][{-1}]{\lcrx[#1] [ {#2} ] }
\newcommand{\abs}[2][{-1}]{\lcrx[#1] \vert {#2} \vert }
\newcommand{\norm}[2][{-1}]{\lcrx[#1] \Vert {#2} \Vert}
\newcommand{\Nats}{\NN}
\newcommand{\Reals}{\RR}
\newcommand{\PosReals}{\Reals_+}
\newcommand{\range}[2][{1}]{
	\IfEq{#1}{1}{\sbra{#2}}{\sbra{#2}_{#1}}}
\newcommand{\rangeO}[2][{0}]{
	\IfEq{#1}{0}{\sbra{#2}_0}{\sbra{#2}_{#1}}}
\newcommand{\RL}{RL} 
\newcommand{\MDP}{MDP} 
\newcommand{\DP}{DP} 
\newcommand{\DPP}{DPP} 
\newcommand{\DPE}{DPE} 
\newcommand{\ANN}{ANN} 
\newcommand{\BSDE}{BSDE} 
\newcommand{\PDE}{PDE} 
\newcommand{\CVaR}{\text{CVaR}} 
\newcommand{\agentname}{agent} 
\newcommand{\agentnames}{\agentname{}s} 
\newcommand{\episodename}{episode} 
\newcommand{\episodenames}{\episodename{}s} 
\newcommand{\transitionname}{transition} 
\newcommand{\transitionnames}{\transitionname{}s} 
\newcommand{\timename}{period} 
\newcommand{\timenames}{\timename{}s} 
\newcommand{\stratname}{policy} 
\newcommand{\stratnames}{policies} 
\newcommand{\agentstratname}{\agentname{}'s \stratname{}} 
\newcommand{\datastratname}{data-generating process} 
\newcommand{\datastratnames}{\datastratname{}es} 
\newcommand{\statespace}{\Ss}
\newcommand{\state}{s}
\newcommand{\statedum}{\state'}
\newcommand{\nextstates}{\state_{\timeidx+1}^{\policyparams}}
\newcommand{\actionspace}{\Aa}
\newcommand{\action}{a}
\newcommand{\costspace}{\Cc}
\newcommand{\costfunc}{c}
\newcommand{\periodspace}{\Tt}
\newcommand{\policy}{\pi}
\newcommand{\policyparams}{\theta}
\newcommand{\valuefunc}{V}
\newcommand{\valueparams}{\phi}
\newcommand{\paramspace}{\Theta}
\newcommand{\inventory}{q}
\newcommand{\inventorymax}{\inventory_{\max}}
\newcommand{\price}{S} 
\newcommand{\pricespace}{\PosReals}
\newcommand{\trade}{\action} 
\newcommand{\trademax}{\trade_{\max}}
\newcommand{\wealth}{y}
\newcommand{\wealthspace}{\Reals}
\newcommand{\clifflimit}{C}
\newcommand{\bank}{B}
\newcommand{\Lpspace}{\Zz}
\newcommand{\Lqspace}{\Zz^{*}}
\newcommand{\rv}{Z}
\newcommand{\rvdum}{W}
\newcommand{\eplength}{T}
\newcommand{\timeidx}{t}
\newcommand{\riskmeas}{\rho}
\newcommand{\dualriskmeas}{\riskmeas^{*}}
\newcommand{\dualdualriskmeas}{\riskmeas^{**}}
\newcommand{\riskenv}{\Uu}
\newcommand{\discount}{\gamma}
\newcommand{\lagrangian}{L}
\newcommand{\weight}{\xi}
\newcommand{\EEweight}{\EE^{\weight}}
\newcommand{\Ntrajectories}{N} 
\newcommand{\Ntransitions}{M} 
\newcommand{\transitionidx}{m} 
\newcommand{\Nepochs}{K} 
\newcommand{\epochidx}{\kappa} 
\newcommand{\NepochsV}{K} 
\newcommand{\epochVidx}{k} 
\newcommand{\NepochsPI}{K} 
\newcommand{\epochPIidx}{k} 
\newcommand{\NbatchsV}{B} 
\newcommand{\batchVidx}{b} 
\newcommand{\NbatchsPI}{B} 
\newcommand{\batchPIidx}{b} 
\renewcommand{\grad}[1]{\nabla_{#1}}
\definecolor{mgreen}{rgb}{0,0.455,0.247}
\definecolor{mblue}{rgb}{0.098,0.18,0.357}
\definecolor{mred}{rgb}{0.902,0.4157,0.0196}
\definecolor{mgrey}{rgb}{0.90196,0.90,0.90}
\definecolor{mblack}{rgb}{0,0,0}
\title{Reinforcement Learning with\\Dynamic Convex Risk Measures}
\author{Anthony Coache\thanks{The authors acknowledge support from the Natural Sciences and Engineering Research Council of Canada (grants RGPIN-2018-05705, RGPAS-2018-522715, and CGSD3-2019-534435). \new{The authors also thank the anonymous referees for their thoughtful comments during the revision process.}} \\
	Department of Statistical Sciences\\
	University of Toronto\\
	\href{mailto:anthony.coache@mail.utoronto.ca}{anthony.coache@mail.utoronto.ca} \\
	\url{https://anthonycoache.ca/} \\
	\And
	Sebastian Jaimungal\footnotemark[1] \\
	Department of Statistical Sciences\\
	University of Toronto\\
	\href{mailto:sebastian.jaimungal@utoronto.ca}{sebastian.jaimungal@utoronto.ca} \\
	\url{http://sebastian.statistics.utoronto.ca/} \\
}
\newcommand{\new}[1]{#1}
\begin{document}

\maketitle

\allowdisplaybreaks


\begin{abstract}%
    We develop an approach for solving time-consistent risk-sensitive stochastic optimization problems using model-free reinforcement learning (\RL{}).
    Specifically, we assume \agentnames{} assess the risk of a sequence of random variables using dynamic convex risk measures. We employ a time-consistent dynamic programming principle to determine the value of a particular  \stratname{}, and develop policy gradient update rules that aid in obtaining optimal policies. We further develop an actor-critic style algorithm using neural networks to optimize over \stratnames{}. Finally, we demonstrate the performance and flexibility of our approach by applying it to three optimization problems: statistical arbitrage trading strategies, \new{financial hedging}, and \new{obstacle avoidance robot control}.
\end{abstract}

\keywords{Reinforcement Learning \and Dynamic Risk Measures \and Policy Gradient \and Actor-Critic Algorithm \and  Time-Consistency \and Trading Strategies \and Financial Hedging \and Robot Control}


\section{Introduction}
\label{sec:introduction}

Reinforcement learning (\RL{}) provides a (model-free) framework for learning-based control.
\RL{} problems aim at learning dynamics in the underlying data and finding optimal behaviors while collecting data via an interactive process.
It differs from \emph{supervised learning} that attempts to learn classification functions from labeled data, and \emph{unsupervised learning} that seeks hidden patterns in unlabeled data.
\new{During the training phase,} the \agentname{} makes a sequence of decisions while interacting with the \datastratname{} and observing feedback in the form of costs.
The \agentname{} aims to discover the best possible actions by interacting with the environment and consistently updating their actions based on their experience, while often, also taking random actions to assist in exploring the state space -- the classic exploration-exploitation trade-off \citep{sutton2018reinforcement}.

In \RL{}, \emph{uncertainty} in the \datastratname{} can have substantial effects on performance.
Indeed, the environmental randomness may result in algorithms optimized for ``on-average'' performance to have large variance across scenarios.
For example, consider a portfolio selection problem: a risk-neutral optimal strategy (where one optimizes the expected terminal return) focuses on assets with the highest returns while ignoring the risks associated with them.
As a second example, consider an autonomous car which should account for environmental uncertainties such as the weather and road conditions when learning the optimal strategy.
Such an \agentname{} may wish to account for variability in the environment and the results of its actions to avoid large potential ``losses''. For an overview and outlook on RL in financial mathematics see, e.g., \cite{jaimungal2022reinforcement}.

In the extant literature, there are numerous proposals for accounting for risk sensitivity, where most of them replace the expectation in the optimization problem by risk measures -- we provide an overview in \cref{sec:related-work}.
\emph{Risk-awareness} or \emph{risk-sensitivity} in \RL{} offers a remedy to the \datastratname{} uncertainty by quantifying low-probability but high-cost outcomes, provides strategies that are more robust to the environment, and allows more flexibility than traditional approaches as it is tuned to the \agentname{}'s risk preference. The specific choice of risk measure is a decision the \agentname{} makes considering their goals and risk tolerances. We do not address here how the \agentname{} makes this specific choice and instead refer the reader to, e.g., \cite{dhaene2006risk} for an overview.

An interesting approach to risk-aware learning stems from \cite{tamar2015policy,tamar2016sequential}, where they provide policy search algorithms to solve risk-aware \RL{} problems in a dynamic framework. 
Both studies investigate \emph{stationary policies}, restrict themselves to \emph{coherent} risk measures, and apply their methodology to simple financial engineering applications.
More specifically, they evaluate their algorithms when learning policies for (perpetual) American options and trading in static portfolio optimization problems.
Several real-world applications require a level of flexibility that these limitations preclude.

In this paper, we develop a model-free approach to solve a wide class of (non-stationary) risk-aware \RL{} problems in a time-consistent manner.
Our contributions may be summarized as follows:
(i) we extend \cite{tamar2015policy,tamar2016sequential,ahmadi2020constrained,kose2021risk,huang2021convergence} by focusing on the broad class of \emph{dynamic convex} risk measures and consider finite-horizon problems with \emph{non-stationary \stratnames{}}; 
(ii) we devise an \emph{actor-critic} algorithm to solve this class of \RL{} problems using neural networks to allow continuous state-action spaces;
(iii) we derive a recursive formula for efficiently computing the policy gradients; and (iv) we demonstrate the performance and flexibility of our proposed approach on three important applications: optimal trading for statistical arbitrage, \new{hedging financial options}, and \new{obstacle avoidance in robot control}. We demonstrate that our approach appropriately accounts for uncertainty and leads to strategies that mitigate risk.

The remainder of the paper is structured as follows.
In \cref{sec:related-work}, we discuss related work and introduce our \RL{} notation in \cref{sec:rl-notation}.
\cref{sec:risk-notation} formalizes the evaluation of risk in both static and dynamic frameworks.
We introduce the class of sequential decision making problems with dynamic convex risk measures in \cref{sec:problem-setup} and devise an actor-critic style algorithm to solve them in \cref{sec:algorithm}.
Finally, \cref{sec:experiments} illustrates the performance of our proposed algorithm, and we discuss our work's limitations and possible extensions in \cref{sec:conclusion}.


\section{Related Work}
\label{sec:related-work}

The literature in risk evaluation for sequential decision making can be divided between those that apply a risk measure to a single cost random variable \new{(e.g. discounted sum of costs or terminal profit and loss)}, and those that apply risk measures recursively to a sequence of cost random variables \new{(e.g. cash-flows)}.
The former approach optimizes the risk of a single random variable, which does not account for the temporal structure in what generates it, while the latter optimizes the risk of sequences of random variables in a \emph{dynamic} framework as additional information becomes available.

Several authors address sequential decision making problems by minimizing the risk of a cost \emph{over a whole \episodename{}}.
For example, \cite{prashanth2013actor} focus on objective functions for variance related criteria, while \cite{chow2017risk} take a risk-constrained approach with an objective function that includes a penalty on the conditional value-at-risk (\CVaR{}).
Some lines of research look at risk-sensitive \RL{} problems using a broader classes of risk measures, such as comonotonic \citep{petrik2012approximate}, entropic \citep{nass2019entropic}, or spectral \citep{bauerle2020minimizing} risk measures.
\cite{di2019practical} consider a risk-neutral objective function, but includes an additional constraint on the risk of the cumulative discounted cost.

Other works extend optimization in Markov decision processes (\MDP{}s) by evaluating the risk \emph{at each \timename{}}.
For instance, \cite{ruszczynski2010risk} evaluates the risk at each \timename{} using dynamic Markov coherent risk measures, while \cite{chu2014markov} and \cite{bauerle2021markov} propose iterated coherent risk measures, where they both derive risk-aware dynamic programming (\DP{}) equations and provide policy iteration algorithms.
While they focus on coherent risk measures, various classes of risk measures have already been extended to a dynamic framework, such as distribution-invariant risk measures \citep{weber2006distribution}, coherent risk measures \citep{riedel2004dynamic}, convex risk measures \citep{frittelli2004dynamic,detlefsen2005conditional}, and dynamic assessment indices \citep{bielecki2016dynamic}, among others -- however, these works do not look at how to perform model free optimization, i.e., they do not look at RL. For an overview of dynamic risk measures, see, e.g., \cite{acciaio2011dynamic}.

Another way to account for uncertainty in the \datastratname{} is by allowing the parameters of the model, or the entire distribution itself, to be unknown.
The class of \emph{robust \MDP{}s} \citep{delage2010percentile} focuses on optimizing the worst-case expectation when the parameters vary within a certain set.
There exists relationships between risk-aware and robust \MDP{}s, as shown in \cite{osogami2012robustness} and \cite{bauerle2020distributionally}.
Indeed, minimizing a Markov coherent risk measure in a risk-aware context is equivalent to minimizing a certain worst-case expectation where the uncertainty set is characterized by a concave function.
Several researchers have developed algorithms to solve robust \MDP{}s, for an overview see, e.g., \cite{rahimian2019distributionally}.

While we consider a model-free approach, several researchers tackle related, but distinct, problems in a \emph{model-based framework}.
For instance, \cite{weinan2017deep} and \cite{han2018solving} use deep learning methods to solve non-linear partial differential equations (\PDE{}s). Using the non-linear Feynman-Kac representation, they reformulate the \PDE{}s as backward stochastic differential equations (\BSDE{}s), they then parametrize the co-adjoint process and initial condition using  an ensemble of neural networks, and use the mean squared  error in the terminal condition as the loss function. Despite there being an equivalence between \BSDE{}s and dynamic risk measures \citep{peng1997backward,drapeau2016dual}, the dual representation does not directly help when we aim to optimize a dynamic risk measure in a model-free manner.

Other types of algorithms exist in the literature to find a solution to risk-aware \RL{} problems.
Among others, \cite{galichet2013exploration} use a \emph{multi-armed bandit approach}, \cite{shen2014risk} devise a risk-sensitive \emph{Q-learning method} when optimizing utility functions, \cite{bellemare2017distributional} use a \emph{distributional perspective} to learn the whole distribution of the value function, \cite{yu2018approximate} employ an approximate \DP{} approach to devise a \emph{value iteration} algorithm for Markov risk measures, and \cite{kalogerias2020better} address risk-aware problems from a \emph{Bayesian perspective}.
The shortcomings of these approaches are that they apply to a specific class of risk measures and the developed methodologies are tuned to them.


\section{Reinforcement Learning}
\label{sec:rl-notation}

\tikzstyle{reward}=[shape=circle,draw=mblue,fill=mblue!10]
\tikzstyle{action}=[shape=circle,draw=mgreen,fill=mgreen!10]
\tikzstyle{state}=[shape=circle,draw=mred,fill=mred!10]
\tikzstyle{gru}=[shape=rectangle,draw=black!50,fill=lime!10]
\tikzstyle{obs}=[shape=circle,draw=mblue,fill=mblue!10]
\tikzstyle{lightedge}=[<-,dotted]
\tikzstyle{mainstate}=[state,thick]
\tikzstyle{mainedge}=[<-,thick]			
\begin{wrapfigure}[9]{r}{0.5\textwidth}
    \centering
    \begin{tikzpicture}[scale=0.7,every node/.style={transform shape},minimum width=1.0cm]
    
    \node[reward] (r1) at (3,2) {$~\costfunc_{\timeidx}~$};
    \node[reward] (r2) at (6,2) {$\costfunc_{\timeidx+1}$};
    
    \node[] (s0) at (-2,0) {$\dots$};
    \node[state,scale=1] (s1) at (0,0) {$~\state_{\timeidx}~$};
    \node[state,scale=1] (s2) at (3,0) {$\state_{\timeidx+1}$};
    \node[state,scale=1] (s3) at (6,0) {$\state_{\timeidx+2}$};
    \node[] (s4) at (8,0) {$\dots$};
    
    \node[action] (a1) at (1.5,1.25) {$~\action_{\timeidx}~$};
    \node[action] (a2) at (4.5,1.25) {$\action_{\timeidx+1}$};
    
    \draw [->] (s0) to (s1);
    \draw [->] (s1) to (s2);
    \draw [->] (s2) to (s3);
    \draw [->] (s3) to (s4);
    
    \draw [->] (s1) to (a1);
    \draw [->] (a1) to (r1);
    \draw [->] (s1) to [out=90, in=165] (r1.west);
    \draw [->] (a1) to  (s2);
    
    \draw [->] (s2) to (a2);
    \draw [->] (a2) to (r2);
    \draw [->] (s2) to [out=90, in=165] (r2.west);
    \draw [->] (a2) to  (s3);
    
    \draw [->] (s2) to (r1);
    \draw [->] (s3) to (r2);
    
    \end{tikzpicture}
    \caption{Directed graph representation of an MDP.}
    \label{fig:tikz-MDP}
    
\end{wrapfigure}
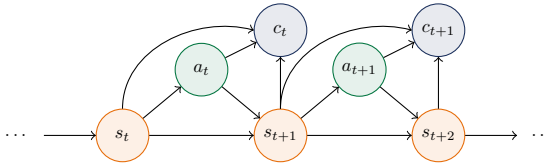
In this section, we introduce the necessary theoretical background for the \RL{} problems we study.
We describe each problem as an \emph{\agentname{}} who tries to learn an optimal behavior, or \emph{\agentstratname{}}, by interacting with a certain \emph{\datastratname{}}.

Let $\statespace$ and $\actionspace$ be arbitrary state and action spaces respectively, and let $\costspace \subset \Reals$ be a cost space.
The \datastratname{} is often represented as a \emph{\MDP{}} with the tuple $(\statespace, \actionspace, \costfunc, \PP)$, where $\costfunc(\state, \action, \statedum) \in \costspace$ is a deterministic, state-action dependent cost function and $\PP$ characterizes the transition probabilities $\PP(\state_{\timeidx+1} = \statedum | \state_{\timeidx} = \state, \action_{\timeidx} = \action)$, \new{unknown to the \agentname{}}.
The transition probability is assumed stationary, although time may be a component of the state, e.g. we usually assume that time is embedded in the state space without loss of generality.
A single \emph{\episodename{}} consists of $\eplength$ \emph{\timenames{}}, where $\eplength \in \Nats$ is known and finite.
At each \timename{}, the \agentname{} begins in state $\state_{\timeidx} \in \statespace$, takes an action $\action_{\timeidx} \in \actionspace$, moves to the next state $\state_{\timeidx+1} \in \statespace$, and receives a cost $\costfunc_{\timeidx} = \costfunc(\state_{\timeidx}, \action_{\timeidx}, \state_{\timeidx+1}) \in \costspace$.
A directed graph representation of the described \MDP{} is shown in \cref{fig:tikz-MDP}.
A \emph{trajectory} consists of all states, actions and costs that occur during a single \episodename{} and we denote it by the tuple $\tau := (\state_0, \action_0, \costfunc_0, \ldots, \state_{\eplength-1}, \action_{\eplength-1}, \costfunc_{\eplength-1}, \state_{\eplength})$.

The \agentname{} follows a strategy described by a randomized (also known as exploratory control) \emph{\stratname{}} $\policy: \statespace \rightarrow \Pp(\actionspace)$, where $\Pp(\actionspace)$ is the space of probability measures on $\sigma(\actionspace)$.
More specifically when in state $\state$ at time $\timeidx$, the \agentname{} selects the action $\action$ with probability $\policy(\action | \state_{\timeidx} = \state)$.
This also  allows for non-stationary \stratnames{} when dealing with finite-horizon problems.

Standard \RL{} usually deals with \emph{risk-neutral} objective functions of the ($\discount$-discounted) cost of a trajectory induced by a \MDP{} with a policy $\policy$, for instance
\begin{equation}
    \min_{\policy} \, \E \left[\; \sum_{\timeidx=0}^{\eplength-1} \gamma^{\timeidx} \costfunc(\state_{\timeidx}, \action_{\timeidx}, \state_{\timeidx+1}) \right],\label{eq:rl-obj}
\end{equation}
where $\discount \in (0,1]$ is a discount factor.
At \timename{} $\eplength$, there is no action and hence any cost based on the terminal state is encapsulated in $\costfunc_{\eplength-1}$.
Contrastingly, risk-sensitive \RL{} considers problems in which the objective takes into account the \emph{variability} of the cost with a risk measure $\riskmeas$, whether optimizing
\begin{equation}
    \min_{\policy} \, \riskmeas \left( \sum_{\timeidx=0}^{\eplength-1} \gamma^{\timeidx} \costfunc(\state_{\timeidx}, \action_{\timeidx}, \state_{\timeidx+1}) \right), \label{eq:risk-rl-obj}
\end{equation}
or \cref{eq:rl-obj} with an additional constraint on the risk measure of the trajectory cost.
We discuss thoroughly risk measures and their properties in \cref{sec:risk-notation} next.

In all cases, the goal of \RL{} is to learn the optimal \stratname{} $\policy$ that attains the minimum in the corresponding objective function, and do so in a manner that makes no explicit assumptions on the \datastratname{}.


\section{Risk Measures}
\label{sec:risk-notation}

In this section, we formalize how we quantify the risk of random variables.
We start by providing a review of static risk measures, and then continue by reviewing the framework of \cite{ruszczynski2010risk} for building time-consistent dynamic risk measures.
Static risk measures evaluate the immediate risk of a financial position, while dynamic risk measures allow its monitoring at different times, and lead to time-consistent optimal strategies.
\new{In our work, we make use of static risk measures on conditional distributions of random variables to construct dynamic risk measures.}


\subsection{Static Setting}
\label{ssec:static-risk}

Let $(\Omega, \Ff, \PP)$ be a probability space, \new{$\bar{\Reals}:=\Reals \cup \{ -\infty, \infty \}$ the extended real line,} and define $\Lpspace := \Ll_p(\Omega, \Ff, \PP)$ and $\Lqspace := \Ll_q(\Omega, \Ff, \PP)$ with $p,q \in [1, \infty]$ \new{such that $1/p + 1/q = 1$}.
They represent the space of $p$-integrable, respectively $q$-integrable, $\Ff$-measurable random variables.
A \emph{risk measure} is a mapping \new{$\riskmeas: \Xx \rightarrow \bar{\Reals}$}, where $\Xx$ is a space of random variables\new{, that satisfies additional properties}.
In what follows we assume that $\Xx = \Lpspace$ and $\rv \in \Lpspace$ is interpreted as a random cost.
We next enumerate some properties of various risk measures.
\begin{definition}
    Let $m \in \Reals$, $\beta > 0$ and $\lambda \in [0,1]$. A \new{map $\riskmeas : \Xx \rightarrow \bar{\Reals}$} is said to be
    \begin{enumerate}
    	\setlength{\itemsep}{0pt}
    	\item \emph{law-invariant} if $\riskmeas(\rv_1) = \riskmeas(\rv_2)$ if $\rv_1\stackrel{d}{=}\rv_2$;
    
    	\item \emph{monotone} if $\rv_1 \leq \rv_2$ implies $\riskmeas(\rv_1) \leq \riskmeas(\rv_2)$;
    	
    	\item \emph{translation invariant} if $\riskmeas(\rv+m) = \riskmeas(\rv)+m$;
    	
    	\item \emph{positive homogeneous} if $\riskmeas(\beta \rv) = \beta\, \riskmeas(\rv)$;
    	
    	\item \emph{comonotonic additive} if $\riskmeas(\rv_1 + \rv_2) = \riskmeas(\rv_1) + \riskmeas(\rv_2)$ for all comonotonic pairs $(\rv_1,\rv_2)$;
    	
    	\item \emph{subadditive} if $\riskmeas(\rv_1 + \rv_2) \leq \riskmeas(\rv_1) + \riskmeas(\rv_2)$;
    	
    	\item \emph{convex} if $\riskmeas (\lambda \rv_1 + (1-\lambda) \rv_2) \leq \lambda \riskmeas(\rv_1) + (1-\lambda) \riskmeas(\rv_2)$.
    \end{enumerate}    
\end{definition}

There is a consensus in the literature that any risk measure should satisfy the monotonicity and translation invariance properties.
More specifically, a portfolio with a higher cost for every possible scenario is indeed riskier, and the deterministic part of a portfolio does not contribute to its risk.
\begin{definition}
	A \new{map} $\riskmeas$ is said to be a \emph{monetary \new{risk measure}} \citep{follmer2004stochastic} if and only if it is monotone and translation invariant.
\end{definition}

In addition to being monetary, additional requirements may be assumed so that the risk measure reflects \emph{observed investor behavior}.
\begin{definition}
	A \new{map} $\riskmeas$ is said to be a \emph{coherent \new{risk measure}} \citep{artzner1999coherent} if and only if it is monotone, translation invariant, positive homogeneous, and subadditive.
	\label{def:coherent_riskmeasure}
\end{definition}

The additional properties for coherent risk measures guarantees that doubling a position doubles its risk, and diversifying a portfolio reduces its risk.
The CVaR \citep{rockafellar2000optimization} is a well-known example of a coherent risk measure commonly used in the literature.
Criticisms of positive homogeneity and subadditivity led to the study of a broader class of risk measures, where these axioms are weakened and replaced by the notion of convexity.
Indeed, the risk might increase in a nonlinear way, which is not permitted under coherent risk measures.
\begin{definition}
	A \new{map} $\riskmeas$ is said to be  a \emph{convex \new{risk measure}} \citep{follmer2002convex} if and only if it is monotone, translation invariant, and convex.
	\label{def:convex_riskmeasure}
\end{definition}

Another advantage of convex risk measures is that we can combine several risk measures into a linear combination to create a trade-off between different risk-aware objectives. 
Indeed, one can easily show that given two convex risk measures $\riskmeas_1,\riskmeas_2$ and nonnegative coefficients $\beta_1,\beta_2 > 0$, then $\rho:=\beta_1\, \riskmeas_1 + \beta_2\, \riskmeas_2$ \new{and $\rho:=\max\{\riskmeas_1,\riskmeas_2\}$ are both} also convex. The set of coherent risk measures is a strict subset of the set of convex risk measures.

A dual representation of convex (and, as a special case, coherent) risk measures provides us with a key result for developing our practical algorithm.
The dual representation requires us to introduce the expectation under what is effectively a distorted probability measure and denote $\EEweight [\rv] := \sum_{\omega} \rv(\omega) \weight(\omega) \PP(\omega)$ with $\rv \in \Lpspace$ and $\weight \in \Lqspace$.
\begin{definition}
    The \emph{conjugate} \citep{shapiro2014lectures} of the risk measure $\riskmeas$, denoted \new{$\dualriskmeas: \Lqspace \rightarrow \bar{\Reals}$}, is given by
    \begin{equation}
    	\dualriskmeas(\weight) = \sup_{\rv \in \Lpspace} \left\{ \EEweight[\rv] - \riskmeas(\rv) \right\}. \label{eq:conjugate}
    \end{equation}
\end{definition}
\begin{definition}
    The \emph{biconjugate} \citep{shapiro2014lectures}, or conjugate of the conjugate, is a mapping \new{$\dualdualriskmeas: \Lpspace \rightarrow \bar{\Reals}$} with
\begin{equation}
	\dualdualriskmeas(\rv) = \sup_{\weight \in \Lqspace} \left\{ \EEweight\left[ \rv \right] - \dualriskmeas(\weight) \right\}. \label{eq:biconjugate}
\end{equation}
\end{definition}
The dual representation of the risk measure is given in the following theorem.
\begin{theorem}[Representation Theorem \new{{\citep[see Theorem 6.4 in][]{shapiro2014lectures}}}]
	\label{thm:representation-thm}
	A convex risk measure $\riskmeas$ is proper (i.e. $\riskmeas(\rv) > -\infty$ and its domain is nonempty) and lower semicontinuous (i.e. $\riskmeas(\rvdum) \leq \liminf_{\rv \rightarrow \rvdum} \riskmeas(\rv)$) iff there exists a set
	$$
	\riskenv(P) \subset \Big\{ \weight \in \Lqspace \, : \, \textstyle{\sum}_{\omega} \, \weight(\omega) \PP(\omega) = 1, \ \weight \geq 0 \Big\},
	$$
	often referred to as the \emph{risk envelope}, such that 
	\begin{equation}
		\riskmeas(\rv) = \sup_{\weight \in \, \riskenv(\PP)} \left\{ \EEweight \left[ \rv \right] - \dualriskmeas(\weight) \right\}. \label{eq:representation-thm}
	\end{equation}
	Moreover, we have that $\riskmeas$ is coherent (e.g. satisfies also the positive homogeneity) iff \begin{equation*}
		\riskmeas(\rv) = \sup_{\weight \in \, \riskenv(\PP)} \left\{ \EEweight \left[ \rv \right] \right\}.
	\end{equation*}
\end{theorem}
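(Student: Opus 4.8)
The plan is to invoke the Fenchel--Moreau (biconjugation) theorem and then read off the geometry of the risk envelope from the monotonicity and translation-invariance axioms. Since $\riskmeas$ is assumed proper, convex, and lower semicontinuous on $\Lpspace$ paired with its dual $\Lqspace$, Fenchel--Moreau yields $\riskmeas = \dualdualriskmeas$; that is,
\begin{equation*}
\riskmeas(\rv) = \sup_{\weight \in \Lqspace} \{ \EEweight[\rv] - \dualriskmeas(\weight) \},
\end{equation*}
with $\dualriskmeas$ the conjugate from \eqref{eq:conjugate}. The entire content of the theorem then reduces to showing that the weights contributing to this supremum are exactly the normalized, nonnegative ones, so that the sup may be restricted to a set $\riskenv(\PP)$ of the claimed form.

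First I would use translation invariance to pin down the normalization. Substituting $\rv \mapsto \rv + m$ into \eqref{eq:conjugate} and using $\EEweight[\rv+m] = \EEweight[\rv] + m \sum_{\omega} \weight(\omega)\PP(\omega)$ together with $\riskmeas(\rv+m) = \riskmeas(\rv) + m$, one sees that whenever $\sum_{\omega} \weight(\omega)\PP(\omega) \neq 1$ the quantity $\EEweight[\rv+m] - \riskmeas(\rv+m)$ diverges to $+\infty$ as $m \to \pm\infty$, forcing $\dualriskmeas(\weight) = +\infty$. Hence $\domain(\dualriskmeas)$ lies in the hyperplane $\{ \sum_{\omega} \weight(\omega)\PP(\omega) = 1 \}$. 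Next I would use monotonicity to force nonnegativity: if $\weight < 0$ on a set $A$ of positive probability, then testing the conjugate at $\rv_0 - c\,\Ind_A$ for $c > 0$ and letting $c \to \infty$ gives $\dualriskmeas(\weight) = +\infty$, because $\riskmeas(\rv_0 - c\,\Ind_A) \leq \riskmeas(\rv_0)$ by monotonicity while $-\EEweight[c\,\Ind_A] \to +\infty$. Thus $\domain(\dualriskmeas) \subset \{ \weight \geq 0 \}$ as well.

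Setting $\riskenv(\PP) := \domain(\dualriskmeas)$, the two steps above place $\riskenv(\PP)$ inside $\{ \weight \in \Lqspace : \sum_{\omega} \weight(\omega)\PP(\omega) = 1, \ \weight \geq 0 \}$, and since $-\dualriskmeas(\weight) = -\infty$ off this domain those weights never attain the supremum; restricting the sup to $\riskenv(\PP)$ produces \eqref{eq:representation-thm}. For the converse implication I would check that each affine map $\rv \mapsto \EEweight[\rv] - \dualriskmeas(\weight)$ with $\weight \geq 0$ and $\sum_{\omega}\weight(\omega)\PP(\omega) = 1$ is monotone and translation invariant, so that a pointwise supremum of such maps is automatically convex, lower semicontinuous, proper (the domain being nonempty), monotone, and translation invariant---that is, a convex risk measure.

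For the coherent refinement, positive homogeneity is the added ingredient. It gives $\riskmeas(0)=0$, and substituting $\rv \mapsto \beta\rv$ into \eqref{eq:conjugate} shows $\dualriskmeas(\weight) = \beta\,\dualriskmeas(\weight)$ for all $\beta > 0$, so $\dualriskmeas$ takes only the values $0$ and $+\infty$; equivalently $\dualriskmeas$ is the convex indicator of $\riskenv(\PP)$, and \eqref{eq:representation-thm} collapses to $\riskmeas(\rv) = \sup_{\weight \in \riskenv(\PP)} \EEweight[\rv]$. I expect the only genuine obstacle to be the functional-analytic setup behind Fenchel--Moreau when $p = \infty$: there the topological dual of $\Ll_\infty(\Omega,\Ff,\PP)$ is strictly larger than $\Ll_1(\Omega,\Ff,\PP)$, so one must work with the weak-$\ast$ topology $\sigma(\Ll_\infty, \Ll_1)$ and invoke the Fatou property to secure lower semicontinuity in the correct topology---or exploit the finiteness of $\Omega$ implicit in the sums $\sum_{\omega}$ to make everything finite-dimensional and bypass the issue. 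The axiom-to-geometry translation in the middle steps is then routine.
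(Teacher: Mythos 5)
Your argument is correct and is essentially the standard proof: the paper does not prove \cref{thm:representation-thm} itself but cites Theorem 6.4 of \cite{shapiro2014lectures}, and that reference establishes the result exactly as you do --- Fenchel--Moreau biconjugation for proper, convex, lower semicontinuous $\riskmeas$, followed by using translation invariance to confine $\domain(\dualriskmeas)$ to the hyperplane $\sum_{\omega}\weight(\omega)\PP(\omega)=1$, monotonicity to force $\weight \geq 0$, and positive homogeneity to show $\dualriskmeas$ is the indicator of its domain in the coherent case. Your closing caveat about the $p=\infty$ pairing (weak-$\ast$ topology versus the finite-$\Omega$ setting implicit in the sums $\sum_{\omega}$) is the right one to flag and is consistent with how the paper uses the result.
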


If we assume $\riskmeas$ is a convex, proper, lower semicontinuous risk measure, then by \cref{thm:representation-thm}, it may be written as an optimization problem where the distortion $\weight$ is chosen adversarially from a subset of the set of all probability densities.
Moreover, the notable difference between coherent and convex risk measures is the conjugate term that appears in its dual representation.
Coherent risk measures are \emph{uniquely} characterized by their risk envelope.

\subsection{Dynamic Setting}
\label{ssec:dynamic-risk}

Optimizing a controlled performance criteria using static risk measures is known to lead to time-inconsistent solutions -- we discuss the precise definition below.
Adapting risk measures to properly account for the flow of information requires  additional care to ensure that the risk evaluation is done in a time-consistent manner, especially with \DP{} models for \MDP{}s.
There are multiple extensions of static risk measure to the dynamic case.
Indeed, various classes of risk measures have already been extended to a dynamic framework, such as distribution-invariant \citep{weber2006distribution}, coherent \citep{riedel2004dynamic}, convex risk measures \citep{frittelli2004dynamic,detlefsen2005conditional}, and dynamic assessment indices \cite{bielecki2016dynamic},  among others.
Here we closely follow the work of \cite{ruszczynski2010risk}.

To this end, let $\periodspace := \{0, \ldots, \eplength\}$ denote the sequence of \timenames{} in an \episodename{}.
Consider a filtration $\Ff_{0} \subseteq \Ff_{1} \subseteq \ldots \subseteq \Ff_{\eplength} \subseteq \Ff$ on a \new{filtered} probability space \new{$(\Omega, \Ff, \{\Ff_{\timeidx}\}_{\timeidx \in \periodspace}, \PP)$} and $\{\Lpspace_{\timeidx}\}_{\timeidx \in \periodspace}$ with $\Lpspace_{\timeidx} := \Ll_{p}(\Omega, \Ff_{\timeidx}, \PP)$.
Define $\Lpspace_{\timeidx,\eplength} := \Lpspace_{\timeidx} \times \cdots \times \Lpspace_{\eplength}$.
\begin{definition}
    A \emph{conditional risk measure} 
    is a map $\riskmeas_{\timeidx,\eplength}: \Lpspace_{\timeidx,\eplength} \rightarrow \Lpspace_{\timeidx}$ which satisfies the monotonicity property, i.e. $\riskmeas_{\timeidx,\eplength}(\rv) \leq \riskmeas_{\timeidx,\eplength} (\rvdum)$ for all $\rv,\rvdum \in \Lpspace_{\timeidx,\eplength}$ such that $\rv \leq \rvdum$ a.s..
\end{definition}
\begin{definition}
	A \emph{dynamic risk measure} 
	is a sequence of conditional risk measures $\{\riskmeas_{\timeidx,\eplength}\}_{\timeidx \in \periodspace}$.
\end{definition}

We may interpret $\riskmeas_{\timeidx,\eplength} (\rv)$, for $\rv\in\Lpspace_{\timeidx,\eplength}$, as a $\Ff_{\timeidx}$-measurable charge the \agentname{} would be willing to incur at time $\timeidx$ instead of the sequence of costs $\rv$.
Developing a dynamic programming principle (\DPP{}) for dynamic risk measures crucially depends on the property of \emph{time-consistency}.
One wishes to evaluate the risk of future outcomes, but it must not lead to inconsistencies at different points in time.
\begin{definition}
	\label{def:time-consistency}
	$\{\riskmeas_{\timeidx,\eplength}\}_{\timeidx \in \periodspace}$ is said to be \new{\emph{time-consistent} \citep[see Definition 3 in][]{ruszczynski2010risk}} iff for any sequence $\rv,\rvdum \in \Lpspace_{\timeidx_1,\eplength}$ and any $\timeidx_1,\timeidx_2 \in \periodspace$ such that ($0 \leq \timeidx_1 < \timeidx_2 \leq \eplength$),
	\begin{equation*}
		\rv_k = \rvdum_k, \; \forall k = \timeidx_1, \ldots, \timeidx_2 \quad \text{and} \quad \riskmeas_{\timeidx_2,\eplength}(\rv_{\timeidx_2}, \ldots, \rv_{\eplength}) \leq \riskmeas_{\timeidx_2,\eplength}(\rvdum_{\timeidx_2}, \ldots, \rvdum_{\eplength})
	\end{equation*}
	implies that $\riskmeas_{\timeidx_1,\eplength}(\rv_{\timeidx_1}, \ldots, \rv_{\eplength}) \leq \riskmeas_{\timeidx_1,\eplength}(\rvdum_{\timeidx_1}, \ldots, \rvdum_{\eplength})$.
\end{definition}

\cref{def:time-consistency} may be interpreted as follows: if $\rv$ will be at least as good as $\rvdum$ at time $\timeidx_2$ and they are identical between $\timeidx_1$ and $\timeidx_2$, then $\rv$ should not be worse than $\rvdum$ at time $\timeidx_1$.

Furthermore, we introduce one additional concept that aids in developing a recursive relationship for dynamic risk measures.
\begin{definition}
    \label{def:one-step-cond}
    A \emph{one-step conditional risk measure}
    is a map $\riskmeas_{\timeidx}: \Lpspace_{\timeidx+1} \rightarrow \Lpspace_{\timeidx}$ which satisfies $\riskmeas_{\timeidx} (\rv) = \riskmeas_{\timeidx, \timeidx+1} (0, \rv)$ for any $Z\in\Lpspace_{\timeidx+1}$.
\end{definition}

One may assume even stronger properties for conditional risk measures, e.g., one may assume the risk measures are static (across time) and convex. In the next section, we do precisely this.
Therefore, the one-step conditional risk measure $\riskmeas_{\timeidx} (\cdot \, | \, \Ff_{\timeidx})$ outputs an $\Ff_{\timeidx}$-measurable random variable obtained when conditioning on $\Ff_{\timeidx}$.
As a consequence of \cref{def:time-consistency,def:one-step-cond}, we have the following recursive relationship for time-consistent dynamic risk measures.
\begin{theorem}[Recursive relationship \new{{\citep[see Theorem 1 and Eq. (10) in][]{ruszczynski2010risk}}}]
    \label{thm:recursive-relation}
    Let $\{\riskmeas_{\timeidx,\eplength}\}_{\timeidx \in \periodspace}$ be a time-consistent, dynamic risk measure.
    \new{Suppose that it satisfies $\riskmeas_{\timeidx,\eplength}(\rv_{\timeidx}, \rv_{\timeidx+1}, \ldots, \rv_{\eplength}) = \rv_{\timeidx} + \riskmeas_{\timeidx,\eplength}(0, \rv_{\timeidx+1}, \ldots, \rv_{\eplength})$ and $\riskmeas_{\timeidx,\eplength} (0,\ldots,0) = 0$ for any $\rv\in\Lpspace_{\timeidx,\eplength}$, $\timeidx\in\periodspace$.}
    Then $\{\riskmeas_{\timeidx,\eplength}\}_{\timeidx \in \periodspace}$ can be expressed as
    \begin{equation}
    	\riskmeas_{\timeidx,\eplength} (\rv_{\timeidx}, \ldots, \rv_{\eplength}) = \rv_{\timeidx} +
    	\riskmeas_{\timeidx} \Bigg( \rv_{\timeidx+1} +
    	\riskmeas_{\timeidx+1} \bigg( \rv_{\timeidx+2} +
    	\cdots +
    	\riskmeas_{\eplength-2} \Big( \rv_{\eplength-1} +
    	\riskmeas_{\eplength-1} \big( \rv_{\eplength} \big) \Big) \cdots \bigg) \Bigg). \label{eq:dynamic-risk}
    \end{equation}
\end{theorem}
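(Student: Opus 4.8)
The plan is to collapse the nested identity \eqref{eq:dynamic-risk} into a single one-step decomposition and then finish by backward induction on $\timeidx$. Writing $\rho^{(\eplength)} := \rv_{\eplength}$ and $\rho^{(s)} := \rv_s + \riskmeas_s(\rho^{(s+1)})$ for $s < \eplength$, the right-hand side of \eqref{eq:dynamic-risk} is exactly $\rho^{(\timeidx)}$, so it suffices to establish, for every $\timeidx$, the one-step relation
\begin{equation*}
\riskmeas_{\timeidx,\eplength}(\rv_{\timeidx},\ldots,\rv_{\eplength}) = \rv_{\timeidx} + \riskmeas_{\timeidx}\big(\riskmeas_{\timeidx+1,\eplength}(\rv_{\timeidx+1},\ldots,\rv_{\eplength})\big).
\end{equation*}
Granting this, downward induction from the base case $\timeidx = \eplength-1$ (which follows immediately from translation invariance and \cref{def:one-step-cond}) closes the argument: the induction hypothesis replaces $\riskmeas_{\timeidx+1,\eplength}(\rv_{\timeidx+1},\ldots,\rv_{\eplength})$ by $\rho^{(\timeidx+1)}$, turning the one-step identity into $\rho^{(\timeidx)}$.

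The engine for the one-step relation is a \emph{locality} property distilled from time-consistency: with the first coordinate $\rv_{\timeidx}$ fixed, I claim $\riskmeas_{\timeidx,\eplength}(\rv_{\timeidx},\ldots,\rv_{\eplength})$ depends on the tail $(\rv_{\timeidx+1},\ldots,\rv_{\eplength})$ only through the $\Ff_{\timeidx+1}$-measurable quantity $\rvdum := \riskmeas_{\timeidx+1,\eplength}(\rv_{\timeidx+1},\ldots,\rv_{\eplength})$. To see this I would take any second tail with the same conditional risk $\rvdum$, prepend the common coordinate $\rv_{\timeidx}$, and apply \cref{def:time-consistency} with $(t_1,t_2)=(\timeidx,\timeidx+1)$ -- reading the agreement requirement, as in the original formulation, on the initial segment strictly preceding $t_2$, i.e.\ only at the single coordinate $\timeidx$. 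Since the hypothesis here is an equality of conditional risks, running time-consistency in both directions yields equality of the two values of $\riskmeas_{\timeidx,\eplength}$. A convenient representative of the tail is $(\rvdum,0,\ldots,0)$, which is admissible because translation invariance together with the normalization $\riskmeas_{\timeidx+1,\eplength}(0,\ldots,0)=0$ gives $\riskmeas_{\timeidx+1,\eplength}(\rvdum,0,\ldots,0)=\rvdum$. Hence
\begin{equation*}
\riskmeas_{\timeidx,\eplength}(\rv_{\timeidx},\rv_{\timeidx+1},\ldots,\rv_{\eplength}) = \riskmeas_{\timeidx,\eplength}(\rv_{\timeidx},\rvdum,0,\ldots,0) = \rv_{\timeidx} + \riskmeas_{\timeidx,\eplength}(0,\rvdum,0,\ldots,0),
\end{equation*}
the final step again by translation invariance.

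What remains is to identify $\riskmeas_{\timeidx,\eplength}(0,\rvdum,0,\ldots,0)$ with the one-step measure $\riskmeas_{\timeidx}(\rvdum)=\riskmeas_{\timeidx,\timeidx+1}(0,\rvdum)$ of \cref{def:one-step-cond}, and this horizon-collapse -- that appending null costs after period $\timeidx+1$ leaves the risk assigned at $\timeidx$ unchanged -- is the step I expect to be the main obstacle. The difficulty is that \cref{def:time-consistency} only compares conditional risk measures sharing the terminal index $\eplength$, and so does not directly relate $\riskmeas_{\timeidx,\eplength}$ to the shorter-horizon $\riskmeas_{\timeidx,\timeidx+1}$. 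I would discharge it either by invoking consistency of the family $\{\riskmeas_{\timeidx,\theta}\}_{\timeidx\le\theta}$ across horizons -- peeling the trailing zeros one period at a time, each peel being the same locality argument applied at the tail -- or by recording the collapse as part of the translation and normalization structure imposed on the dynamic risk measure. With this identification in hand the displayed one-step identity holds, the backward induction closes, and \eqref{eq:dynamic-risk} follows; the leftover bookkeeping (measurability of each nested argument in the appropriate $\Lpspace_{s}$, and pinning down the exact index range in \cref{def:time-consistency}) is routine.
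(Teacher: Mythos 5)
Your argument is essentially the proof of the cited result: the paper itself gives no proof of \cref{thm:recursive-relation}, deferring entirely to Theorem~1 of \cite{ruszczynski2010risk}, and your route --- backward induction on the one-step identity, with the locality step obtained by applying time-consistency in both directions to the tail $(\rvdum,0,\ldots,0)$ whose conditional risk equals $\rvdum$ by translation invariance and normalization --- is exactly that argument. The two wrinkles you flag are real but are imprecisions in the paper's own definitions rather than gaps in your reasoning: the agreement range in \cref{def:time-consistency} must indeed be read as $k=\timeidx_1,\ldots,\timeidx_2-1$ (as in Ruszczy\'nski's Definition~3), and the identification of $\riskmeas_{\timeidx,\eplength}(0,\rvdum,0,\ldots,0)$ with the one-step measure is handled in the source by effectively \emph{defining} $\riskmeas_{\timeidx}$ through the fixed-horizon family with trailing zeros, which is your second proposed resolution.
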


\new{As we are working with \MDP{}s, the \agentname{} observes at each \timename{} $\timeidx\in\periodspace$ the current state $\state_{\timeidx}$, that is $\Ff_{\timeidx}$-measurable.}
We thus define dynamic \emph{Markov} risk measures\new{, in a similar manner to \cite{ruszczynski2010risk},} where $\riskmeas_{\timeidx}$ are \new{Markov one-step conditional risk measures that are $\sigma(\state_{\timeidx})$-measurable}.
Those are \new{originally} obtained from \new{\emph{risk transition mappings} with respect to a \emph{controlled Markov process}, which our framework with \MDP{}s satisfies \citep[for a thorough exploration, see][especially Section 4]{ruszczynski2010risk}.}

\section{Problem Setup}
\label{sec:problem-setup}

In this section, we formally introduce the optimization problems that we face.
Briefly, we are interested in \RL{} problems where the \agentname{} wants to minimize a dynamic convex risk measure in order to obtain a time-consistent solution.

Let $(\statespace, \actionspace, \costfunc, \PP)$ be a \MDP{}, $\periodspace := \{0, \ldots, \eplength-1\}$ be the sequence of \timenames{} in an \episodename{}, and suppose that the \agentstratname{} $\policy$ is parametrized by some parameters $\policyparams \in \paramspace$.
We consider a time-consistent, Markov, dynamic convex risk measure $\{\riskmeas_{\timeidx,\eplength}\}_{\timeidx \in \periodspace}$\new{, i.e. the one-step conditional risk measures $\riskmeas_{\timeidx}$ are static convex risk measures, real-valued and lower semicontinuous}.
Using \cref{eq:dynamic-risk} from \cref{thm:recursive-relation}, we aim to solve the following $\eplength$-period risk-sensitive \RL{} problem:
\begin{equation}
	\min_{\policyparams} \, \riskmeas_{0,\eplength}(\rv^{\policyparams}) = \min_{\policyparams} \, 
	\riskmeas_{0} \Bigg(\costfunc^{\policyparams}_{0} +
	\riskmeas_{1} \bigg(\costfunc^{\policyparams}_{1} +
	\cdots +
	\riskmeas_{\eplength-2} \Big(\costfunc^{\policyparams}_{\eplength-2} +
	\riskmeas_{\eplength-1} \big(\costfunc^{\policyparams}_{\eplength-1})
	\big) \Big) \cdots \bigg) \Bigg), \label{eq:optim-problem1} \tag{P1}
\end{equation}
where $\costfunc^{\policyparams}_{\timeidx} = \costfunc(\state_{\timeidx}, \action^{\policyparams}_{\timeidx}, \state_{\timeidx+1}^{\policyparams})$ is a $\Ff_{\timeidx+1}$-measurable random cost and the trajectory may be modulated by the \stratname{} $\policy^{\policyparams}$ -- that is why we include a $\policyparams$ index to actions and states.
In the sequel, we may omit the superscript $\policyparams$ when obvious for readability purposes.

Let us denote transition probabilities by $\PP^{\policyparams}(\action,\statedum | \state_{\timeidx}=\state) := \PP(\statedum | \state, \action) \policy^{\policyparams}(\action | \state_{\timeidx} = \state)$ and $\weight$-weighted conditional expectations by $\EEweight_{\timeidx} [\rv] := \sum_{(\action,\statedum)} \weight(\action,\statedum) \PP^{\policyparams}(\action,\statedum | \state_{\timeidx}) \rv(\action,\statedum)$ for any $\timeidx \in \periodspace$.

\new{All one-step conditional risk measures $\riskmeas_{\timeidx}$ of the dynamic risk measure $\{\riskmeas_{\timeidx,\eplength}\}_{\timeidx \in \periodspace}$ satisfy the assumptions in \cref{thm:representation-thm}, and thus} using the dual representation from \cref{thm:representation-thm}, the problem in \cref{eq:optim-problem1} may be written equivalently as
\begin{equation}
\begin{split}
	\min_{\policyparams} \
	&\max_{\weight_{0} \in \riskenv(\PP^{\policyparams}(\cdot,\cdot | \state_0 = \state_0))} \Bigg\{
	\EE^{\weight_{0}}_{0} \Bigg[ \costfunc_0^{\policyparams}
	+ \max_{\weight_{1} \in \riskenv(\PP^{\policyparams}(\cdot,\cdot | \state_1 = \state_1))} \bigg\{
	\EE^{\weight_{1}}_{1} \bigg[ \costfunc_1^{\policyparams} + \qquad \cdots \\
	&\quad 
	+ \max_{\weight_{\eplength-1} \in \riskenv(\PP^{\policyparams}(\cdot,\cdot | \state_{\eplength-1} = \state_{\eplength-1}))} \Big\{
	\EE^{\weight_{\eplength-1}}_{\eplength-1} \Big[ \costfunc_{\eplength-1}^{\policyparams}
	\Big] - \dualriskmeas_{\eplength-1}(\weight_{\eplength-1}) \Big\} \cdots \bigg] - \dualriskmeas_{1}(\weight_{1})
	\bigg\} \Bigg] - \dualriskmeas_{0}(\weight_{0}) \Bigg\}.
\end{split} \label{eq:optim-problem2} \tag{P2}
\end{equation}
\begin{assumption}
    \label{assump:risk-envelope}
    We restrict to convex risk measures $\rho$ such that the risk envelope $\riskenv$ may be written as
    \begin{equation}
    	\riskenv(\PP^{\policyparams}(\cdot,\cdot | \state)) = \Bigg\{ 
    	\weight 
    	: \sum_{(\action,\statedum)} \weight(\action,\statedum) \PP^{\policyparams}(\action,\statedum | \state) = 1, \; \weight \geq 0, \;
    	g_e(\weight, \PP^{\policyparams}) = 0, \, \forall e \in \Ee, \; f_i(\weight, \PP^{\policyparams}) \leq 0, \, \forall i \in \Ii \Bigg\}, \label{eq:risk-envelope}
    \end{equation}
    where $g_e(\weight, \PP)$ are affine functions wrt $\weight$, $f_i(\weight, \PP)$ are convex functions wrt $\weight$, and $\Ee$ (resp. $\Ii$) denotes the finite set of equality (resp. inequality) constraints.
    Furthermore, for any given $\weight \in \{ \weight : \sum_{(\action,\statedum)} \weight(\action,\statedum) = 1, \ \weight \geq 0 \}$, $g_e(\weight, p)$ and $f_i(\weight, p)$ are twice differentiable in $p$, and there exists $M>0$ such that for all $(\action,\statedum) \in \actionspace \times \statespace$ we have
    $$\max \, \Bigg\{ \max_{i\in\Ii} \abs{\diff{p(\action,\statedum)}{f_i(\weight,p)}}, \, \max_{e\in\Ee} \abs{\diff{p(\action,\statedum)}{g_e(\weight,p)}} \Bigg\} \leq M.$$
\end{assumption}

As noted by \cite{tamar2016sequential}, ``all coherent risk measures we are aware of in the literature are already captured by [that] risk envelope''. Note, however, here we use convex (which subsumes coherent) risk measures, while still keeping the structure of the risk envelope induced by this observation for coherent risk measures. \new{This risk envelope still covers most cases of convex risk measures typically of interest in the literature such as entropic risk measures, with Shannon entropy as the convex penalty $\dualriskmeas$, and more generally expected utility risk measures \citep[see e.g.][]{rockafellar2013fundamental}. Hence, we view the restrictions of \cref{assump:risk-envelope} on the explicit form of $\riskenv$ as being not too restrictive.}

We now wish to derive \DP{} equations with a view of solving problems of the form \eqref{eq:optim-problem2}.
To this end, define the \emph{value function} $\valuefunc$ as the running risk-to-go
\begin{equation}
    \valuefunc_{\timeidx}(\state;\policyparams) :=
    \riskmeas_{\timeidx} \Bigg(\costfunc_{\timeidx}^{\policyparams} +
    \riskmeas_{\timeidx+1} \bigg(\costfunc_{\timeidx+1}^{\policyparams} +
    \dots +
    \riskmeas_{\eplength-1} \big(\costfunc_{\eplength-1}^{\policyparams} \big) 
    \bigg) \Biggm| \state_{\timeidx}=\state \Bigg),
    \label{eq:value-func}
\end{equation}
for all $\state \in \statespace$ and $\timeidx \in \periodspace$.
\new{Here, as is standard in the RL literature, the value function is to be understood as a tool to evaluate the quality of a given policy at any state of the \MDP{}. In our case, it} represents the risk at a certain time when being in a specific state and following the policy $\policy^{\policyparams}$.
The \DP{} equations (\DPE{}) for a specific \stratname{} $\policy^{\policyparams}$ are
\begin{subequations}
\begin{align}
	\valuefunc_{\eplength-1}(\state;\policyparams) &=
    \riskmeas_{\eplength-1} \Big(\costfunc_{\eplength-1}^{\policyparams} \Bigm|\state_{\eplength-1}=\state \Big),
    \qquad \text{and}
    \label{eq:value-func0-1} 
    \\
	\valuefunc_{\timeidx}(\state;\policyparams) &=
    \riskmeas_{\timeidx} \Big(\costfunc_{\timeidx}^{\policyparams} +
    \valuefunc_{\timeidx+1}(\state_{\timeidx+1}^{\policyparams};\policyparams)
    \Bigm|\state_{\timeidx}=\state \Big), \label{eq:value-func0-2}
\end{align}%
\end{subequations}%
for any $\state \in \statespace$ and $\timeidx \in \periodspace \setminus \{\eplength-1\}$.
\new{The \DPE{} allows us to recursively assess the full extant of the risk associated with a fixed policy $\policy_{\policyparams}$, and in particular the recursion can be seen to include the risk associated with the current (random) cost and the one-step ahead running risk-to-go, both of which depend explicitly on the next state.}
Using the dual representation in \cref{thm:representation-thm}, the \DPE{} in \cref{eq:value-func0-1,eq:value-func0-2} may be written as
\begin{subequations}
\begin{align}
	\valuefunc_{\eplength-1} (\state; \policyparams) &= \max_{\weight \in \riskenv(\PP^{\policyparams}(\cdot,\cdot | \state_{\eplength-1} = \state))} \Bigg\{ 
	\EEweight_{\eplength-1,s} \Big[
	\costfunc_{\eplength-1}^{\policyparams}
	\Big] - \dualriskmeas_{\eplength-1}(\weight) \Bigg\}, 
	\quad \text{and}
	\label{eq:value-func1} 
	\\
	\valuefunc_{\timeidx} (\state; \policyparams)
	&= \max_{\weight \in \riskenv(\PP^{\policyparams}(\cdot,\cdot | \state_{\timeidx} = \state))} \Bigg\{ \EEweight_{\timeidx,s} \Big[
	\costfunc_{\timeidx}^{\policyparams}
	+
	\valuefunc_{\timeidx+1}(\nextstates; \policyparams)
	\Big] - \dualriskmeas_{\timeidx}(\weight) \Bigg\}, \label{eq:value-func2}
\end{align}
\end{subequations}
where $\EEweight_{\timeidx,\state}[\cdot]$ denotes the conditional expectation  $\EEweight[\cdot\, | \state_\timeidx=\state]$.

We aim to optimize the value function $\valuefunc$ over \stratnames{} $\policy^{\policyparams}$ using a \emph{policy gradient approach} \citep{sutton2000policy}.
Policy gradient proposes to optimize by updating parameters of the \stratname{} using the update rule $\policyparams \leftarrow \policyparams + \eta \grad{\policyparams} \valuefunc (\cdot; \policyparams)$, which requires an estimation of the gradient.
In order to obtain the gradient of $\valuefunc$, we need an additional assumption on the transition probabilities, more specifically on the \stratname{}.
\begin{assumption}
    \label{assump:log-policy}
    We suppose the logarithm of transition probabilities $\log \PP^{\policyparams}(\action,\statedum | \state)$ is a differentiable function in $\policyparams$ when $\PP^{\policyparams}(\action,\statedum | \state) \neq 0$, and its gradient wrt $\policyparams$ is bounded for any $(\action, \state) \in \actionspace \times \statespace$.
\end{assumption}

\new{\cref{assump:log-policy} is written for completeness, as it is a common restriction with policy gradient methods \citep[see e.g.][]{sutton2000policy}. It ensures that the \agentname{} chooses a differentiable policy $\policy^{\policyparams}$ so that policy gradient can be applied. Indeed, the transition probabilities depend on the \stratname{}'s parameters $\policyparams$ only through the policy $\policy^{\policyparams}$ itself -- therefore, the gradient of the log-probability of a transition $\grad{\policyparams} \log \PP^{\policyparams}$ may be represented as
\begin{equation}
\begin{split}
    \grad{\policyparams} \log \PP^{\policyparams}(\action,\statedum|  \state_{\timeidx-1}=\state) &= \grad{\policyparams} \left( \log \PP(\statedum | \state, \action)  + \log \policy^{\policyparams}(\action | \state_{\timeidx} = \state) \right) \\
    &= \grad{\policyparams} \log \policy^{\policyparams}(\action | \state_{\timeidx} = \state).
\end{split} \label{eq:logprof-MDP}
\end{equation}}

\cref{thm:grad-valuefunc} provides the gradient formulae in our proposed methodology.
\begin{theorem}[Gradient of $\valuefunc$]
	\label{thm:grad-valuefunc}
	Let \cref{assump:risk-envelope,assump:log-policy} hold. For any state $\state \in \statespace$, the gradient of the value function at \timename{} $\eplength-1$ is then
    \begin{subequations}
        \begin{equation}
        \begin{split}
        \grad{\policyparams} \valuefunc_{\eplength-1} (\state; \policyparams) 
        &=\E^{\weight^{*}}_{\eplength-1} \Bigg[ \left(\costfunc(\state, \action_{\eplength-1}^{\policyparams}, \state_{\eplength}^{\policyparams}) - \lambda^{*} \right) \grad{\policyparams} \log \policy^{\policyparams}(\action_{\eplength-1}^{\policyparams} | \state_{\eplength-1} = \state) \Bigg]
        \\
        &\qquad
        -
        \Bigg. \grad{\policyparams} \dualriskmeas_{\eplength-1}(\weight^{*})
        -
        \sum_{e \in \Ee}  \Bigg( \lambda^{*,\Ee}(e)  \grad{\policyparams} g_e(\weight^{*}, \PP^{\policyparams}) \Bigg)
        -
        \sum_{i \in \Ii}  \Bigg( \lambda^{*,\Ii}(i)  \grad{\policyparams} f_i(\weight^{*}, \PP^{\policyparams}) \Bigg),
        \end{split} \label{eq:gradient-last-period}
        \end{equation}
        and the gradient of the value function at \timenames{} $\timeidx \in \periodspace \setminus \{\eplength-1\}$ 
        is
        \begin{equation}
        \begin{split}
        \grad{\policyparams} \valuefunc_{\timeidx} (\state; \policyparams) &= \E^{\weight^{*}}_{\timeidx} \Bigg[ \left(\costfunc(\state,\action_{\timeidx}^{\policyparams}, \state_{\timeidx+1}^{\policyparams}) + \valuefunc_{\timeidx+1}(\nextstates; \policyparams) - \lambda^{*} \right) \grad{\policyparams} \log \policy^{\policyparams}(\action_{\timeidx}^{\policyparams}|\state_{\timeidx} = \state)
        +
        \grad{\policyparams} \valuefunc_{\timeidx+1}(\nextstates; \policyparams) \Bigg]
        \\
        &\qquad
        -
        \Bigg. \grad{\policyparams} \dualriskmeas_{\timeidx}(\weight^{*})
        -
        \sum_{e \in \Ee}  \Bigg( \lambda^{*,\Ee}(e)  \grad{\policyparams} g_e(\weight^{*}, \PP^{\policyparams}) \Bigg)
        -
        \sum_{i \in \Ii}  \Bigg( \lambda^{*,\Ii}(i)  \grad{\policyparams} f_i(\weight^{*}, \PP^{\policyparams}) \Bigg),
        \end{split} \label{eq:gradient-other-period}
        \end{equation}
    \end{subequations}%
    where $(\weight^{*}, \lambda^{*}, \lambda^{*,\Ee}, \lambda^{*,\Ii})$ is any saddle-point of the Lagrangian function of \cref{eq:value-func1,eq:value-func2} respectively.
\end{theorem}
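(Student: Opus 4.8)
The plan is to view each step of the dynamic programming recursion \eqref{eq:value-func1}--\eqref{eq:value-func2} as a parametric constrained optimization over the distortion $\weight$, to differentiate through it with an envelope (Danskin-type) argument, and then to close the recursion by backward induction on $\timeidx$. First I would fix $(\state,\policyparams)$ and examine the inner maximization defining $\valuefunc_{\timeidx}$: the map $\weight\mapsto \EEweight_{\timeidx,\state}[\costfunc_{\timeidx}^{\policyparams} + \valuefunc_{\timeidx+1}] - \dualriskmeas_{\timeidx}(\weight)$ is concave, since $\EEweight_{\timeidx,\state}[\cdot]$ is linear in $\weight$ while $\dualriskmeas_{\timeidx}$ is convex (a supremum of affine maps by \eqref{eq:conjugate}), and under \cref{assump:risk-envelope} the risk envelope \eqref{eq:risk-envelope} is convex (affine normalization and equalities $g_e$, convex inequalities $f_i$, and nonnegativity). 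Strong duality then holds under a Slater condition, so a saddle point $(\weight^{*},\lambda^{*},\lambda^{*,\Ee},\lambda^{*,\Ii})$ of the Lagrangian exists, where
\begin{equation*}
\begin{aligned}
\lagrangian(\weight,\lambda,\lambda^{\Ee},\lambda^{\Ii};\policyparams) = {} & \EEweight_{\timeidx,\state}\big[\costfunc_{\timeidx}^{\policyparams} + \valuefunc_{\timeidx+1}(\statedum;\policyparams)\big] - \dualriskmeas_{\timeidx}(\weight) - \lambda\Big(\textstyle\sum_{(\action,\statedum)}\weight(\action,\statedum)\PP^{\policyparams}(\action,\statedum|\state) - 1\Big) \\
& - \sum_{e\in\Ee}\lambda^{\Ee}(e)\,g_e(\weight,\PP^{\policyparams}) - \sum_{i\in\Ii}\lambda^{\Ii}(i)\,f_i(\weight,\PP^{\policyparams}).
\end{aligned}
\end{equation*}

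For the base case $\timeidx=\eplength-1$ (with $\valuefunc_{\eplength}\equiv 0$), I would invoke the envelope theorem to write $\grad{\policyparams}\valuefunc_{\eplength-1}(\state;\policyparams)=\grad{\policyparams}\lagrangian$ evaluated at the saddle point, where the gradient acts only on the \emph{explicit} $\policyparams$-dependence (through $\PP^{\policyparams}$ and $\dualriskmeas$), holding $\weight^{*}$ and the multipliers fixed. I would then differentiate termwise using the log-derivative identity $\grad{\policyparams}\PP^{\policyparams}=\PP^{\policyparams}\,\grad{\policyparams}\log\PP^{\policyparams}$ together with \eqref{eq:logprof-MDP}, so that every $\grad{\policyparams}\log\PP^{\policyparams}$ collapses to $\grad{\policyparams}\log\policy^{\policyparams}$. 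The cost term yields $\EE^{\weight^{*}}_{\eplength-1}[\costfunc\,\grad{\policyparams}\log\policy^{\policyparams}]$ and the normalization constraint contributes $-\lambda^{*}\EE^{\weight^{*}}_{\eplength-1}[\grad{\policyparams}\log\policy^{\policyparams}]$ (using feasibility of $\weight^{*}$); combining them produces the single expectation $\EE^{\weight^{*}}_{\eplength-1}[(\costfunc-\lambda^{*})\grad{\policyparams}\log\policy^{\policyparams}]$, while the remaining terms give exactly $-\grad{\policyparams}\dualriskmeas_{\eplength-1}(\weight^{*})$ and the two constraint sums in \eqref{eq:gradient-last-period}.

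For $\timeidx<\eplength-1$ I would argue by backward induction, assuming $\valuefunc_{\timeidx+1}(\cdot;\policyparams)$ is differentiable. The only new feature relative to the base case is that the integrand now contains $\valuefunc_{\timeidx+1}(\statedum;\policyparams)$, which depends on $\policyparams$ in two ways: implicitly through the transition weight $\PP^{\policyparams}(\action,\statedum|\state)$, and explicitly through its parameter argument. The log-derivative part merges $\valuefunc_{\timeidx+1}$ into the bracketed factor $(\costfunc+\valuefunc_{\timeidx+1}-\lambda^{*})\grad{\policyparams}\log\policy^{\policyparams}$, whereas differentiating the explicit argument yields the additional summand $\EE^{\weight^{*}}_{\timeidx}[\grad{\policyparams}\valuefunc_{\timeidx+1}(\statedum;\policyparams)]$; together these reproduce \eqref{eq:gradient-other-period}, and differentiability of $\valuefunc_{\timeidx}$ then follows, closing the induction.

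The main obstacle is rigorously justifying the envelope step: that $\valuefunc_{\timeidx}(\state;\policyparams)$ is differentiable in $\policyparams$, and that $\grad{\policyparams}\lagrangian$ at the saddle point is independent of \emph{which} saddle point is chosen (so that ``any'' saddle point works, as stated). I would establish this from the KKT/saddle-point structure above combined with the smoothness and boundedness hypotheses: \cref{assump:log-policy} supplies a differentiable, bounded $\grad{\policyparams}\log\policy^{\policyparams}$, while \cref{assump:risk-envelope} provides twice-differentiable $g_e,f_i$ with derivatives bounded by $M$. Since $\actionspace\times\statespace$ enters only through finite sums here, these bounds let me interchange differentiation and summation and apply a parametric envelope theorem for concave programs, whose conclusion is precisely that the gradient of the value equals $\grad{\policyparams}\lagrangian$ at any optimal primal--dual pair. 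The delicate point I would handle carefully is the possible non-uniqueness of $\weight^{*}$ and of the multipliers: rather than relying on uniqueness of the maximizer, I would lean on the envelope theorem's invariance of $\grad{\policyparams}\lagrangian$ across saddle points.
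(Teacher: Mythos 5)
Your proposal is correct and follows essentially the same route as the paper's own proof: form the Lagrangian of the dual (maximization) representation at each \timename{}, invoke strong duality via Slater's condition and the Envelope theorem for saddle-point problems to equate $\grad{\policyparams}\valuefunc_{\timeidx}$ with $\grad{\policyparams}\lagrangian$ at any saddle point, apply the likelihood-ratio trick so that $\grad{\policyparams}\PP^{\policyparams}$ collapses to $\PP^{\policyparams}\grad{\policyparams}\log\policy^{\policyparams}$, and close the recursion by backward induction with the extra $\EE^{\weight^{*}}_{\timeidx}[\grad{\policyparams}\valuefunc_{\timeidx+1}]$ term arising from the explicit $\policyparams$-dependence of the continuation value. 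The subtleties you flag (non-uniqueness of saddle points, differentiability) are handled in the paper exactly as you propose, via the Milgrom--Segal envelope theorem under \cref{assump:risk-envelope,assump:log-policy}.
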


\begin{proof}[Proof of \cref{thm:grad-valuefunc}]
In order to have an expression for the gradient of the value function, we first compute the gradient of the last \timename{} and then obtain the recursive relation for subsequent \timenames{}.
Using \cref{thm:representation-thm} and \cref{assump:risk-envelope}, the Lagrangian of the maximization problem in \cref{eq:value-func1} (with $\timeidx = \eplength-1$) for any state $\state \in \statespace$ can be written as 
\begin{equation}
\begin{split}
	\lagrangian^{\policyparams}(\weight, \lambda,  \lambda^{\Ee},  \lambda^{\Ii})
	&=
	\sum_{(\action,\statedum)} \weight(\action,\statedum) \PP^{\policyparams}(\action,\statedum | \state_{\eplength-1} = \state) \costfunc_{\eplength-1}(\state,\action, \statedum) - \dualriskmeas_{\eplength-1}(\weight)
	\\&\qquad
	- \lambda \left( \sum_{(\action,\statedum)} \weight(\action,\statedum) \PP^{\policyparams}(\action,\statedum |  \state_{\eplength-1} = \state) - 1 \right)
	\\&\qquad
	- 
	\sum_{e \in \Ee}  \left( \lambda^{\Ee}(e)  g_e(\weight, \PP^{\policyparams}) \right)
	-
	\sum_{i \in \Ii}  \left( \lambda^{\Ii}(i)  f_i(\weight, \PP^{\policyparams}) \right).
\end{split} \label{eq:lagrangian-last-period}
\end{equation}

By the convexity of \cref{eq:value-func1} and \cref{assump:risk-envelope}, $\lagrangian^{\policyparams}$ in \cref{eq:lagrangian-last-period} has at least one saddle-point.
We emphasize here that the saddle-points $(\weight^{*}, \lambda^{*}, \lambda^{*,\Ee}, \lambda^{*,\Ii})$ depend on the state $\state$.
\new{Using Slater's condition and the convexity of the problem, strong duality holds, i.e.
\begin{equation}
    \valuefunc_{\eplength-1}(\state; \policyparams) = \max_{\weight \geq 0} \, \min_{\lambda, \lambda^{\Ee}, \lambda^{\Ii}} \, \lagrangian^{\policyparams}(\weight, \lambda,  \lambda^{\Ee},  \lambda^{\Ii}). \label{eq:strong-duality}
\end{equation}}

We \new{then} recall a what is known in the ML literature as the ``likelihood-ratio'' trick, which states that
\begin{equation}
    p(x; \policyparams) \; \grad{\policyparams} \log \left( p(x; \policyparams) \right) = \grad{\policyparams} \, p(x;\policyparams).
    \label{eq:likelihood-trick}
\end{equation}

Next, we apply the \emph{\new{Envelope} theorem for saddle-point problems} \citep[see Theorem 4 and Corollary 5 in][]{milgrom2002envelope} -- which relies on the equidifferentiability of the objective function and the absolute continuity of its gradient. These properties are satisfied as we work under \cref{assump:risk-envelope}. Under these assumptions, \new{we have
\begin{equation*}
    \max_{\weight \geq 0} \, \min_{\lambda, \lambda^{\Ee}, \lambda^{\Ii}} \, \lagrangian^{\policyparams}(\weight, \lambda,  \lambda^{\Ee},  \lambda^{\Ii}) = \max_{\weight \geq 0} \, \min_{\lambda, \lambda^{\Ee}, \lambda^{\Ii}} \, \lagrangian^{0}(\weight, \lambda,  \lambda^{\Ee},  \lambda^{\Ii}) + \int_0^{\policyparams} \grad{\policyparams} \lagrangian^{\policyparams}(\weight, \lambda,  \lambda^{\Ee},  \lambda^{\Ii})\Big|_{\policyparams=u} \, \dee u,
\end{equation*}
which implies that} the gradient \new{wrt $\policyparams$ of the objective function $\valuefunc_{\eplength-1}(\state; \policyparams)$} equals to the gradient \new{wrt $\policyparams$} of the Lagrangian evaluated at one of its saddle-points.

Using \cref{assump:log-policy}, the \new{Envelope} theorem, the Lagrangian in \cref{eq:lagrangian-last-period}\new{, the strong duality in \cref{eq:strong-duality}} and the ``likelihood-ratio'' trick in \cref{eq:likelihood-trick},
we obtain that
\begin{align}
\grad{\policyparams} \valuefunc_{\eplength-1} (\state; \policyparams) &= \grad{\policyparams} \max_{\weight \geq 0} \, \min_{\lambda, \lambda^{\Ee}, \lambda^{\Ii}} \, \lagrangian^{\policyparams}(\weight, \lambda,  \lambda^{\Ee},  \lambda^{\Ii}) \nonumber\\
&= \grad{\policyparams} \lagrangian^{\policyparams}(\weight, \lambda,  \lambda^{\Ee},  \lambda^{\Ii})
\Big|_{\weight^{*}, \lambda^{*}, \lambda^{*,\Ee}, \lambda^{*,\Ii}} \nonumber\\
&=
\sum_{(\action,\statedum)} \weight^{*}(\action,\statedum) \costfunc_{\eplength-1}(\state,\action, \statedum) \grad{\policyparams} \PP^{\policyparams}(\action,\statedum|  \state_{\eplength-1}=\state)
- \grad{\policyparams} \dualriskmeas_{\eplength-1}(\weight^{*})
\nonumber\\
&\qquad
- \sum_{(\action,\statedum)} \lambda^{*} \weight^{*}(\action,\statedum) \grad{\policyparams} \PP^{\policyparams}(\action,\statedum| \state_{\eplength-1}=\state) \nonumber\\
&\qquad
- \sum_{e \in \Ee}  \left( \lambda^{*,\Ee}(e)  \grad{\policyparams} g_e(\weight^{*}, \PP^{\policyparams}) \right)
- \sum_{i \in \Ii}  \left( \lambda^{*,\Ii}(i)  \grad{\policyparams} f_i(\weight^{*}, \PP^{\policyparams}) \right) \nonumber\\
\begin{split}
&=
\E^{\weight^{*}}_{\eplength-1} \Bigg[ \left(\costfunc_{\eplength-1}(\state, \action_{\eplength-1}^{\policyparams}, \state_{\eplength}^{\policyparams}) - \lambda^{*} \right) \grad{\policyparams} \log \policy^{\policyparams}(\action_{\eplength-1}^{\policyparams} | \state_{\eplength-1} = \state) \Bigg]
\\
&\qquad
-
\grad{\policyparams} \dualriskmeas_{\eplength-1}(\weight^{*})
-
\sum_{e \in \Ee}  \Bigg( \lambda^{*,\Ee}(e)  \grad{\policyparams} g_e(\weight^{*}, \PP^{\policyparams}) \Bigg)
-
\sum_{i \in \Ii}  \Bigg( \lambda^{*,\Ii}(i)  \grad{\policyparams} f_i(\weight^{*}, \PP^{\policyparams}) \Bigg).
\end{split}
\end{align}

The gradient for others \timenames{} is obtained in a similar manner.
The Lagrangian of the problem in \cref{eq:value-func2} (with $\timeidx=\eplength-2,\cdots,0$) is
\begin{equation}
\begin{split}
	\lagrangian^{\policyparams}(\weight, \lambda, \lambda^{\Ee}, \lambda^{\Ii})
	&= \sum_{(\action,\statedum)} \weight(\action,\statedum) \PP^{\policyparams}(\action,\statedum | \state_{\timeidx} = \state) \left( \costfunc_{\timeidx}(\state, \action, \statedum)
	+
	\valuefunc_{\timeidx+1}(\statedum; \policyparams)\right) - \dualriskmeas_{\timeidx}(\weight)
	\\&\qquad
	- \lambda \left( \sum_{(\action,\statedum)} \weight(\action,\statedum) \PP^{\policyparams}(\action,\statedum | \state_{\timeidx} = \state) - 1 \right)
	\\&\qquad
	- 
	\sum_{e \in \Ee}  \left( \lambda^{\Ee}(e)  g_e(\weight, \PP^{\policyparams}) \right)
	-
	\sum_{i \in \Ii}  \left( \lambda^{\Ii}(i)  f_i(\weight, \PP^{\policyparams}) \right).
\end{split} \label{eq:lagrangian-other-period}
\end{equation}
As the value function depends on the \stratname{} $\policy^{\policyparams}$, the gradient formula will have an additional term. We obtain
\begin{align}
\grad{\policyparams} \valuefunc_{\timeidx} (\state; \policyparams) &= \grad{\policyparams} \lagrangian^{\policyparams}(\weight, \lambda,  \lambda^{\Ee},  \lambda^{\Ii})
\Big|_{\weight^{*}, \lambda^{*}, \lambda^{*,\Ee}, \lambda^{*,\Ii}} \nonumber 
\\
&=
\sum_{(\action,\statedum)} \weight^{*}(\action,\statedum) \PP^{\policyparams}(\action,\statedum | \state_{\timeidx} = \state) \grad{\policyparams} \valuefunc_{\timeidx+1} (\statedum; \policyparams)
- \sum_{(\action,\statedum)} \lambda^{*} \weight^{*}(\action,\statedum) \grad{\policyparams} \PP^{\policyparams}(\action,\statedum | \state_{\timeidx} = \state)
\nonumber\\
&\qquad
+ \sum_{(\action,\statedum)} \weight^{*}(\action,\statedum) \left( \costfunc_{\timeidx}(\state, \action, \statedum)
+ \valuefunc_{\timeidx+1}(\statedum; \policyparams) \right)
\grad{\policyparams} \PP^{\policyparams}(\action,\statedum | \state_{\timeidx} = \state)
\nonumber\\
&\qquad
- \grad{\policyparams} \dualriskmeas_{\timeidx}(\weight^{*})
- \sum_{e \in \Ee}  \left( \lambda^{*,\Ee}(e)  \grad{\policyparams} g_e(\weight^{*}, \PP^{\policyparams}) \right)
- \sum_{i \in \Ii}  \left( \lambda^{*,\Ii}(i)  \grad{\policyparams} f_i(\weight^{*}, \PP^{\policyparams}) \right) \nonumber\\
\begin{split}
&=
\E^{\weight^{*}}_{\timeidx} \Bigg[ \left(\costfunc_{\timeidx}(\state,\action_{\timeidx}^{\policyparams}, \state_{\timeidx+1}^{\policyparams}) + \valuefunc_{\timeidx+1}(\nextstates; \policyparams) - \lambda^{*} \right) \grad{\policyparams} \log \policy^{\policyparams}(\action_{\timeidx}^{\policyparams} |\state_{\timeidx} = \state) 
+
\grad{\policyparams} \valuefunc_{\timeidx+1}(\nextstates; \policyparams) \Bigg]
\\
&\qquad
-
\Bigg. \grad{\policyparams} \dualriskmeas_{\timeidx}(\weight^{*})
-
\sum_{e \in \Ee}  \Bigg( \lambda^{*,\Ee}(e)  \grad{\policyparams} g_e(\weight^{*}, \PP^{\policyparams}) \Bigg)
-
\sum_{i \in \Ii}  \Bigg( \lambda^{*,\Ii}(i)  \grad{\policyparams} f_i(\weight^{*}, \PP^{\policyparams}) \Bigg).
\end{split}
\end{align}

This concludes the proof.
\end{proof}

While the term $\dualriskmeas_{\timeidx}(\weight^{*})$ appears at first not to depend on the policy, and therefore the term $\grad{\policyparams}\dualriskmeas_{\timeidx}(\weight^{*})$ in \cref{thm:grad-valuefunc} appears to vanish, this is not necessarily so. To see why, let us consider convex penalties of the form $\dualriskmeas_{\timeidx}(\weight) = \EEweight_{\timeidx}[f_{\timeidx}(\weight)]$ for convex functions $f_{\timeidx}:\Lqspace \rightarrow \Reals$. In this case, using the Envelope theorem \citep{milgrom2002envelope} and the ``likelihood-ratio'' trick -- the derivation is similar to the proof of \cref{thm:grad-valuefunc} -- the gradient wrt the policy is given by
\begin{equation*}
    \grad{\policyparams} \dualriskmeas_{\timeidx}(\weight^{*}) = \E^{\weight^{*}}_{\timeidx} [f_{\timeidx}(\weight^{*}) \grad{\policyparams} \log \policy^{\policyparams}(\action_{\timeidx}^{\policyparams}|\state_{\timeidx} = \state)].
\end{equation*}
While we do not restrict to the above specific form for $\dualriskmeas_{\timeidx}$, this result illustrates why, in general, this contribution to the gradient cannot be ignored.


\section{Actor-Critic Algorithm}
\label{sec:algorithm}

 \begin{wrapfigure}[17]{r}{0.5\textwidth}
\begin{algorithm}[H]
	\caption{Main steps}
	\label{algo:main-steps}
	\KwIn{\new{\ANN{}s for} value function $\valuefunc^{\valueparams}$ and \stratname{} $\policy^{\policyparams}$, \new{number of \episodenames{} $\Ntrajectories$, \transitionnames{} $\Ntransitions$, epochs $\NepochsPI$, mini-batch size $\NbatchsPI$}}
	\new{Set initial guesses for $\valuefunc^{\valueparams}$, $\policy^{\policyparams}$\;} 
	Initialize environment and optimizers\;
	\For{each epoch $\epochidx = 1, \ldots, \Nepochs$}{
		Simulate trajectories under $\policy^{\policyparams}$\;
		Freeze $\tilde{\policy} = \policy^{\policyparams}$\;
		\nosemic \emph{Critic}: Estimate $\valuefunc^{\valueparams}$ using $\tilde{\policy}$\;
		\pushline\dosemic\nonl \new{\cref{algo:estimate-value}$(\valuefunc^{\valueparams}, \, \tilde{\policy}, \, \Ntrajectories, \, \Ntransitions, \, \NepochsV, \, \NbatchsV)$}\;
		\popline Freeze $\tilde{\valuefunc} = \valuefunc^{\valueparams}$\;
		\nosemic \emph{Actor}: Update $\policy^{\policyparams}$ using $\tilde{\valuefunc}$\; \pushline\dosemic\nonl \new{\cref{algo:update-pi}$(\tilde{\valuefunc}, \, \policy^{\policyparams}, \, \Ntrajectories, \, \Ntransitions, \, \NepochsV, \, \NbatchsV)$}\;
		\popline Store results\;
	}
	\KwOut{Optimal
	$\policy^{\policyparams} \approx \policy^{\policyparams^{*}}$ and $\valuefunc^{\valueparams} \approx \valuefunc(\state; \policyparams^{*})$}
\end{algorithm}
\end{wrapfigure}
In this section, we provide details on our proposed algorithm and the architecture of the implemented objects.
Our policy gradient algorithm has an actor-critic style \citep{konda2000actor}, in the sense that we must learn two functions, and we do so in an alternating fashion: (i) a value function $\valuefunc$, which plays the role of the \emph{critic}; and (ii) the \stratname{} $\policy$, which plays the role of the \emph{actor}.
Actor-critic algorithms are on-policy policy search methods that maintain an estimate of a value function, which is then used to update the \agentname{}'s \stratname{} parameters.
Such algorithms have been developed in the \RL{} community for their ability to find optimal policies using low variance gradient estimates \citep{grondman2012survey}.
To obtain an approximation of the optimal \stratname{}, we perform the steps described in \cref{algo:main-steps}.

We propose to use function approximations, more specifically \emph{neural network structures}, as they are useful when dealing with continuous state-action spaces and are known to be universal approximators.
In recent years, deep neural network modeling has shown remarkable success in approximating complex functions \citep[see e.g.][]{lecun2015deep,silver2016mastering,goodfellow2016deep}, especially in financial mathematics \citep[see e.g.][]{al2018solving,hu2019deep,casgrain2019deep,cuchiero2020generative,horvath2021deep,campbell2021deep,carmona2021deep,ning2021arbitrage, hambly2021policy}.
The use of compositions of simple functions (usually referred to as propagation and activation functions) through several layers approximates complicated functions to arbitrary accuracy (with arbitrarily large structures). Neural networks also avoid the curse of dimensionality issue of representing nonlinear functions in high dimensions.

There are several approaches for modeling the \stratname{} and value function with neural network structures.
We consider a \stratname{} $\policy$ characterized by a single (fully-connected, multi-layered feed forward) artificial neural network (\ANN{}) with parameters $\policyparams$, which takes a state $\state$ and time $\timeidx$ as inputs and outputs a distribution over the space of actions $\actionspace$.
We also suppose that the value function $\valuefunc$ is characterized by another single (fully-connected, multi-layered feed forward) \ANN{} with parameters $\valueparams$.
$\valuefunc^{\valueparams}_{\timeidx}(\state; \policyparams)$ approximates the value function when the system is in state $\state$ during the \timename{} $\timeidx$ under \stratname{} $\policy^{\policyparams}$, previously denoted $\valuefunc_{\timeidx}(\state; \policyparams)$ in \cref{eq:value-func1,eq:value-func2}.
We therefore refer to the \stratname{} and value function respectively by $\policy^{\policyparams}$ and $\valuefunc^{\valueparams}$.

\begin{wrapfigure}{r}{0.4\textwidth}
    \vspace{-11pt}
    \centering
    \includegraphics[width=0.4\textwidth]{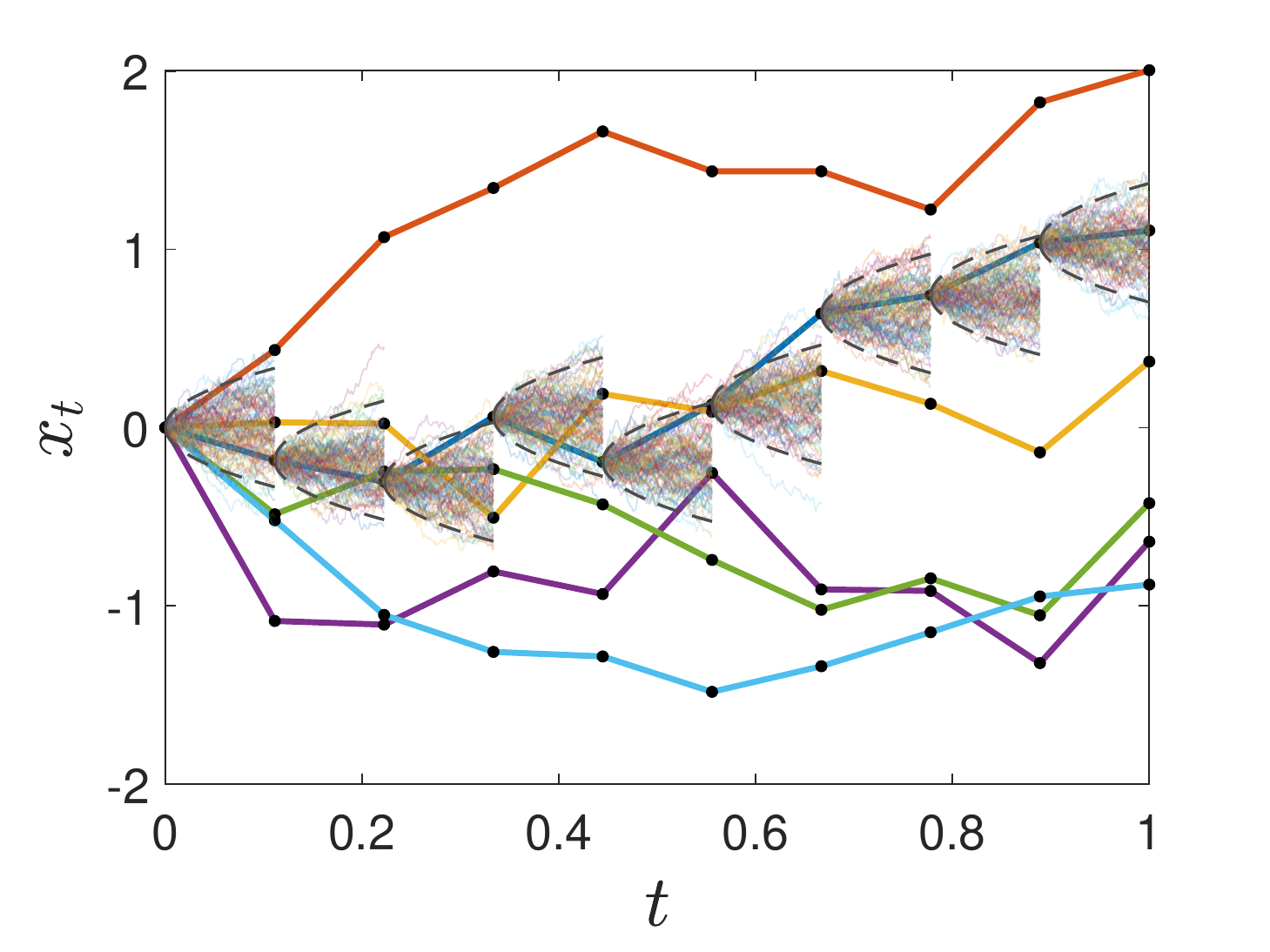}
    \caption{Representation of (outer) \episodenames{} and (inner) \transitionnames{} for the simulation upon simulation framework.}
    \label{fig:sim-upon-sim}
\vspace*{-2em}
\end{wrapfigure}
Our proposed algorithm uses a \emph{nested simulation} or \emph{simulation upon simulation} approach, where we simulate \transitionnames{} at each visited state.
This simulation upon simulation approach results in full (outer) \episodenames{} and batches of (inner) \transitionnames{} for every state, as illustrated in \cref{fig:sim-upon-sim}.
This allows us to easily compute and estimate quantities of interest for each state, e.g. saddle-points or one-step conditional risk measures.
Such nested simulation approaches are computationally expensive -- part of our future work aims to develop a more efficient algorithm when simulations are costly.

We next provide additional details on our actor-critic algorithm in \cref{algo:main-steps}, i.e. how to (i) estimate the value function in \cref{ssec:estimate-value} (step 5), and (ii) update the \stratname{} in \cref{ssec:update-pi} (step 7).
Additional implementation details are provided in \cref{sec:implementation}.


\subsection{Value Function Estimation}
\label{ssec:estimate-value}

The value function 
$\valuefunc^\phi$
may be estimated by using the recursion in \cref{eq:value-func1,eq:value-func2}.
We use the simulation upon simulation approach mentioned previously.
More precisely, when sampling a certain state $\state_{\timeidx}$, we also generate $\Ntransitions$ (inner) \transitionnames{} from the \stratname{} $\policy^{\policyparams}$ to obtain the tuples $(\state_{\timeidx}, \action_{\timeidx}^{(\transitionidx)}, \state_{\timeidx+1}^{(\transitionidx)}, \costfunc_{\timeidx}^{(\transitionidx)})$, $\transitionidx=1,\ldots,\Ntransitions$.
We can then estimate the one-step conditional risk measures for any given state using those additional \transitionnames{}.
To update the value function, we perform this simulation process for a mini-batch of states, compute the predicted (i.e. output of $\valuefunc^{\valueparams}$) and target values (i.e. \cref{eq:value-func1,eq:value-func2}), and calculate the expected square loss between these values.
We update the parameters using the Adam optimizer \citep{kingma2014adam} and repeat this process for several epochs in order to provide a good approximation of the value function.
We recall that the policy $\policy^{\policyparams}$ is fixed while we optimize $\valuefunc^{\valueparams}$.
The algorithm is provided in \cref{algo:estimate-value}.
\begin{algorithm}[htbp]
	\caption{Estimation of the value function $\valuefunc$}
	\label{algo:estimate-value}
	\KwIn{\new{Value function} $\valuefunc^{\valueparams}$, \new{policy} $\policy^{\policyparams}$,
	\new{number of \episodenames{} $\Ntrajectories$, \transitionnames{} $\Ntransitions$, epochs $\NepochsV$, batch size $\NbatchsV$}}
	\For{each epoch $\epochVidx = 1, \ldots, \NepochsV$}{
		Set the gradients \new{of $\valuefunc^{\valueparams}$} to zero\;
		Sample $\NbatchsV$ states $\state^{(\batchVidx)}_{\timeidx}, \ \batchVidx=1,\ldots,\NbatchsV, \, \timeidx \in \periodspace$\;
		Obtain from $\policy^{\policyparams}$ the associated \transitionnames{} $(\action^{(\batchVidx,\transitionidx)}_{\timeidx},  \state^{(\batchVidx,\transitionidx)}_{\timeidx+1}, c_{\timeidx}^{(\batchVidx,\transitionidx)}), \ \transitionidx=1,\ldots,\Ntransitions$\;
		\For{each state $\batchVidx=1,\ldots,\NbatchsV$, $\timeidx\in \periodspace$}{
			Compute the \emph{predicted values} $\hat{v}^{\batchVidx}_{\timeidx} = \valuefunc^{\valueparams}_{\timeidx}(\state^{(\batchVidx)}_{\timeidx}; \policyparams)$\;
			\uIf{$\timeidx = \eplength-1$}{
				Set the \emph{target value} as 
				$$v^{\batchVidx}_{\eplength-1} =
				\max_{\weight \in \riskenv(\PP^{\policyparams}(\cdot,\cdot | \state_{\eplength-1} = \state^{(\batchVidx)}_{\eplength-1}))} \left\{ \EEweight_{\eplength-1,\state_{\eplength-1}^{(\batchVidx)}} \left[ \costfunc_{\eplength-1}^{(\batchVidx,\transitionidx)} \right] + \dualriskmeas_{\eplength-1}(\weight) \right\};$$
				\\
			}
			\uElse{
				Set the \emph{target value} as
				$$v^{\batchVidx}_{\timeidx} =
				\max_{\weight \in \riskenv(\PP^{\policyparams}(\cdot,\cdot | \state_{\timeidx} = \state^{(\batchVidx)}_{\timeidx}))} \left\{ \EEweight_{\timeidx,\state_{\timeidx}^{(\batchVidx)}} \left[ \costfunc_{\timeidx}^{(\batchVidx,\transitionidx)} + \valuefunc^{\valueparams}_{\timeidx+1}(\state^{(\batchVidx,\transitionidx)}_{\timeidx+1}; \policyparams) \right] + \dualriskmeas_{\timeidx}(\weight) \right\};$$
				\\				
			}
		}
		Compute the expected square loss between $v^{\batchVidx}_{\timeidx}$ and $\hat{v}^{\batchVidx}_{\timeidx}$\;
		Update $\valueparams$ by performing an Adam optimizer step\;
	}
	\KwOut{An estimate of the value function $\valuefunc^{\valueparams}_{\timeidx}(\state; \policyparams) \approx \valuefunc_{\timeidx}(\state; \policyparams)$}
\end{algorithm}

A powerful result for neural networks structures is the \emph{universal approximation theorem} -- see e.g. \cite{cybenko1989approximation,hornik1991approximation,leshno1993multilayer,pinkus1999approximation}.
\begin{theorem}[Universal Approximation \citep{cybenko1989approximation}]
    \label{thm:universal-approx}
    Let $d_1, d_2\in \Nats$ and $\sigma$ be an activation function.
    Then $\sigma$ is not a polynomial iff for any continuous function $f:\Reals^{d_1} \rightarrow \Reals^{d_2}$, any compact subset $K \subset \Reals^{d_1}$ and any $\epsilon > 0$, there exists a neural network $\hat{f}_{\epsilon}:\Reals^{d_1} \rightarrow \Reals^{d_2}$ with representation $\hat{f}_{\epsilon} = W_2 \circ \sigma \circ W_1$ such that $\sup_{x \in K} \norm[0]{ f(x) - \hat{f}_{\epsilon}(x) } < \epsilon$. 
\end{theorem}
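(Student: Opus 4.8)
The plan is to prove the two implications separately, dispatching the necessity direction quickly and concentrating on the sufficiency direction, which carries the real content. For necessity I argue by contraposition: if $\sigma$ is a polynomial of degree $k$, then for every weight $w \in \Reals^{d_1}$ and bias $b$ the map $x \mapsto \sigma(w\cdot x + b)$ is a polynomial in $x$ of degree at most $k$. Since $\hat f_{\epsilon} = W_2 \circ \sigma \circ W_1$ is a finite linear combination of such maps, every realizable output lies in the (vector-valued) polynomials of degree at most $k$. Whenever $K$ is infinite these form a finite-dimensional, hence closed, proper subspace of $C(K)$, which cannot be dense; so density forces $\sigma$ to be non-polynomial.

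For sufficiency I first reduce both dimensions. The output dimension collapses to $d_2 = 1$: I approximate each of the $d_2$ scalar components of $f$ separately, then concatenate the hidden units into a single $W_1$ and assemble a block-diagonal $W_2$ that selects the appropriate units per coordinate. The input dimension reduces to $d_1 = 1$ through ridge functions: the span of $\{(w\cdot x)^k : w \in \Reals^{d_1}\}$ equals the homogeneous polynomials of degree $k$, so the family of univariate profiles composed with linear forms $x\mapsto w\cdot x$ contains every polynomial in $x$, and these are dense in $C(K)$ by Stone--Weierstrass. It therefore suffices to show that $M := \overline{\linspan\{t \mapsto \sigma(at+b) : a,b \in \Reals\}}$, closed in the topology of locally uniform convergence, is all of $C(\Reals)$.

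The heart of the argument is this univariate density. Assuming first $\sigma \in C^\infty$, I note that $M$ is closed and that difference quotients in the slope parameter converge locally uniformly, so $\frac{d^k}{da^k}\sigma(at+b) = t^k\,\sigma^{(k)}(at+b) \in M$; setting $a = 0$ gives $t^k\,\sigma^{(k)}(b) \in M$ for every $k$ and $b$. Because $\sigma$ is not a polynomial, for each $k$ there is some $b_k$ with $\sigma^{(k)}(b_k) \neq 0$ (otherwise $\sigma^{(k)} \equiv 0$ would make $\sigma$ a polynomial of degree below $k$), so every monomial $t^k$ lies in $M$; hence $M$ contains all polynomials and equals $C(\Reals)$. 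To remove smoothness I mollify: for a smooth compactly supported $\phi$ the convolution $\sigma * \phi$ is $C^\infty$, and $(\sigma*\phi)(at+b)$ is a locally uniform limit of Riemann sums of shifts $t \mapsto \sigma(at+b')$, hence lies in $M$. A standard lemma shows that a non-polynomial $\sigma$ admits some mollifier $\phi$ for which $\sigma*\phi$ is still non-polynomial, and applying the smooth case to $\sigma*\phi$ yields density of $M$.

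The main obstacle I anticipate is precisely the non-smooth case of the sufficiency direction: one must verify carefully that the two operations used to enlarge $M$ --- passing to slope-derivatives as limits of difference quotients, and to convolutions as limits of Riemann sums --- genuinely stay inside the \emph{closed} span $M$ under uniform convergence on the compact $K$, and one must establish the lemma that a non-polynomial continuous function has a non-polynomial mollification (equivalently, that $\sigma*\phi$ polynomial for every $\phi$ would force $\sigma$ itself to be polynomial). Once this univariate core is secured, the dimension reductions and the final Weierstrass step are routine.
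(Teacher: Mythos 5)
The paper does not prove this theorem at all --- it is imported as a known result with citations (Cybenko, Hornik, Leshno et al., Pinkus), so there is no in-paper proof to compare against. What you have written is, in outline, the standard proof of the Leshno--Lin--Pinkus--Schocken characterization (the ``non-polynomial iff dense'' form stated here is really their theorem rather than Cybenko's), essentially as presented in Pinkus's 1999 survey: necessity via the finite-dimensional, hence closed and non-dense, space of polynomials of degree at most $\deg\sigma$; reduction of the output dimension by componentwise approximation; reduction of the input dimension via ridge functions and the fact that $\linspan\{(w\cdot x)^k : w\}$ exhausts the homogeneous polynomials of degree $k$, followed by Stone--Weierstrass; and the univariate core in which differentiating $\sigma(at+b)$ in the slope parameter $a$ at $a=0$ places every monomial $t^k\sigma^{(k)}(b)$ in the closed span, with mollification handling non-smooth $\sigma$. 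The two places you flag as needing care are indeed the only nontrivial gaps: that difference quotients and Riemann sums of the convolution converge uniformly on compacta (both follow from uniform continuity of $\sigma$, resp.\ $\sigma'$, on compact sets, so the limits stay in the closed span), and the lemma that if $\sigma * \phi$ is a polynomial for every test function $\phi$ then $\sigma$ is itself a polynomial (a Baire-category argument bounding the degree uniformly). With those lemmas filled in, the argument is complete and correct; it is a legitimate, self-contained proof of a statement the paper merely cites.
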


We prove next that for a fixed \stratname{} $\policy^{\policyparams}$, we can approximate its corresponding value function $\valuefunc_{\timeidx}(\state; \policyparams)$ with an \ANN{} using the procedure devised in \cref{algo:estimate-value}.
\cref{thm:approx-valuefunc} follows from the universal approximation theorem.
\begin{theorem}[Approximation of $\valuefunc$]
    \label{thm:approx-valuefunc}
    Let $\policy^{\policyparams}$ denote a fixed \stratname{}, with corresponding value function as defined in \cref{eq:value-func}, which we denote $\valuefunc_{\timeidx}(\state; \policyparams)$.
    Then for any $\epsilon^{*} > 0$, there exists an \new{\ANN{}} $\valuefunc_{\timeidx}^{\valueparams}: \statespace \rightarrow \Reals$ such that $\esssup_{\state \in \statespace} \norm[0]{ \valuefunc_{\timeidx}(\state; \policyparams) - \valuefunc_{\timeidx}^{\valueparams}(\state; \policyparams) } < \epsilon^{*}$, for any $\timeidx \in \periodspace$.
\end{theorem}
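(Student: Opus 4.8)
The plan is to prove, by backward induction on $\timeidx$, that for the fixed \stratname{} $\policy^{\policyparams}$ the \emph{true} value function $\valuefunc_{\timeidx}(\cdot; \policyparams)$ is a bounded, continuous function of the state, and then to invoke the universal approximation theorem (\cref{thm:universal-approx}) on a compact set that captures the essentially reachable states. Note that the statement approximates the genuine $\valuefunc_{\timeidx}$, which is itself defined through the genuine $\valuefunc_{\timeidx+1}$ via \cref{eq:value-func2}; hence the induction carries no approximation error and serves only to establish the regularity needed to apply \cref{thm:universal-approx}.

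The regularity I would lean on is that the cost $\costfunc(\state,\action,\statedum)$ is bounded and continuous and that the controlled kernel $\state \mapsto \PP^{\policyparams}(\action,\statedum \mid \state)$ (equivalently $\policy^{\policyparams}(\cdot\mid\state)$ together with the transition law) depends continuously on $\state$. The key analytic fact is that each one-step measure $\riskmeas_{\timeidx}$, being a real-valued convex risk measure, is in particular monotone and translation invariant, which yields the Lipschitz estimate $\abs{\riskmeas_{\timeidx}(\rv) - \riskmeas_{\timeidx}(\rvdum)} \leq \norm[0]{\rv - \rvdum}_{\infty}$; thus $\riskmeas_{\timeidx}$ is continuous in its argument. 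For the base case, \cref{eq:value-func1} gives $\valuefunc_{\eplength-1}(\state;\policyparams) = \riskmeas_{\eplength-1}(\costfunc_{\eplength-1}^{\policyparams}\mid\state_{\eplength-1}=\state)$: as $\state$ varies, the one-step conditional law and the bounded continuous integrand $\costfunc_{\eplength-1}$ vary continuously, so the conditioned random variable varies continuously in $\norm[0]{\cdot}_{\infty}$, and composing with the Lipschitz $\riskmeas_{\eplength-1}$ gives continuity of $\valuefunc_{\eplength-1}(\cdot;\policyparams)$ (boundedness being inherited from $\costfunc$). For the inductive step I would assume $\valuefunc_{\timeidx+1}(\cdot;\policyparams)$ bounded and continuous; then in \cref{eq:value-func2} the argument $\costfunc_{\timeidx}^{\policyparams} + \valuefunc_{\timeidx+1}(\nextstates;\policyparams)$ is a bounded continuous function of $(\state,\action,\statedum)$, and the same continuity-of-conditioning together with the Lipschitz estimate shows $\valuefunc_{\timeidx}(\cdot;\policyparams)$ is bounded and continuous, closing the induction. (Alternatively, one can obtain continuity from the dual form by applying Berge's maximum theorem to the supremum in \cref{eq:value-func1,eq:value-func2}, using that the risk envelope varies continuously with the kernel.)

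It then remains to pass from the supremum over a compact set provided by \cref{thm:universal-approx} to the $\esssup$ over $\statespace$ in the statement. I would fix a compact set $K \subset \statespace$ containing, up to a null set, the essential support of the states visited under $\policy^{\policyparams}$, so that $\esssup_{\state\in\statespace}$ coincides with $\sup_{\state\in K}$. Applying \cref{thm:universal-approx} to the continuous map $\valuefunc_{\timeidx}(\cdot;\policyparams):K\to\Reals$ with a non-polynomial activation and tolerance $\epsilon^{*}$ produces an \ANN{} $\valuefunc_{\timeidx}^{\valueparams}$ with $\sup_{\state\in K}\norm[0]{\valuefunc_{\timeidx}(\state;\policyparams)-\valuefunc_{\timeidx}^{\valueparams}(\state;\policyparams)}<\epsilon^{*}$, which is exactly the claimed $\esssup$ bound for each $\timeidx\in\periodspace$.

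The main obstacle is the continuity of $\valuefunc_{\timeidx}(\cdot;\policyparams)$ in the conditioning state: because the risk measure acts on a conditional object that depends on $\state$ simultaneously through the cost and through the transition kernel, one must argue that this conditional random variable varies continuously and then transport that continuity through the nonlinear, non-smooth risk measure — which is precisely where the monetary Lipschitz estimate (or Berge's theorem on the dual representation) and the assumed continuity of the kernel and cost are essential. A secondary, more benign point is the reduction from $\esssup$ over a possibly unbounded $\statespace$ to a supremum over a compact set, which requires the reachable states to be essentially confined to a compact region.
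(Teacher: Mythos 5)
Your argument is correct in outline and takes a genuinely different route from the paper's. Both proofs hinge on the same key fact --- monotonicity plus translation invariance give $\abs{\riskmeas_{\timeidx}(\rv)-\riskmeas_{\timeidx}(\rvdum)}\le\norm[0]{\rv-\rvdum}_{\infty}$, which is exactly the paper's \cref{lemma:continuity} --- but you deploy it to establish regularity of the \emph{true} value function in the state variable, whereas the paper deploys it to transport the approximation error of the next-step network through $\riskmeas_{\timeidx}$. Concretely, the paper's backward induction approximates $\riskmeas_{\timeidx}\big(\costfunc_{\timeidx}+\valuefunc^{\valueparams_{\timeidx+1}}_{\timeidx+1}(\cdot)\big)$, i.e.\ the target built from the \emph{approximate} continuation value, and accumulates errors $\epsilon_{\timeidx}+\epsilon_{\timeidx+1}+\cdots$, which mirrors what \cref{algo:estimate-value} actually computes; your induction carries no approximation error and only shows that $\valuefunc_{\timeidx}(\cdot;\policyparams)$ is bounded and continuous so that \cref{thm:universal-approx} can be applied once per \timename{} directly to the true function. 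As a pure existence proof yours is cleaner, and it is more careful on a point the paper glosses over: \cref{lemma:continuity} gives continuity of $\riskmeas_{\timeidx}$ as an operator on random variables, not continuity of $\state\mapsto\valuefunc_{\timeidx}(\state;\policyparams)$, and bridging the two requires continuity of $\state\mapsto\PP^{\policyparams}(\cdot,\cdot\,|\,\state)$ and of the cost --- hypotheses that neither the theorem statement nor \cref{assump:risk-envelope,assump:log-policy} supply, so you should state them explicitly (the paper implicitly needs them as well when it invokes \cref{thm:universal-approx} on $\valuefunc_{\eplength-1}$). The one step you omit is the paper's \cref{lemma:ensemble}: the statement quantifies ``there exists an \ANN{} \ldots for any $\timeidx\in\periodspace$,'' which the paper reads as a \emph{single} network with time embedded in the state, obtained by interpolating the finite family $\{\valuefunc^{\valueparams_{\timeidx}}_{\timeidx}\}_{\timeidx\in\periodspace}$ in $\timeidx$ and applying \cref{thm:universal-approx} once more. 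Your construction produces one network per \timename{}; if the single-network reading is intended, you need this additional (easy) aggregation step.
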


\begin{proof}[Proof of \cref{thm:approx-valuefunc}]
First, we prove a lemma that states convex risk measures are \new{absolutely} continuous since they are in fact monetary.
\new{In what follows, the norm is to be understood as the $\infty$-norm.}
\begin{lemma}
    \label{lemma:continuity}
    \new{Monetary} one-step conditional risk measures $\riskmeas_{\timeidx}$ are absolutely continuous a.s..
\end{lemma}
\begin{proof}[Proof of \cref{lemma:continuity}]
    Indeed, starting from the inequality $\rv \leq \rvdum + \norm{ \rv - \rvdum }$, where $\rv,\rvdum \in \Lpspace_{\timeidx+1}$, and using the monotonicity and translation invariance properties, we have
    \begin{equation}
        \riskmeas_{\timeidx}(\rv) \leq \riskmeas_{\timeidx}(\rvdum + \norm{\rv-\rvdum}) \implies \riskmeas_{\timeidx}(\rv) - \riskmeas_{\timeidx}(\rvdum) \leq \norm{\rv - \rvdum}.
    \end{equation}
    Repeating this with $\rvdum \leq \rv + \norm{\rv-\rvdum}$ yields to 
    \begin{equation}
        \esssup \norm{\riskmeas_{\timeidx}(\rv) - \riskmeas_{\timeidx}(\rvdum)} \, \leq \norm{\rv-\rvdum}.
    \end{equation}
    Therefore convex risk measures are Lipschitz continuous a.s. wrt the essential supremum norm, and hence they are absolutely continuous a.s..
\end{proof}
Next, recall that the value function given in \cref{eq:value-func} is a dynamic convex risk measure, and therefore may be written recursively with the \DPE{} in \cref{eq:value-func0-1,eq:value-func0-2} as
\begin{equation}
    \valuefunc_{\timeidx}(\state;\policyparams) =
    \riskmeas_{\timeidx} \Big(\costfunc_{\timeidx}^{\policyparams} +
    \valuefunc_{\timeidx+1}(\state_{\timeidx+1}^{\policyparams};\policyparams)
    \Bigm|\state_{\timeidx}=\state \Big).
\end{equation}

\new{We prove that the \ANN{} $\valueparams$ approximate the value function by induction.}
Without loss of generality, let us consider the case where the first dimension of the state space $\statespace$ corresponds to the time $\timeidx \in \periodspace$.
At the \timename{} $\eplength-1$, we have that
\begin{equation}
    \valuefunc_{\eplength-1}(\state;\policyparams) =
    \riskmeas_{\eplength-1} \Big(\costfunc_{\eplength-1}^{\policyparams} \Bigm|\state_{\eplength-1}=\state \Big).
\end{equation}
This is a convex risk measure which is absolutely continuous a.s. by \cref{lemma:continuity}.
Using the universal approximation theorem given in \cref{thm:universal-approx}, we obtain that \new{for any $\epsilon_{\eplength-1}>0$,} there exists a neural net $\valueparams_{\eplength-1}$ such that
\begin{equation}
    \esssup_{\state \in \statespace} \norm{ \valuefunc_{\eplength-1}(\state; \policyparams) - \valuefunc_{\eplength-1}^{\valueparams_{\eplength-1}}(\state; \policyparams) } < \epsilon_{\eplength-1}.
    \label{eq:induction-base}
\end{equation}
\new{This proves the base case of our proof by induction.}

For \new{the induction step}, the \ANN{} \new{$\valueparams_{\timeidx}$} approximates the value function at \timename{} $\timeidx$ as long as the value function at the next \timename{} $\timeidx+1$ is adequately approximated.
\new{Assume that for any $\epsilon_{\timeidx+1} > 0$, there exists an \ANN{} \new{$\valueparams_{\timeidx+1}$} such that
\begin{equation}
    \esssup_{\state \in \statespace} \norm{ \valuefunc_{\timeidx+1}(\state; \policyparams) - \valuefunc_{\timeidx+1}^{\valueparams_{\timeidx+1}}(\state; \policyparams) } < \epsilon_{\timeidx+1}.\label{eq:induction-step}
\end{equation}}
Using the translation invariance, \new{the triangle inequality, \cref{lemma:continuity} and \cref{eq:induction-step}}, we have
\new{\begin{align}
    &\esssup_{\state \in \statespace} \norm{ \valuefunc_{\timeidx}(\state; \policyparams) - \valuefunc_{\timeidx}^{\valueparams_{\timeidx}}(\state; \policyparams) } \nonumber\\
    &\quad= \esssup_{\state \in \statespace} \norm{ \riskmeas_{\timeidx}\Big( \costfunc_{\timeidx}^{\policyparams} + \valuefunc_{\timeidx+1}(\state_{\timeidx+1}^{\policyparams}; \policyparams) \Bigm| \state_{\timeidx} = \state \Big)- \valuefunc_{\timeidx}^{\valueparams_{\timeidx}}(\state; \policyparams) } \nonumber\\
    &\quad\leq \esssup_{\state \in \statespace} \norm{ \riskmeas_{\timeidx}\Big( \costfunc_{\timeidx} + \valuefunc^{\valueparams_{\timeidx+1}}_{\timeidx+1}(\state_{\timeidx+1}^{\policyparams}; \policyparams) \Bigm| \state_{\timeidx} = \state \Big)- \valuefunc_{\timeidx}^{\valueparams_{\timeidx}}(\state; \policyparams) } \nonumber\\
    &\qquad\quad + \esssup_{\state \in \statespace} \norm{ \riskmeas_{\timeidx}\Big( \costfunc_{\timeidx}^{\policyparams} + \valuefunc_{\timeidx+1}(\state_{\timeidx+1}^{\policyparams}; \policyparams) \Bigm| \state_{\timeidx} = \state \Big)- \riskmeas_{\timeidx}\Big( \costfunc_{\timeidx} + \valuefunc^{\valueparams_{\timeidx+1}}_{\timeidx+1}(\state_{\timeidx+1}^{\policyparams}; \policyparams) \Bigm| \state_{\timeidx} = \state \Big) } \nonumber\\
    &\quad\leq \esssup_{\state \in \statespace} \norm{ \riskmeas_{\timeidx}\Big( \costfunc_{\timeidx} + \valuefunc^{\valueparams_{\timeidx+1}}_{\timeidx+1}(\state_{\timeidx+1}^{\policyparams}; \policyparams) \Bigm| \state_{\timeidx} = \state \Big)- \valuefunc_{\timeidx}^{\valueparams_{\timeidx}}(\state; \policyparams) } \nonumber\\
    &\qquad\quad + \esssup_{\state \in \statespace} \norm{  \valuefunc_{\timeidx+1}(\state; \policyparams) -  \valuefunc^{\valueparams_{\timeidx+1}}_{\timeidx+1}(\state; \policyparams) } \nonumber\\ 
    &\quad< \esssup_{\state \in \statespace} \norm{ \riskmeas_{\timeidx}\Big( \costfunc_{\timeidx} + \valuefunc^{\valueparams_{\timeidx+1}}_{\timeidx+1}(\state_{\timeidx+1}^{\policyparams}; \policyparams) \Bigm| \state_{\timeidx} = \state \Big) - \valuefunc_{\timeidx}^{\valueparams_{\timeidx}}(\state; \policyparams) } + \epsilon_{\timeidx+1}. \label{eq:proof-induction1}
\end{align}Using the universal approximation theorem on \cref{eq:proof-induction1}, we get for any $\epsilon_{\timeidx}>0$, there exists an \ANN{} $\valueparams_{\timeidx}$ such that
\begin{equation}
    \esssup_{\state \in \statespace} \norm{ \valuefunc_{\timeidx}(\state; \policyparams) - \valuefunc_{\timeidx}^{\valueparams_{\timeidx}}(\state; \policyparams) } < \epsilon_{\timeidx} + \epsilon_{\timeidx+1}
\end{equation}We apply this argument recursively for any \timename{} $\timeidx \in \periodspace$, which completes the proof by induction.}

\new{Finally, we need an additional lemma to prove one can approximate the ensemble of \ANN{}s $\{\valuefunc_{\timeidx}^{\valueparams_{\timeidx}}(\state; \policyparams)\}_{\timeidx\in\periodspace}$ with a single \ANN{} $\valuefunc^{\valueparams}_{\timeidx}(\state;\policyparams)$.}
\new{\begin{lemma}
    Suppose $\{\hat{f}_{\timeidx}(x)\}_{\timeidx\in\periodspace}$, $x \in K \subseteq \Reals^{d_1}$ is an ensemble of a finite number of ANNs.
    Then for any $\epsilon > 0$, there exists an ANN $\hat{g}_{\timeidx}(x)$ such that $\esssup_{x \in K} \; \Vert \hat{f}_{\timeidx}(x) - \hat{g}_{\timeidx}(x) \Vert < \epsilon$, $\forall \, \timeidx \in \periodspace$.
    \label{lemma:ensemble}
\end{lemma}}
\begin{proof}[Proof of \cref{lemma:ensemble}]
\new{Throughout the proof, we label the ensemble using $\periodspace:=\{0,1,\ldots,\eplength\}$ without loss of generality.
We create an extension of the ensemble $\{\hat{f}_{\timeidx}(x)\}_{\timeidx\in\periodspace}$ to obtain a function $\tilde{f}_{\timeidx}(x)$ that is absolutely continuous a.s. wrt both $x$ and $\timeidx$.}

\new{First, note that, by construction, $\hat{f}_{\timeidx}(x)$'s are absolutely continuous a.s. wrt $x$.
Between each pair of indices $\timeidx$ and $\timeidx+1$, we extend the function via a polynomial interpolation on the compact set $[\timeidx, \timeidx+1]$.
Since we work on a closed interval and polynomials are continuously differentiable, the interpolation $\tilde{f}_{\timeidx}(x)$ must be Lipschitz, and thus absolutely continuous.
We then use this argument on all pairs of indices, which shows that there exists a function $\tilde{f}_{\timeidx}(x)$ such that (i) $\tilde{f}_{\timeidx}(x) = \hat{f}_{\timeidx}(x)$ for any $x \in K$ and $\timeidx \in \periodspace$, and (ii) $\tilde{f}$ is absolutely continuous on $K \times [0, \eplength]$.}

\new{Using the universal approximation theorem, for any $\epsilon > 0$, there exists an ANN $\hat{g}_{\timeidx}(x)$ such that
\begin{equation}
    \esssup_{(x,\timeidx) \in K \times [0,\eplength]} \; \Big\Vert \tilde{f}_{\timeidx}(x) - \hat{g}_{\timeidx}(x) \Big\Vert < \epsilon.
\end{equation}
This also holds for $\hat{f}_{\timeidx}(x)$, which yields the desired result.}
\end{proof}

\new{Using the triangle inequality and \cref{lemma:ensemble}, for any $\hat{\epsilon}>0$, there exists an \ANN{} $\valueparams$ such that
\begin{equation}
    \esssup_{\state \in \statespace} \norm{ \valuefunc_{\timeidx}(\state; \policyparams) - \valuefunc_{\timeidx}^{\valueparams}(\state; \policyparams) } < \hat{\epsilon} + \epsilon_{\timeidx}, \quad \forall \timeidx \in \periodspace.
\end{equation}}As \cref{thm:universal-approx} is valid for any $\epsilon>0$, we can perform the training procedure in order to construct a sequence of $\epsilon_{\timeidx}, \ \timeidx \in \periodspace$ that satisfies a global error $\epsilon^{*}$, such as \new{$\hat{\epsilon} + \sum \epsilon_{\timeidx} < \epsilon^{*}$}.
\end{proof}

\subsection{Update of the Policy}
\label{ssec:update-pi}

The update of the policy is done using the gradients provided in \cref{eq:gradient-last-period,eq:gradient-other-period} of \cref{thm:grad-valuefunc}.
Some points worth mentioning concern the \stratname{}, the saddle-points,
and the gradient formula.
When implementing the algorithm, we ensure the \stratname{} uses the so-called \emph{reparametrization trick}, which allows the existence of pathwise gradient estimators from random samples.
Usually there are three basic approaches to perform a reparametrization:
\begin{enumerate}[(i)]
	\item Use a location-scale transformation -- we can view the standard random variable as an auxiliary variable $\rv$ (such as $\normaldist (0, 1)$) and simulate $\mu^{\policyparams} + \rv \sigma^{\policyparams}$ (distributed as $\normaldist (\mu^{\policyparams}, \sigma^{\policyparams})$);
	\item Use the inverse cumulative distribution function -- if it is tractable, we can use the inverse transform sampling method to simulate realizations from uniform random variables;
	\item Use a transformation of auxiliary variables -- common examples are the log-normal distribution, which can be expressed by exponentiation of a Gaussian distribution, and the gamma distribution, which can be rewritten as a sum of exponentially distributed random variables.
\end{enumerate}

Also since we assume the form of the risk envelope is known in an explicit form in \cref{assump:risk-envelope}, we can obtain a saddle-point $(\weight^{*}, \lambda^{*}, \lambda^{*,\Ee}, \lambda^{*,\Ii})$ of the Lagrangian of \cref{eq:value-func1,eq:value-func2} for any given risk measure, either analytically or using a sample average approximation \new{\citep[see Chapter 5 of][]{shapiro2014lectures}}.
The approach to obtain these saddle-points is illustrated for common risk measures in \cref{sec:experiments}.

\begin{algorithm}[htbp]
	\caption{Update of the \stratname{} $\policy$}
	\label{algo:update-pi}
	\KwIn{\new{Value function} $\valuefunc^{\valueparams}$, \new{policy} $\policy^{\policyparams}$,
	\new{number of \episodenames{} $\Ntrajectories$, \transitionnames{} $\Ntransitions$, epochs $\NepochsPI$, batch size $\NbatchsPI$}}
	\For{each epoch $\epochPIidx = 1, \ldots, \NepochsPI$}{
		Set the gradients \new{of $\policy^{\policyparams}$}to zero \;
		Sample $\NbatchsPI$ states $\state^{(\batchPIidx)}_{\timeidx}, \ \batchPIidx=1,\ldots,\NbatchsPI, \, \timeidx\in\periodspace$\;
		Obtain from $\policy^{\policyparams}$ the associated \transitionnames{} $(\action^{(\batchPIidx,\transitionidx)}_{\timeidx}, \state^{(\batchPIidx,\transitionidx)}_{\timeidx+1}, \costfunc_{\timeidx}^{(\batchVidx,\transitionidx)}), \ \transitionidx=1,\ldots,\Ntransitions$\;
		\For{each state $\batchPIidx=1,\ldots,\NbatchsPI$, $\timeidx\in\periodspace$}{
			Obtain $\hat{z}_{\timeidx}^{(\batchVidx,\transitionidx)} = \grad{\policyparams} \log \policy^{\policyparams}(\action^{(\batchPIidx,\transitionidx)}_{\timeidx} | \state^{(\batchPIidx)}_{\timeidx})$ with the reparametrization trick\;
			Obtain $\hat{v}_{\timeidx+1}^{(\batchVidx,\transitionidx)} = \valuefunc^{\valueparams}_{\timeidx+1}(\state^{(\batchPIidx,\transitionidx)}_{\timeidx+1}; \policyparams)$\;
			Get a saddle-point $(\weight^{*}, \lambda^{*}, \lambda^{*,\Ee}, \lambda^{*,\Ii})$ \new{and compute $\weight_{\timeidx}^{*,(b,m)} = \weight^{*}(\action_{\timeidx}^{(b,m)}, \state_{\timeidx+1}^{(b,m)})$}\;
			Obtain $\hat{\riskmeas}_{\timeidx}^{(\batchPIidx)} = \grad{\policyparams} \dualriskmeas_{\timeidx}(\weight^{*})$, $\hat{g}_{e, \timeidx}^{(\batchPIidx)} = \grad{\policyparams} g_e(\weight^{*}, \PP^{\policyparams})$, and $\hat{f}_{i, \timeidx}^{(\batchPIidx)} =  \grad{\policyparams} f_i(\weight^{*}, \PP^{\policyparams})$\;
			\uIf{$\timeidx = \eplength-1$}{
				Calculate the \emph{gradient} $\grad{\policyparams} \valuefunc_{\timeidx} (\state^{(\batchPIidx)}_{\timeidx}; \policyparams)$ from \cref{eq:gradient-last-period} 
				\new{$$\ell^{(\batchPIidx)}_{\timeidx} = \frac{1}{\Ntransitions} \sum_{\transitionidx=1}^{\Ntransitions} \Bigg(
    			\weight_{\timeidx}^{*,(b,m)} \Big(\costfunc_{\timeidx}^{(\batchVidx,\transitionidx)}
    			- \lambda^{*}\Big) \hat{z}_{\timeidx}^{(\batchVidx,\transitionidx)}
    			- \hat{\riskmeas}_{\timeidx}^{(\batchPIidx)}
    			- \sum_{e \in \Ee} \lambda^{*,\Ee}(e)  \hat{g}_{e, \timeidx}^{(\batchPIidx)}
    			- \sum_{i \in \Ii} \lambda^{*,\Ii}(i) \hat{f}_{i, \timeidx}^{(\batchPIidx)}
    			\Bigg);$$}
    			\\
			}
			\uElse{
				Calculate the \emph{gradient} $\grad{\policyparams} \valuefunc_{\timeidx} (\state^{(\batchPIidx)}_{\timeidx}; \policyparams)$ from \cref{eq:gradient-other-period}
				\new{$$\ell^{(\batchPIidx)}_{\timeidx} = \frac{1}{\Ntransitions} \sum_{\transitionidx=1}^{\Ntransitions} \Bigg(
			    \weight_{\timeidx}^{*,(b,m)} \Big(\costfunc_{\timeidx}^{(\batchVidx,\transitionidx)}
    			+ \hat{v}_{\timeidx+1}^{(\batchVidx,\transitionidx)} 
    			- \lambda^{*}\Big) \hat{z}_{\timeidx}^{(\batchVidx,\transitionidx)}
    			- \hat{\riskmeas}_{\timeidx}^{(\batchPIidx)}
    			- \sum_{e \in \Ee} \lambda^{*,\Ee}(e)  \hat{g}_{e, \timeidx}^{(\batchPIidx)}
    			- \sum_{i \in \Ii} \lambda^{*,\Ii}(i) \hat{f}_{i, \timeidx}^{(\batchPIidx)}
    			\Bigg);$$}
			    \\				
			}
		}
		Take the average $\ell = \frac{1}{\NbatchsPI \eplength} \sum_{\batchPIidx=1}^{\NbatchsPI} \sum_{\timeidx=0}^{\eplength-1} \ell^{(\batchPIidx)}_{\timeidx}$\;
		Update $\policyparams$ by performing an Adam optimizer step\;
	}
	\KwOut{An updated \stratname{} $\policy^{\policyparams}$}
\end{algorithm}

We recall that the value function $\valuefunc^{\valueparams}$ is fixed while we optimize $\policy^{\policyparams}$.
When we compute the gradient of the value function to optimize the \stratname{}, we fix the parameters of the value function $\valueparams$.
This can be interpreted as taking a copy of the \ANN{} structure, which implies that the value function used in the actor part of the algorithm does not depend explicitly on $\policyparams$.
Therefore, the additional expectation of the gradient of the value function at $\timeidx+1$ in \cref{eq:gradient-other-period} vanishes.
The value function gradient is then estimated averaging over a batch of states and different \timenames{}.
The algorithm is given in \cref{algo:update-pi}.


\section{Experiments}
\label{sec:experiments}

In this section, we provide three illustrative examples to understand the potential gain of using dynamic risk measures in \RL{}, and more specifically the advantages of our proposed approach on several examples. In our experiments, we consider several risk measures in order to compare their performance and highlight their differences.\footnote{We implemented more dynamic convex risk measures in our code available on Github, and users can easily add their own in the Python files -- see \cref{sec:implementation} for a description of the code architecture.}

The first risk measure we consider is the \emph{\new{dynamic} expectation}, \new{where the one-step conditional risk measures are}  $\riskmeas_{\E}(\rv) = \E [\rv]$, which serves as a benchmark for the risk-neutral approach.
It is a convex risk measure, and its saddle-point $(\weight^{*},\lambda^{*})$ is given by $\weight^{*}(\omega) = 1$ and $\lambda^{*} = 0$.

The second risk measure is the \emph{\new{dynamic} conditional value-at-risk} (\CVaR{}) with threshold $\alpha \in (0,1)$\new{, where the one-step conditional risk maps are}
\begin{equation}
	\label{eq:CVaR}
	\riskmeas_{\CVaR}(\rv; \alpha) = \sup_{\weight \in \riskenv(\PP)} \left\{ \EEweight \left[ \rv \right] \right\},
\end{equation}
where
\begin{equation}
	\label{eq:risk-envelope-CVaR}
	\riskenv(\PP) = \left\{ \weight : \sum_{\omega} \weight(\omega) \PP(\omega) = 1, \ \weight \in \left[ 0,\frac{1}{\alpha} \right] \right\}.
\end{equation}
The \CVaR{} is a coherent risk measure widely used in the financial mathematics literature \citep{rockafellar2000optimization}.
As shown in \cite{shapiro2014lectures}, any saddle-point $(\weight^{*}, \lambda^{*})$ satisfies $\weight^{*}(\omega) = \frac{1}{\alpha}$ if $\rv (\omega) > \lambda^{*}$ and $\weight^{*}(\omega) = 0$ otherwise, where $\lambda^{*}$ is any $(1-\alpha)$-quantile of $\rv$.
\new{Here, note that despite the static \CVaR{} not being time-inconsistent (see \cite{cheridito2009time}), the dynamic \CVaR{} is time-consistent by construction.}

The third risk measure is a \emph{\new{dynamic} penalized \CVaR{}} where we add a relative entropy term with respect to the uniform distribution. This is a convex but not coherent \new{dynamic} risk measure. \new{The one-step conditional risk measures are} given by
\begin{equation}
	\label{eq:CVaR-penalized}
	\riskmeas_{\CVaR-p}(\rv; \alpha, \beta) = \sup_{\weight \in \riskenv(\PP)} \left\{ \EEweight \left[ \rv \right] - \beta\, \EEweight \left[ \log \weight \right]  \right\}, \quad \beta > 0,
\end{equation}
with the same risk envelope given in \cref{eq:risk-envelope-CVaR}.
Obtaining saddle-points is not as straightforward as with the \CVaR{}, since it requires solving a convex optimization problem.
The Lagrangian with the risk envelope constraints is
\begin{equation}
    \Ll (\weight, \lambda, \eta) = \sum_{\omega} \weight(\omega) \PP(\omega) \left( \rv(\omega) - \beta \log \weight(\omega) \right)
    - \lambda \left( \sum_{\omega} \weight(\omega) \PP(\omega) - 1\right)
    - \sum_{\omega}  \eta(\omega) \left( \weight(\omega) - \frac{1}{\alpha} \right), \label{eq:lagr-cvar-p}
\end{equation}
with $\lambda \in \Reals$ and \new{$\eta(\omega) \geq 0$} for all $\omega$.
Setting the derivative of \cref{eq:lagr-cvar-p} wrt $\weight(\omega_i)$ to zero leads to
\begin{equation}
    \label{eq:weights-1}
    \weight(\omega_i) = \exp \left( \frac{\rv(\omega_i) - \lambda - \beta}{\beta} - \frac{\eta(\omega_i)}{\beta\, \PP(\omega_i)} \right).
\end{equation}
When imposing the constraint on the $\eta$'s on \cref{eq:weights-1}, we obtain the following expression
\begin{equation}
    \label{eq:weights-2}
    \weight(\omega_i) = 
     \begin{cases}
     \exp \left( \frac{\rv(\omega_i) - \lambda - \beta}{\beta} \right) & \text{if } \eta(\omega_i) = 0 \\
     \frac{1}{\alpha} & \text{if } \eta(\omega_i) > 0
     \end{cases},
\end{equation}
and that constraint is active when $\rv(\omega_i) > -\beta \log(\alpha) + \beta + \lambda$.
We combine \cref{eq:weights-2} with the constraint on $\lambda$ to get
\begin{equation}
    \label{eq:weights-3-root}
    \sum_{i \, : \, \rv(\omega_i) \leq -\beta \log(\alpha) + \beta + \lambda}
    \PP(\omega_i) \left( \exp \left( \frac{\rv(\omega_i) - \lambda - \beta}{\beta} \right) - \frac{1}{\alpha} \right) = 1 - \frac{1}{\alpha}.
\end{equation}
Any saddle-point $(\weight^{*},\lambda^{*})$ then  satisfies $\weight^{*}(\omega) = \max(1/\alpha, \, e^{(\rv(\omega_i) - \lambda^{*} - \beta)/\beta})$, where $\lambda^{*}$ is a root of \cref{eq:weights-3-root}.

We note here that the penalized \CVaR{} contains both risk measures as special cases.
Indeed, for $\beta = 0$, it reduces to the \CVaR{}, while we recover the expectation as $\beta$ tends to $\infty$.


\subsection{Statistical Arbitrage Example}
\label{ssec:trading-problem}

This collection of experiments is performed on an algorithmic trading environment problem.
The \agentname{} begins each \episodename{} with zero inventory, and on each \timename{} the \agentname{} wishes to trade quantities of an asset, whose price fluctuates according to some \datastratnames{}.
For each \timename{} $\timeidx \in \periodspace$, the \agentname{} observes the asset's price  $\price_{\timeidx} \in \pricespace$ and their inventory $\inventory_{\timeidx} \in (-\inventorymax, \inventorymax)$, performs a trade $\trade_{\timeidx}^{\policyparams} \in (-\trademax, \trademax)$, resulting in wealth $\wealth_{\timeidx} \in \wealthspace$ according to
\begin{equation}
\label{eq:wealth}
\begin{cases}
\begin{aligned}
	\wealth_0 &= 0,
	\\
	\wealth_{\timeidx} &= \wealth_{\timeidx-1}
	- \trade_{\timeidx-1}^{\policyparams} \price_{\timeidx-1}
	- \varphi (\trade_{\timeidx-1}^{\policyparams})^2, \qquad \timeidx=1, \ldots, \eplength-1
	\\
	\wealth_{\eplength} &= \wealth_{\eplength-1}
	- \trade_{\eplength-1}^{\policyparams} \price_{\eplength-1}
	- \varphi (\trade_{\eplength-1}^{\policyparams})^2
	+ \inventory_{\eplength} \price_{\eplength}
	- \psi \inventory_{\eplength}^2,
\end{aligned},
\end{cases}
\end{equation}
with coefficients $\varphi=0.005$ and $\psi=0.5$ for the cost transactions and terminal penalty imposed by the market respectively.
We suppose that $\eplength=5$, $\inventorymax=5$, $\trademax=2$, and the asset price follows an Ornstein-Uhlenbeck process, and hence mean-reverts:
\begin{equation}
	\label{eq:price-process}
	\dee \price_{\timeidx} = \kappa (\mu - \price_{\timeidx}) \dee \timeidx + \sigma \dee W_{\timeidx},
\end{equation}
where $\kappa = 2$, $\mu = 1$, $\sigma = 0.2$ and $W_{\timeidx}$ is a standard $\PP$-Brownian motion.\footnote{Our approach is model-free, which implies that we can easily replace the asset price dynamics with more complex models, for instance including a stochastic volatility.}
\new{The risk-aware agent tries to optimize the \RL{} problem stated in \cref{eq:optim-problem1}, where} for all \timenames{} $\timeidx \in \periodspace$, the actions are determined by the trades $\trade_{\timeidx}$, the costs by the differences in wealth \new{$\costfunc_{\timeidx} = \wealth_{\timeidx} - \wealth_{\timeidx+1}$}, and the states by the tuples $(\timeidx, \price_{\timeidx}, \inventory_{\timeidx})$.


\begin{figure}[htbp]
	\centering
	\begin{subfigure}[b]{0.24\textwidth}
		\centering
		\includegraphics[width=0.95\textwidth]{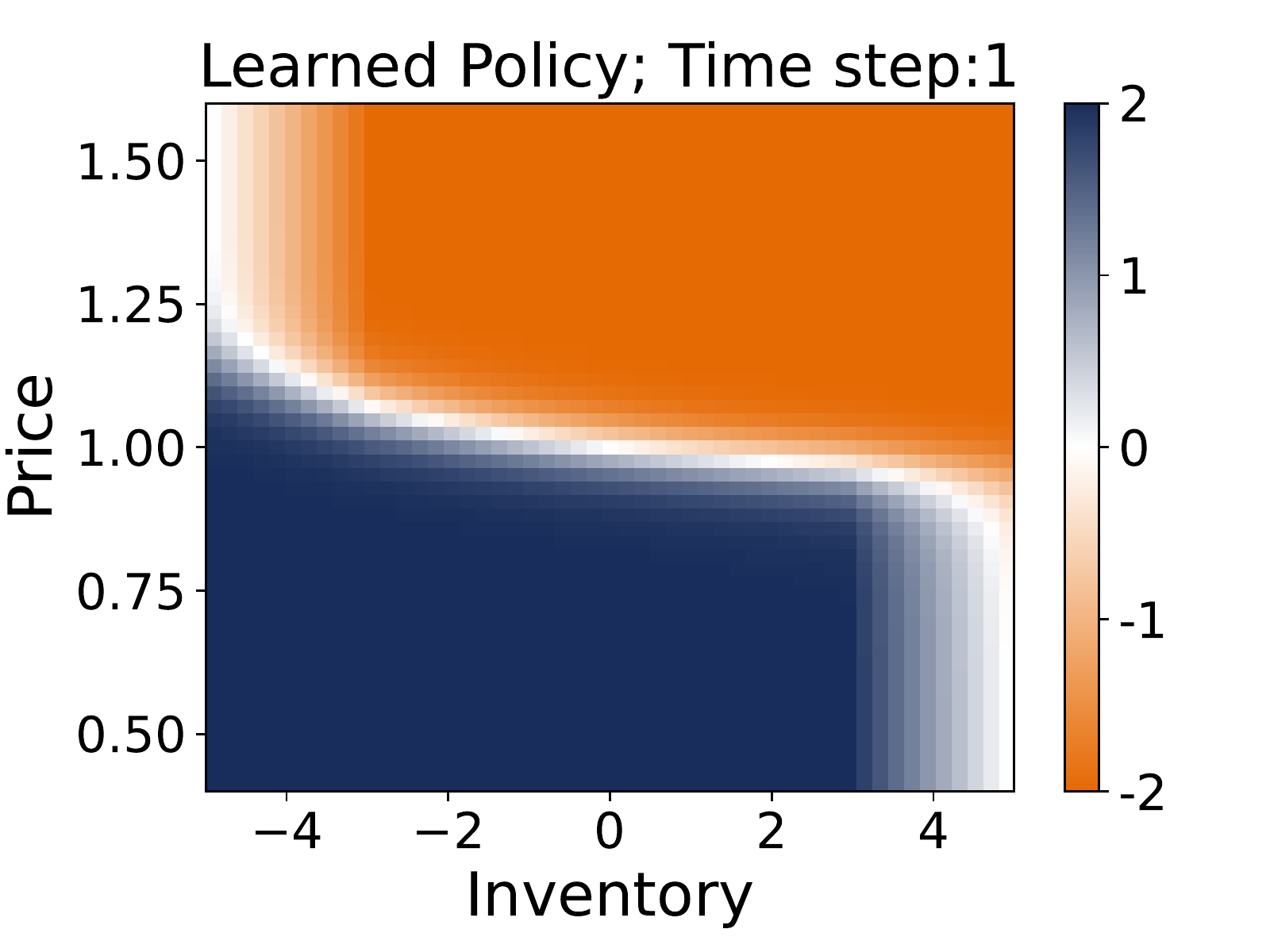}
		\caption{Mean, $t=1$}
		\label{subfig:mean_time1_learned}
	\end{subfigure}
	\hfill
	\begin{subfigure}[b]{0.24\textwidth}
		\centering
		\includegraphics[width=0.95\textwidth]{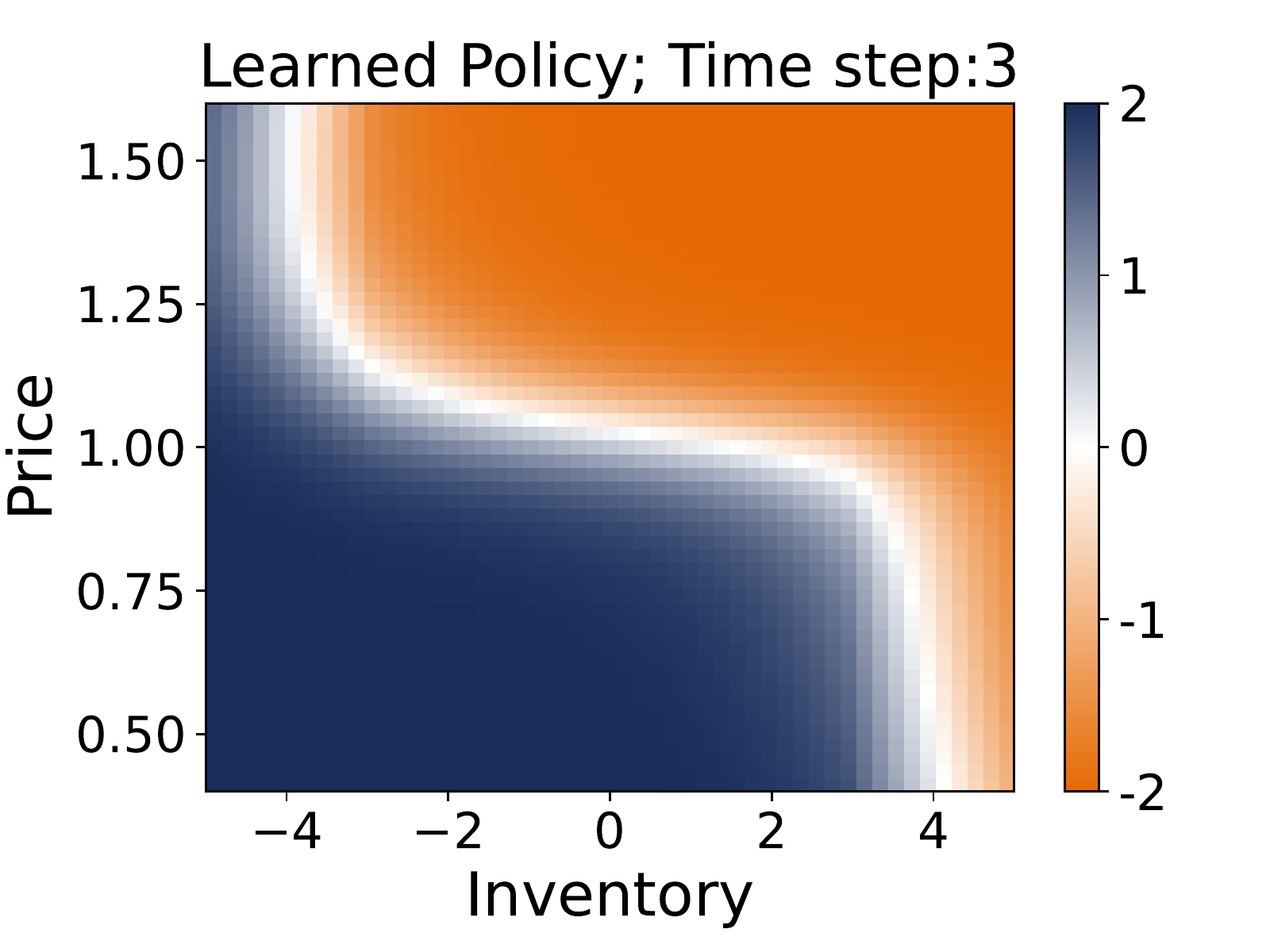}
		\caption{Mean, $t=3$}
		\label{subfig:mean_time3_learned}
	\end{subfigure}
	\hfill
	\begin{subfigure}[b]{0.24\textwidth}
		\centering
		\includegraphics[width=0.95\textwidth]{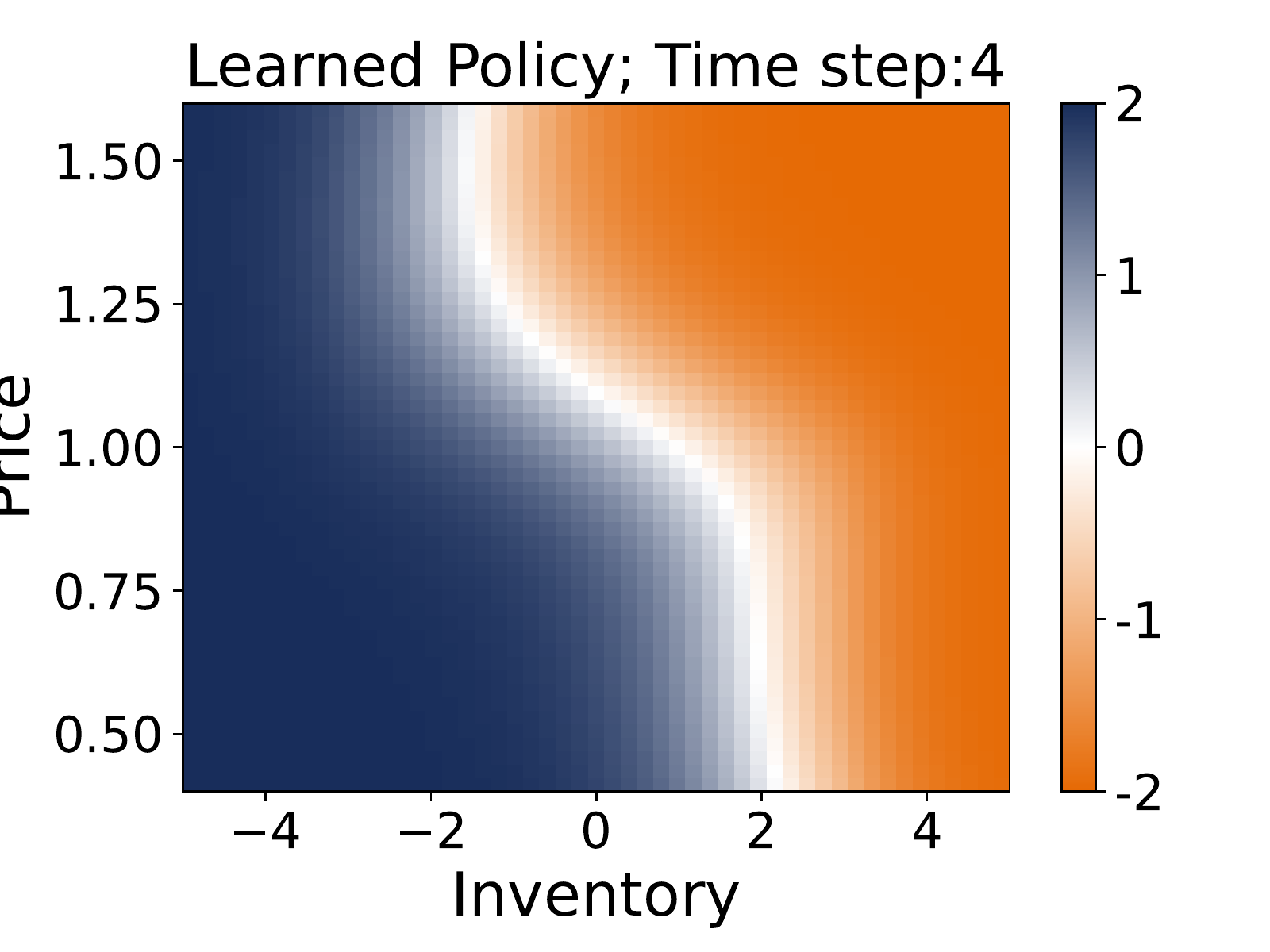}
		\caption{Mean, $t=4$}
		\label{subfig:mean_time4_learned}
	\end{subfigure}
	\hfill
	\begin{subfigure}[b]{0.24\textwidth}
		\centering
		\includegraphics[width=0.95\textwidth]{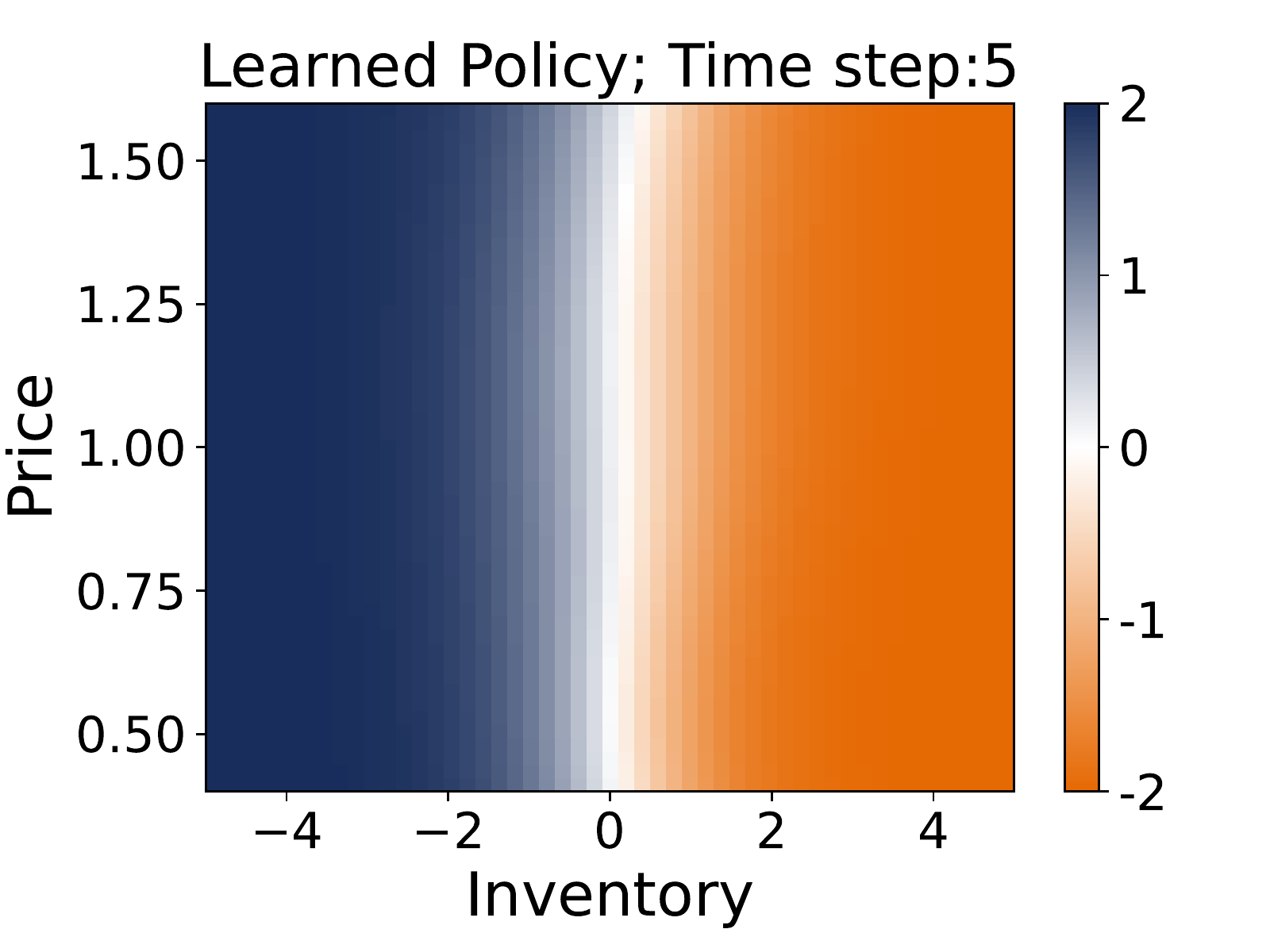}
		\caption{Mean, $t=5$}
		\label{subfig:mean_time5_learned}
	\end{subfigure}
	\vskip\baselineskip
	\begin{subfigure}[b]{0.24\textwidth}
		\centering
		\includegraphics[width=0.95\textwidth]{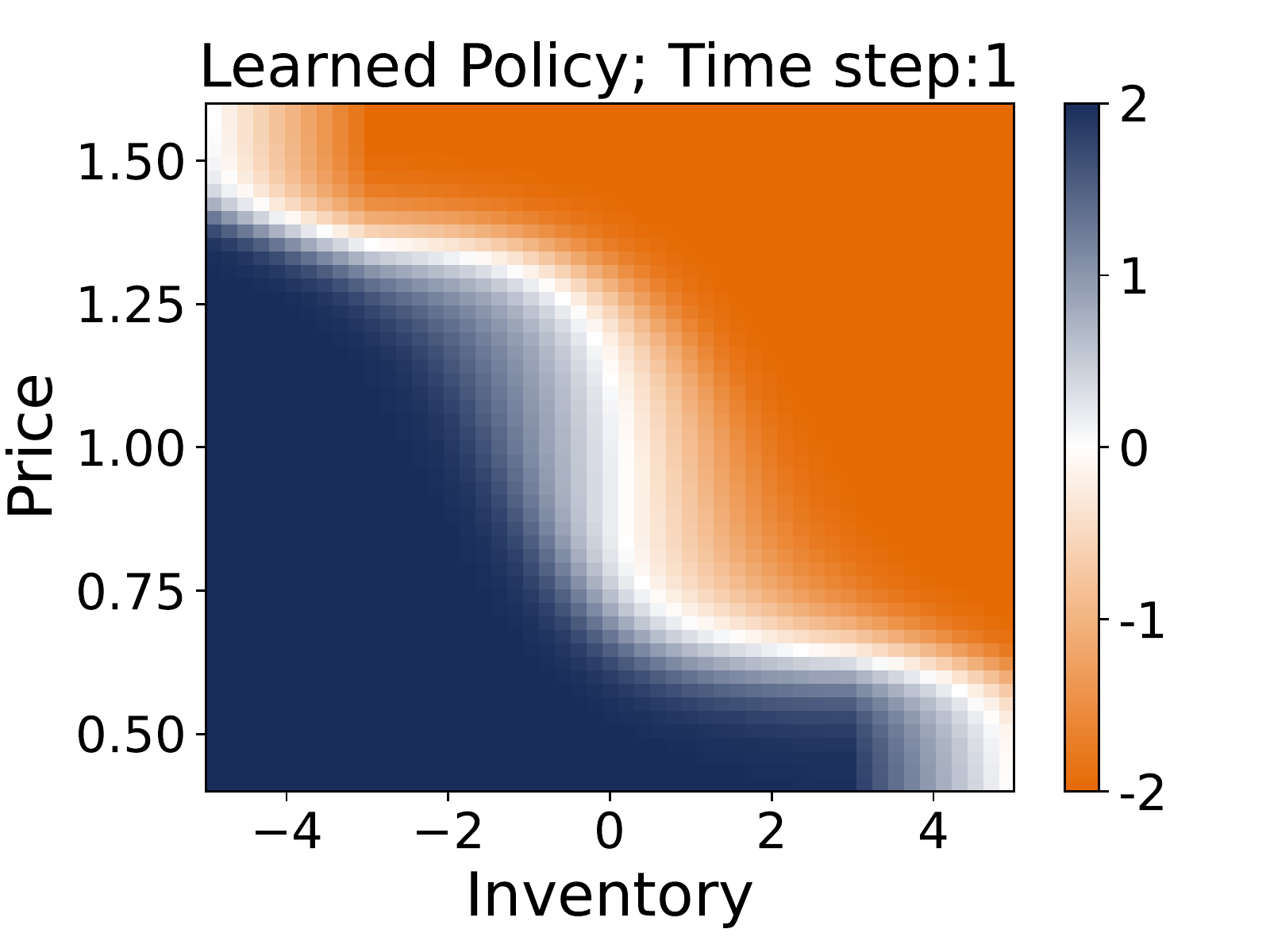}
		\caption{\CVaR{}, $t=1$}
		\label{subfig:cvar_time1_learned}
	\end{subfigure}
	\hfill
	\begin{subfigure}[b]{0.24\textwidth}
		\centering
		\includegraphics[width=0.95\textwidth]{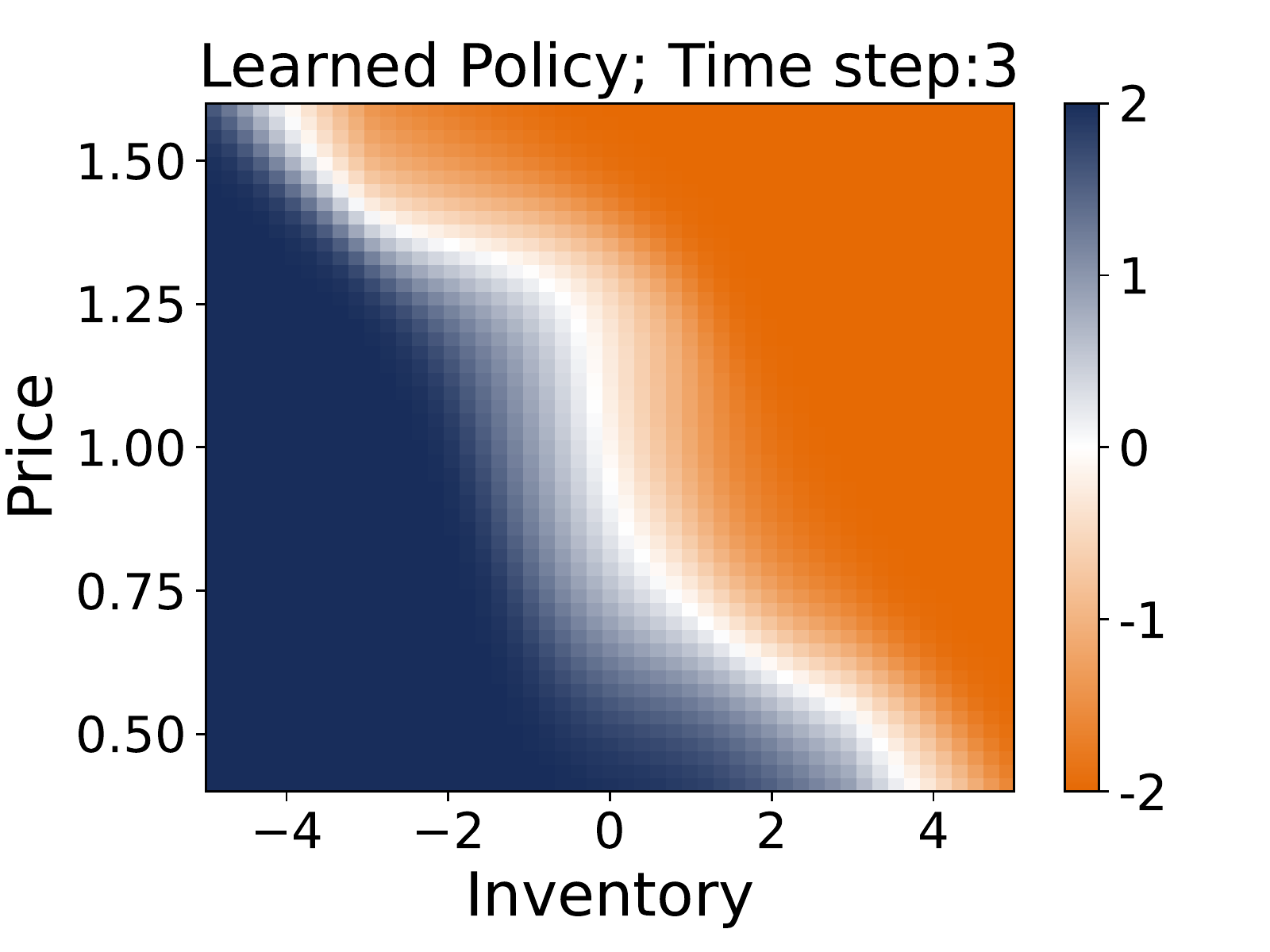}
		\caption{\CVaR{}, $t=3$}
		\label{subfig:cvar_time3_learned}
	\end{subfigure}
	\hfill
	\begin{subfigure}[b]{0.24\textwidth}
		\centering
		\includegraphics[width=0.95\textwidth]{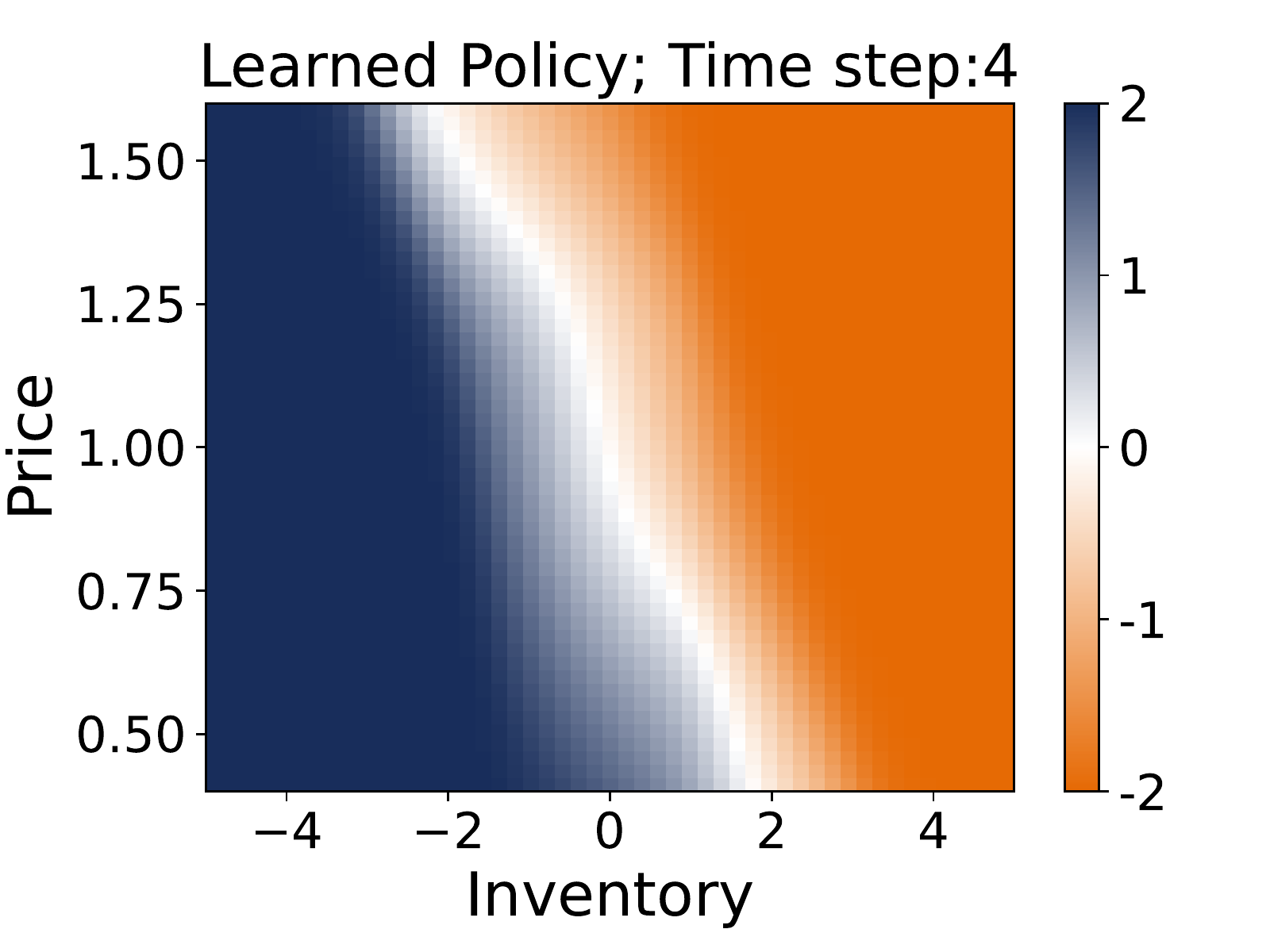}
		\caption{\CVaR{}, $t=4$}
		\label{subfig:cvar_time4_learned}
	\end{subfigure}
	\hfill
	\begin{subfigure}[b]{0.24\textwidth}
		\centering
		\includegraphics[width=0.95\textwidth]{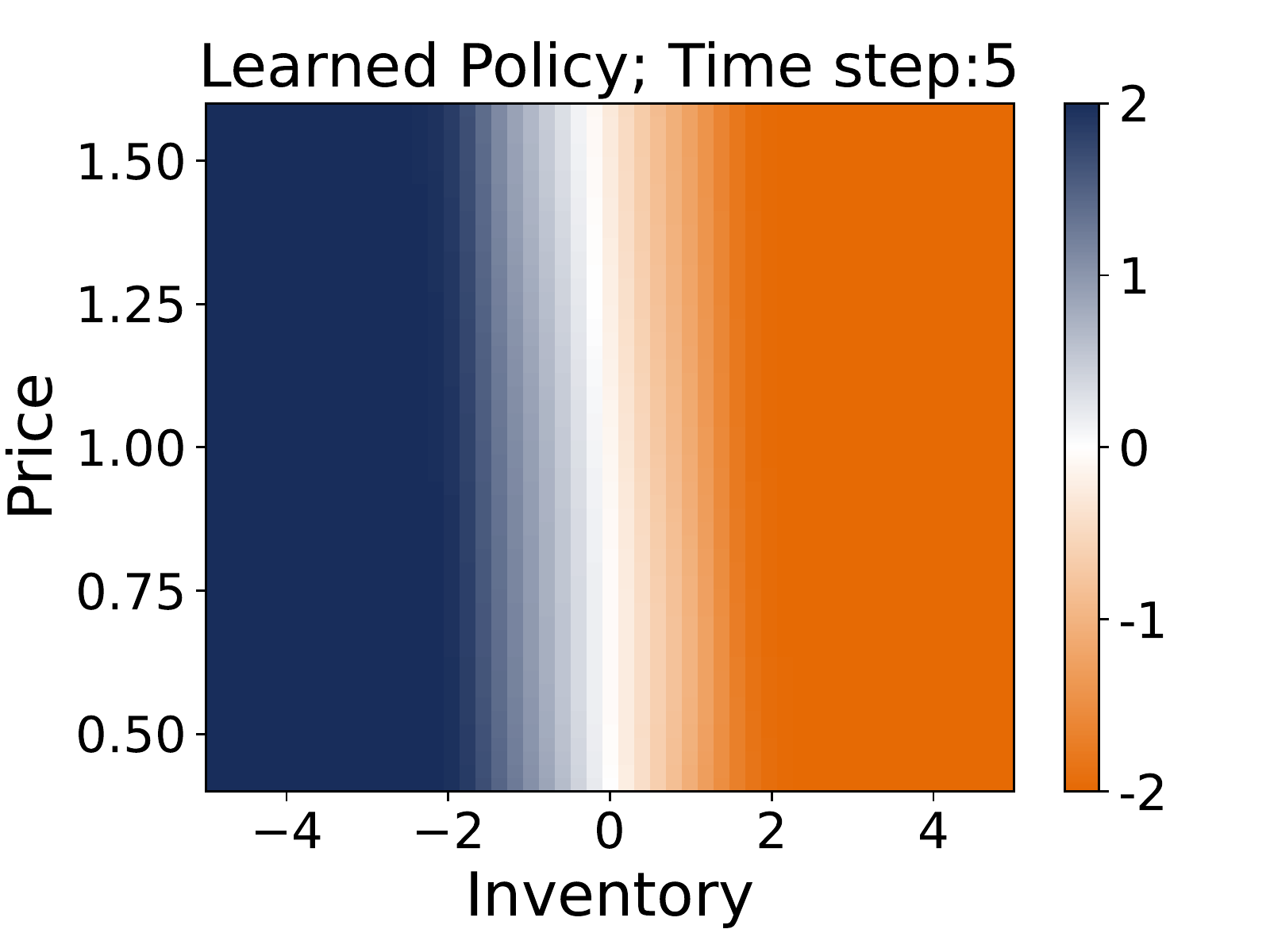}
		\caption{\CVaR{}, $t=5$}
		\label{subfig:cvar_time5_learned}
	\end{subfigure}
	\vskip\baselineskip
	\begin{subfigure}[b]{0.24\textwidth}
		\centering
		\includegraphics[width=0.95\textwidth]{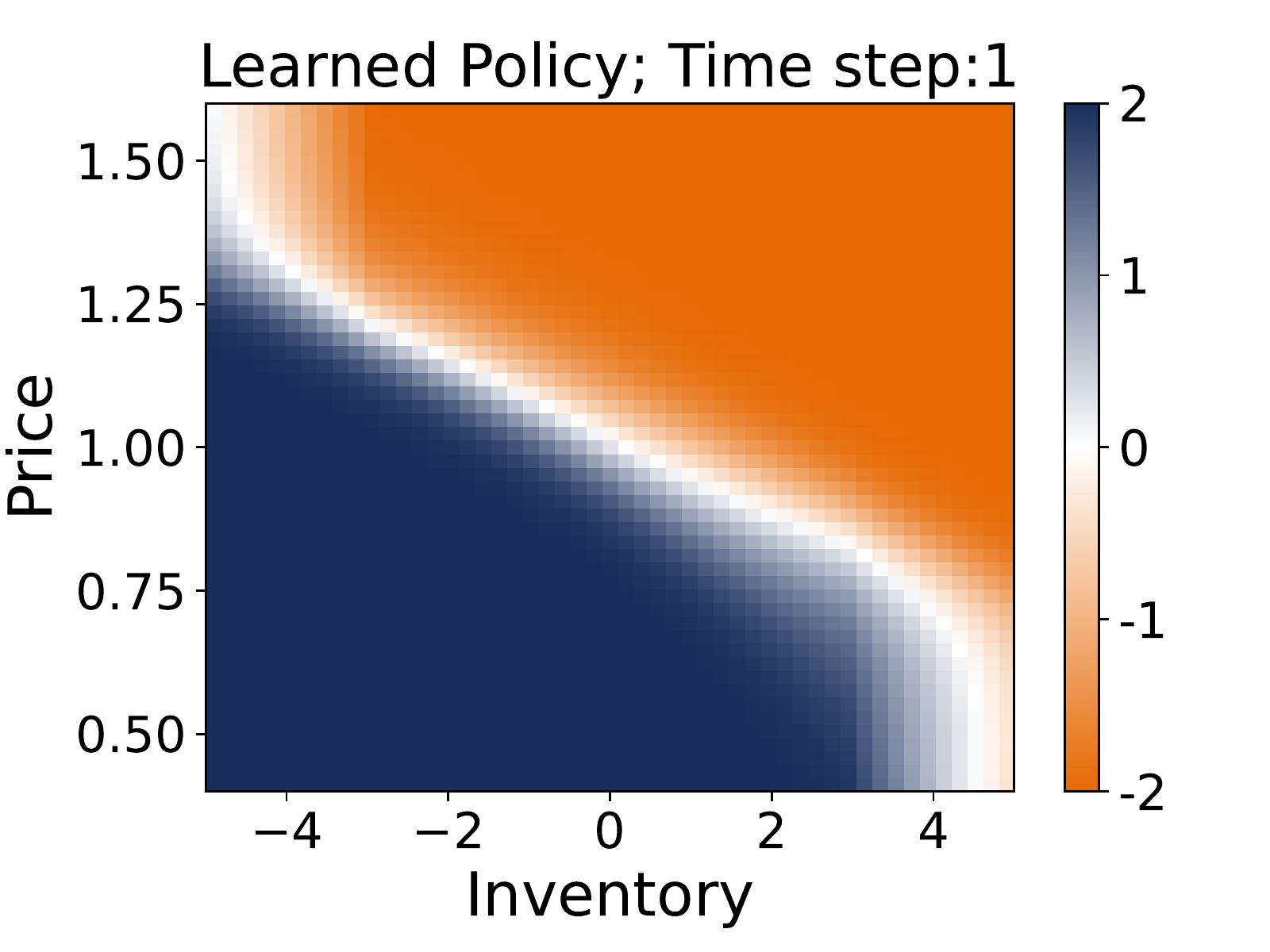}
		\caption{\CVaR{}-p, $t=1$}
		\label{subfig:cvar_pen_time1_learned}
	\end{subfigure}
	\hfill
	\begin{subfigure}[b]{0.24\textwidth}
		\centering
		\includegraphics[width=0.95\textwidth]{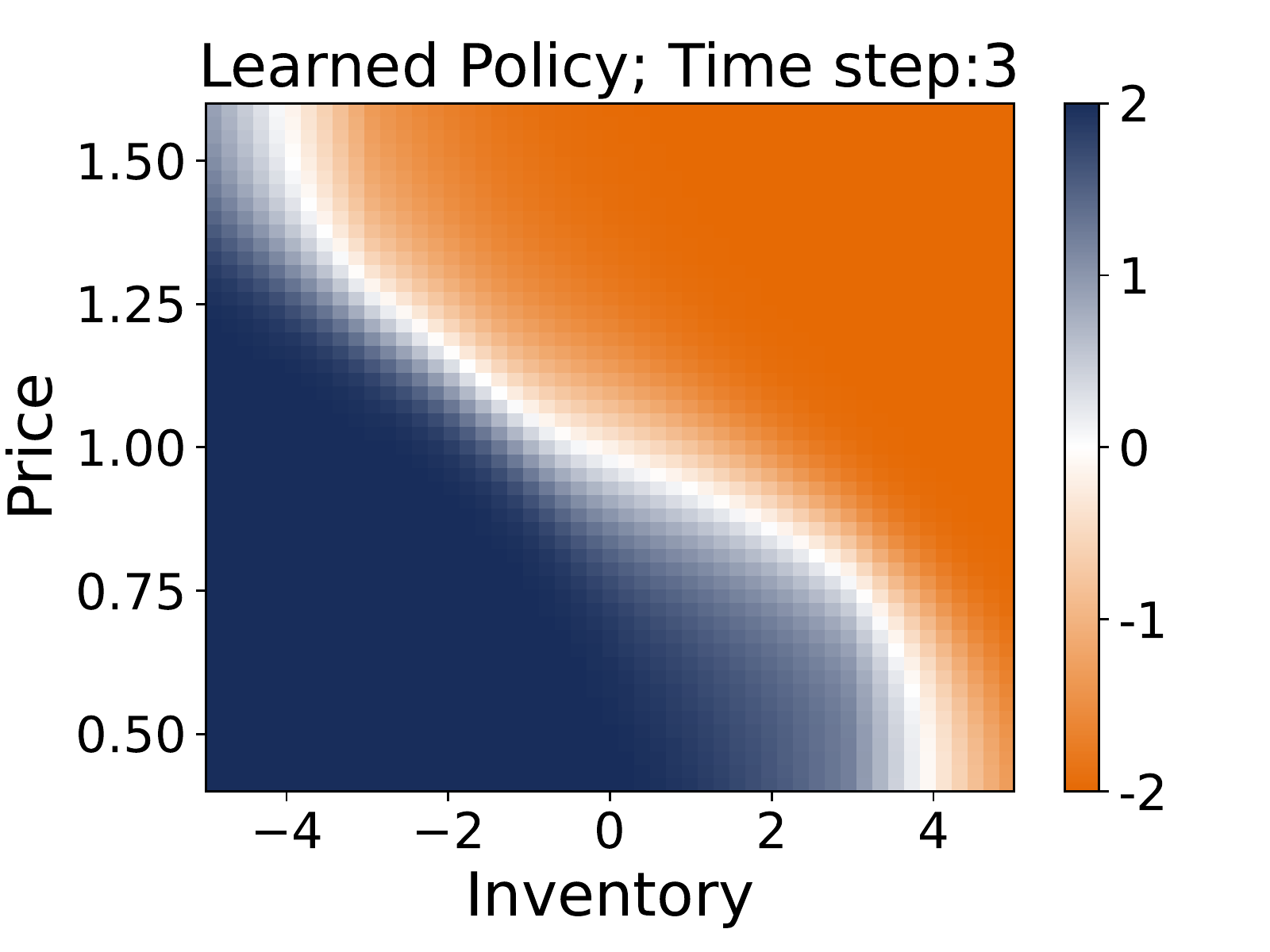}
		\caption{\CVaR{}-p, $t=3$}
		\label{subfig:cvar_pen_time3_learned}
	\end{subfigure}
	\hfill
	\begin{subfigure}[b]{0.24\textwidth}
		\centering
		\includegraphics[width=0.95\textwidth]{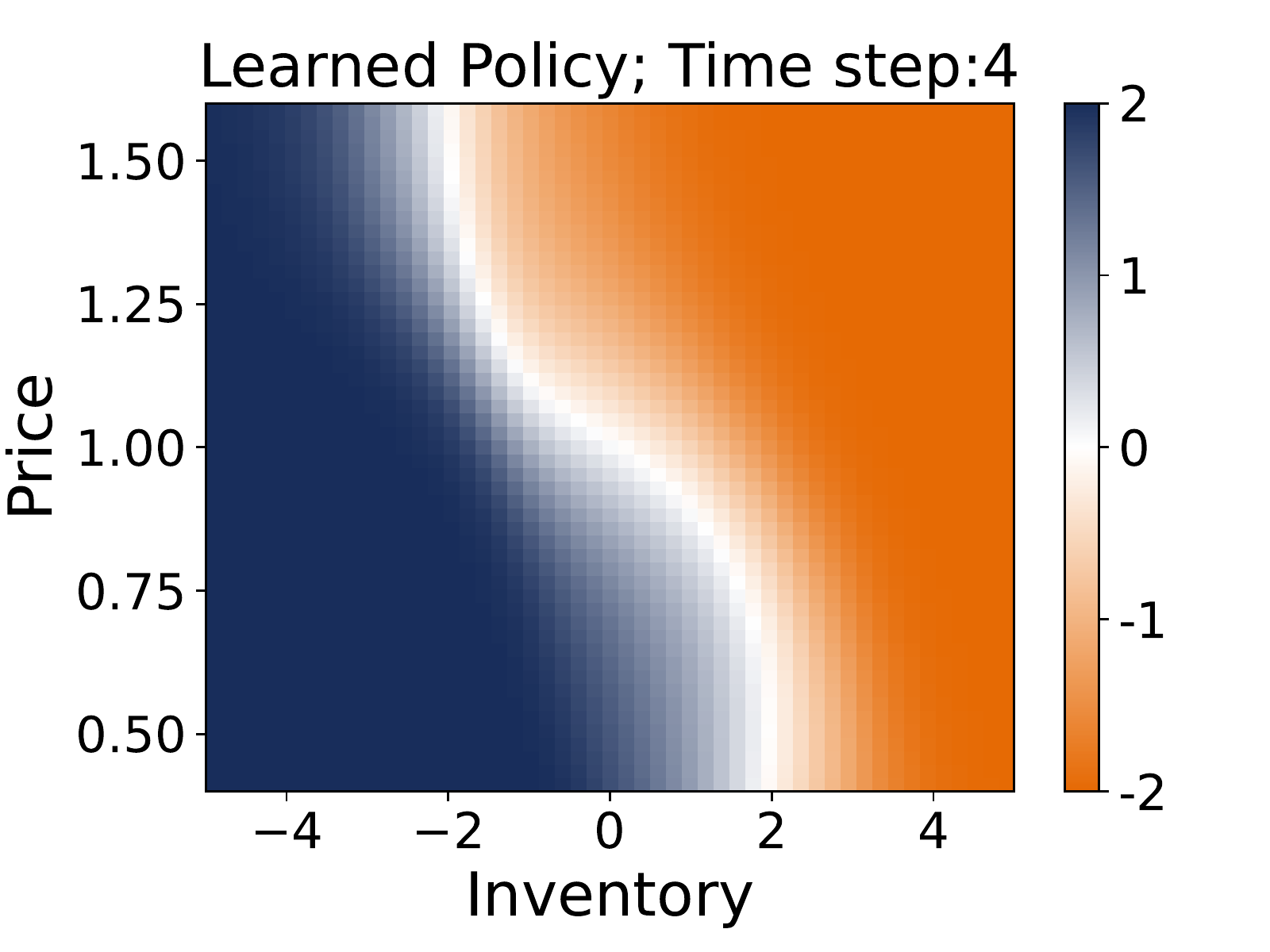}
		\caption{\CVaR{}-p, $t=4$}
		\label{subfig:cvar_pen_time4_learned}
	\end{subfigure}
	\hfill
	\begin{subfigure}[b]{0.24\textwidth}
		\centering
		\includegraphics[width=0.95\textwidth]{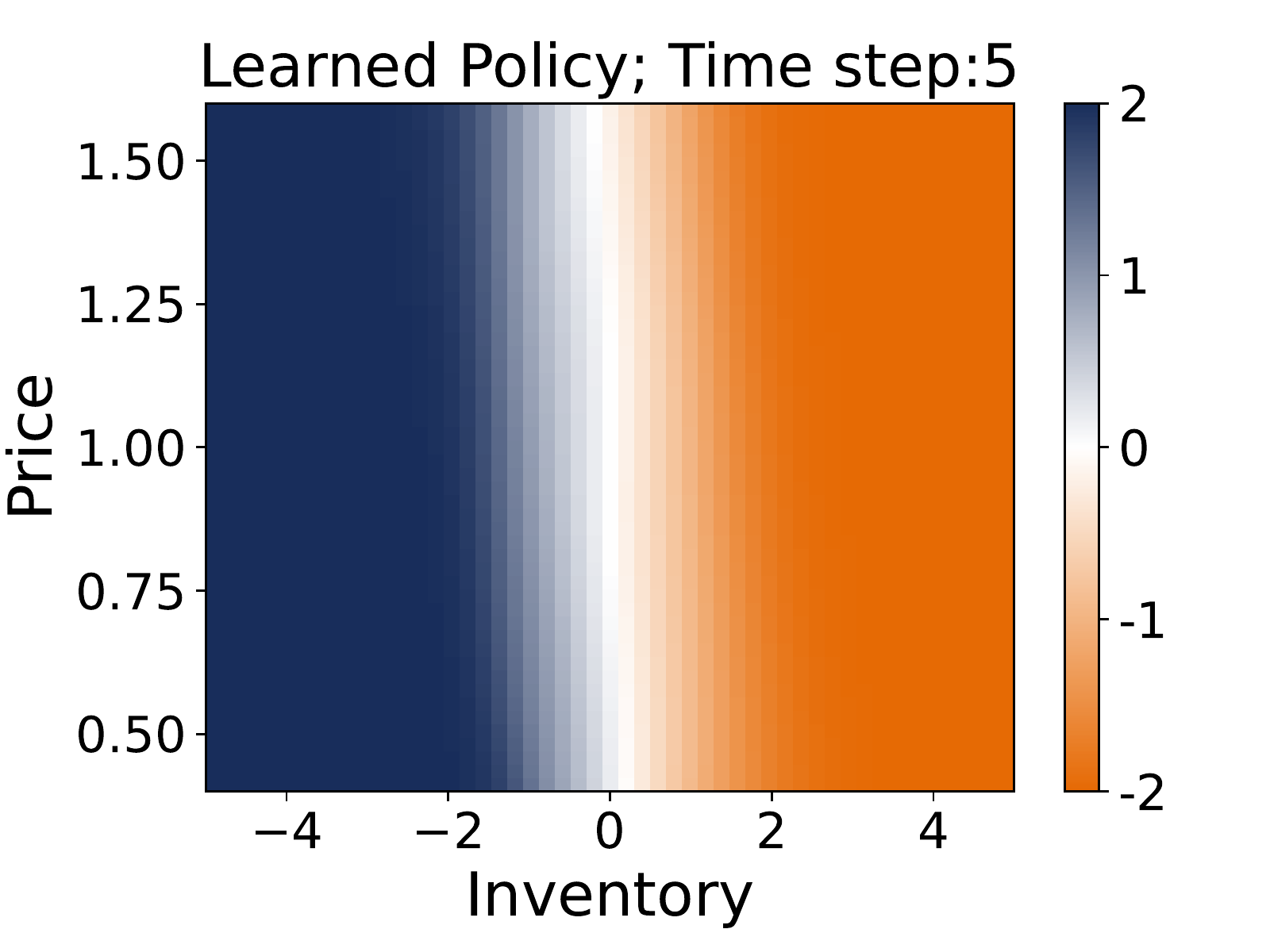}
		\caption{\CVaR{}-p, $t=5$}
		\label{subfig:cvar_pen_time5_learned}
	\end{subfigure}
	\caption{Learned \stratname{} by the actor-critic algorithm as a function of time (from left to right) when optimizing the \new{dynamic} expectation (top), the \new{dynamic} \CVaR{} with  $\alpha=0.2$ (middle) and the \new{dynamic} penalized \CVaR{} with  $\alpha=0.2$ and $\beta=0.1$ (bottom) in the statistical arbitrage example.}
	\label{fig:opt-actions-trading}
\end{figure}

Figure \ref{fig:opt-actions-trading} shows a comparison of the learned \stratname{} between the \new{dynamic expectation}, the \new{dynamic} \CVaR{} \new{with $\alpha=0.2$} and the \new{dynamic} penalized \CVaR{} \new{with $\alpha=0.2$ and $\beta=0.1$}.
When optimizing a risk-neutral objective function (see \cref{subfig:mean_time1_learned,subfig:mean_time3_learned,subfig:mean_time4_learned,subfig:mean_time5_learned}), in the beginning of the \episodename{}, the \agentname{} aims to sell quantities of the asset when its price is higher than the mean-reversion level, and buy it when its price is lower.
As \timenames{} evolve, the pattern shifts to ensure that the \agentname{} concludes the \episodename{} with zero inventory to avoid the terminal penalty.
With other dynamic risk measures (see \cref{subfig:cvar_time1_learned,subfig:cvar_time3_learned,subfig:cvar_time4_learned,subfig:cvar_time5_learned,subfig:cvar_pen_time1_learned,subfig:cvar_pen_time3_learned,subfig:cvar_pen_time4_learned,subfig:cvar_pen_time5_learned}), we observe that the agent is less aggressive, and instead waits until there are more significant price deviations from the mean-reversion level before taking actions that would benefit from the price reverting back to its mean. This reflects the \emph{risk-sensitive behavior} of the \agentname{}.

The distribution of the terminal reward when the \agentname{} follows the learned \stratname{} for the \new{dynamic} mean, \CVaR{}, and penalized \CVaR{} is illustrated in \cref{fig:final-cost-dist}.
In general, risk-sensitive approaches lead to a distribution with a \emph{smaller variance} and \emph{fewer large losses}.
The \agentname{}'s tolerance to risk can be adjusted by modifying the threshold $\alpha$ of the dynamic \CVaR{} (see \cref{subfig:final-cost-cvars}) or the relative entropy penalty constant $\beta$ \new{of the dynamic penalized \CVaR{}} (see \cref{subfig:final-cost-cvar_pen1}).
When increasing $\alpha$ or $\beta$, the distribution of the terminal wealth converges to the distribution for the risk-neutral objective function, as expected.


\begin{figure}[htbp]
    \centering
    \begin{subfigure}[b]{0.48\textwidth}
		\centering
		\includegraphics[width=0.95\textwidth]{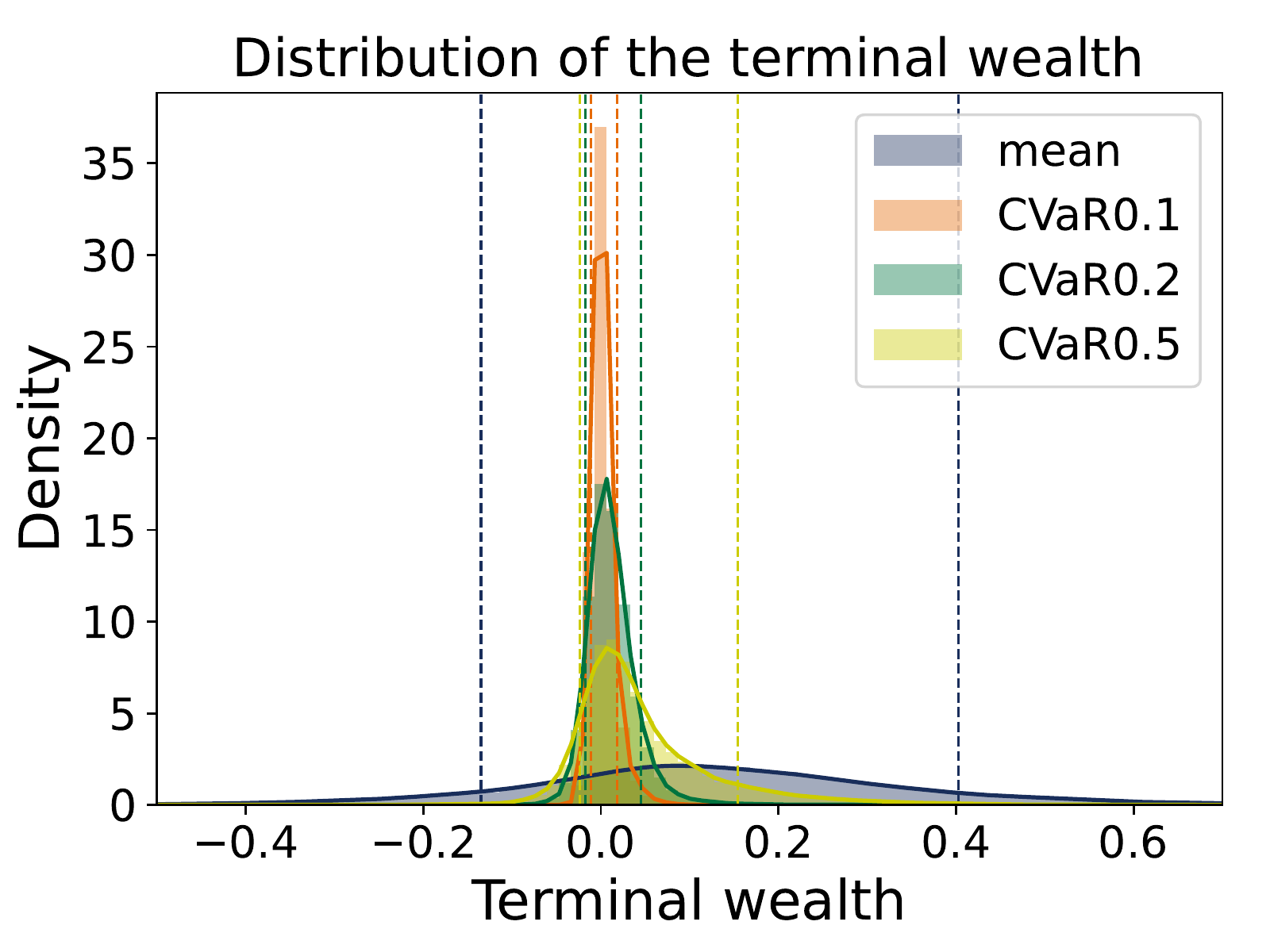}
		\caption{\new{Dynamic} \CVaR{} with various $\alpha$'s}
		\label{subfig:final-cost-cvars}
	\end{subfigure}
	\hfill
	\begin{subfigure}[b]{0.48\textwidth}
		\centering
		\includegraphics[width=0.95\textwidth]{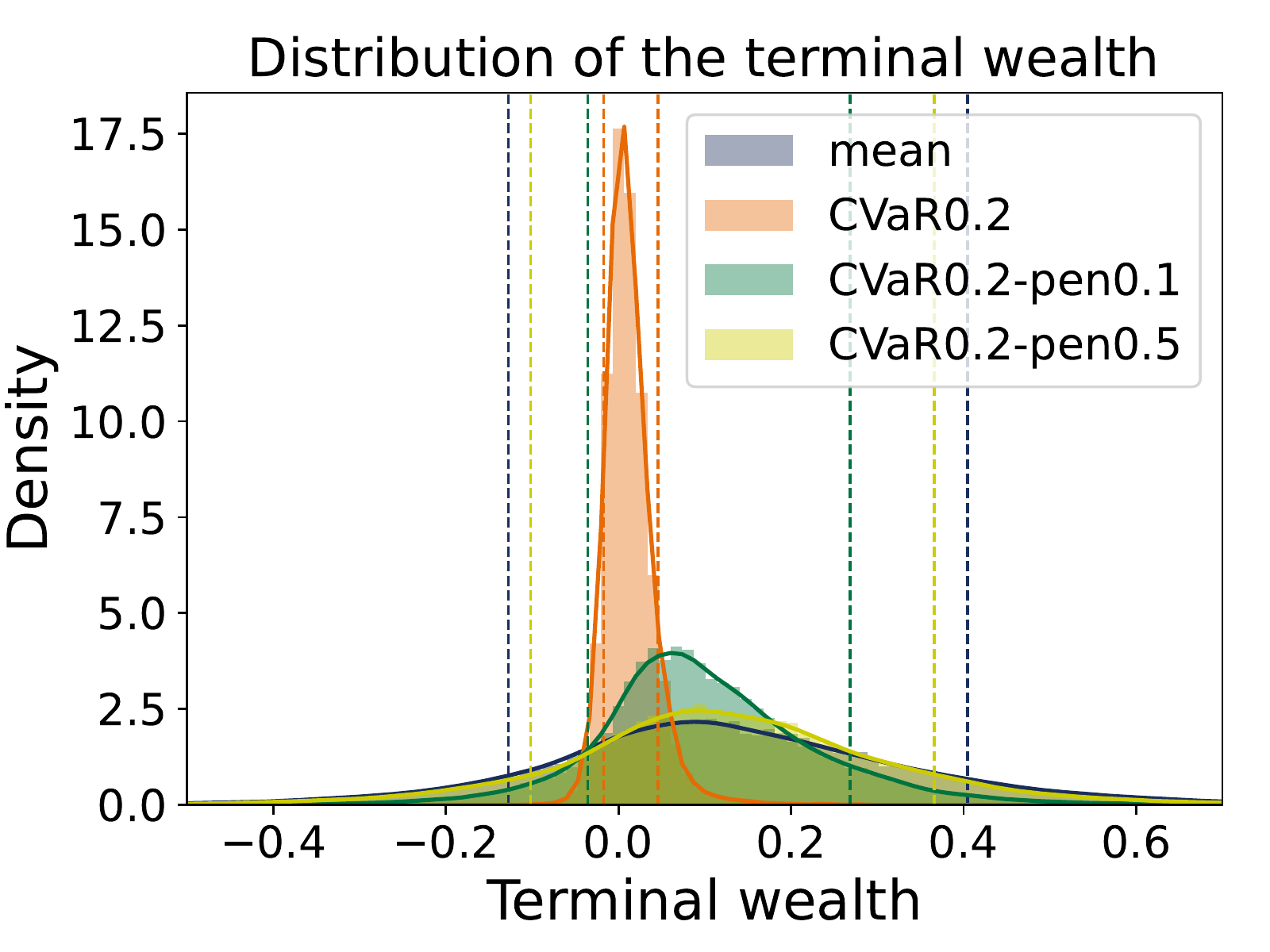}
		\caption{\new{Dynamic} penalized \CVaR{} with $\alpha=0.2$ and various $\beta$'s}
		\label{subfig:final-cost-cvar_pen1}
	\end{subfigure}
	\caption{Estimated distribution of the terminal wealth when following the learned \stratname{} in the statistical arbitrage example for several risk measures over 30,000 \episodenames{}. Vertical dashed lines indicate the $10\%$ and $90\%$ quantiles.}
	\label{fig:final-cost-dist}
\end{figure}



\subsection{Hedging with Friction Example}
\label{ssec:hedging-problem}

A corner stone problem of mathematical finance is the question of how to hedge the exposure to financial options.
In this section, we illustrate how our approach may be applied to hedging a call option in an environment where the underlying asset dynamics  follows the \cite{heston1993closed} model in a market with trading frictions.
In principle, one can swap out the specific stochastic volatility model for other models and/or use historical sample paths.

We denote the price of an underlying asset by $(\price_{\timeidx})_{\timeidx\in \periodspace{}}$ and the  call option's strike price by $K=10$.
We consider the case where an \agentname{} sells the call option and aims to dynamically hedge it trading solely in the underlying asset and the bank account.
We assume
there are $\eplength=10$ \timenames{} (corresponding to one month).
Hence, the \agentname{} must pay $(\price_{\eplength} - K)_{+}$ at the terminal time.

We denote the stochastic variance process by $(\nu_\timeidx)_{\timeidx\in \periodspace{}}$, and, as a reminder, in the Heston model the price evolution of the underlying asset is given by
\begin{subequations}
\begin{align}
    \dee \price_{\timeidx} &= \mu \,\price_{\timeidx} \dee \timeidx + \sqrt{\nu_{\timeidx}} \,\price_{\timeidx} \,\dee W^{\price}_{\timeidx}, \\
    \dee \nu_{\timeidx} &= \kappa\, \left( \vartheta - \nu_{\timeidx} \right) \,\dee \timeidx + \varsigma \,\sqrt{\nu_{\timeidx}} \,\dee W^{\nu}_{\timeidx},
\end{align}
\end{subequations}
where $\mu=0.1$ is the drift, $\kappa=9$ the mean-reversion rate, $\vartheta=(0.25)^2$ the mean-reversion level, $\varsigma=1$ the volatility of the volatility (often referred to as vol-vol), and $(W^{\price}_{\timeidx})_{\timeidx\in \periodspace{}}, (W^{\nu}_{\timeidx})_{\timeidx\in \periodspace{}}$ are two $\PP$-Brownian motions with correlation $\rho=-0.5$ (i.e. $d[W^{\price},W^{V}]_{\timeidx} = \rho \dee \timeidx$).

For generating sample paths, we use the Milstein discretization scheme \citep{mil1975approximate} to simulate the dynamics of the stock price and volatility, where for each $\timeidx \in \periodspace$,
\begin{subequations}
\begin{align}
    \price_{\timeidx+1} &= \price_{\timeidx} \exp \left\{
        \left(\mu - \frac{1}{2} \nu_{\timeidx}^{+} \right) \Delta \timeidx
        + \sqrt{\nu_{\timeidx}^{+} \Delta \timeidx} W^{\price}_{\timeidx}
        \right\}, \\
    \nu_{\timeidx+1} &= \nu_{\timeidx} 
        + \kappa (\vartheta - \nu_{\timeidx}^{+}) \Delta \timeidx 
        + \varsigma \sqrt{\nu_{\timeidx}^{+} \Delta \timeidx} W^{\nu}_{\timeidx}
        + \Ind(\nu_{\timeidx} \geq 0) \left(\frac{1}{4} \varsigma^{2} \Delta \timeidx ((W^{\nu}_{\timeidx})^{2} - 1)\right),
\end{align}
\end{subequations}
with $\nu_{\timeidx}^{+} = \max(\nu_{\timeidx}, 0)$, $\Delta \timeidx = 1/(12\eplength)$, and the initial price and volatility respectively of $\price_0=10$ and $\nu_0=(0.2)^2$.

At each \timename{} $\timeidx \in \periodspace{}$, the \agentname{}'s wealth $\wealth_{\timeidx}$ is determined by its hedges $\action_{\timeidx}$ and bank account $\bank_{\timeidx}$.
\new{In what follows,} we assume: (i) there are market frictions in the form of transaction costs of $\varepsilon = 0.005$ (per share); (ii) the interest rate of the bank account\new{, denoted $r$,} is zero; and (iii) the \agentname{} starts with an initial wealth of $\wealth_0=B_0$.

We next describe the dynamic hedging procedure we employ and the cash-flow it induces.
For each $\timeidx \in \periodspace$, the \agentname{} takes an action $\action_{\timeidx}^{\policyparams}$, which corresponds to the number of shares to hold over the next time interval, based on its \stratname{}, and this action influences its bank account and wealth in the following manner:
\begin{subequations}
\begin{align}
    \bank_{\timeidx^{+}} &= \bank_{\timeidx} -
                        \left( \action_{\timeidx}^{\policyparams} - \action_{\timeidx-1}^{\policyparams} \right) \price_{\timeidx}
                        -
                    \left| \action_{\timeidx}^{\policyparams} - \action_{\timeidx-1}^{\policyparams} \right| \varepsilon, \\
    \wealth_{\timeidx^{+}} &= \bank_{\timeidx^{+}} + \action_{\timeidx}^{\policyparams} \price_{\timeidx}.
\end{align}
\end{subequations}
The second term in $\bank_{\timeidx^{+}}$ represents the rebalancing costs, while the third term represents a cost due to trading frictions. 
The asset price, and bank account, evolves over the next period and induces a change in the \agentname{}'s wealth process.
Thus, for each $\timeidx \in \periodspace \setminus \{\eplength-1\}$, we have the following relationships:
\begin{subequations}
\begin{align}
    \bank_{\timeidx+1} &= e^{r \Delta t} \bank_{\timeidx^{+}}, \\
    \wealth_{\timeidx+1} &= \bank_{\timeidx+1}
                + 
                \action_{\timeidx}^{\policyparams} \price_{\timeidx+1}.
\end{align}
\end{subequations}
At the end of the investment horizon, the \agentname{} must pay the call option and liquidate its inventory.
Therefore, we have
\begin{subequations}
\begin{align}
    \bank_{\eplength} &= e^{r \Delta t}                  \bank_{(\eplength-1)^{+}}
                +
                \action_{\eplength-1}^{\policyparams} \price_{\eplength} - \left| \action_{\eplength-1}^{\policyparams} \right| \varepsilon
                -
                (\price_{\eplength} - K)_{+}, \\
    \wealth_{\eplength} &= \bank_{\eplength}.
\end{align}
\end{subequations}

In our \RL{} notation, \new{the agent minimizes a dynamic convex risk measure of the costs induced by its policy, where} for all \timenames{} $\timeidx \in \periodspace$, the costs $\costfunc \in \costspace$ are given by the change in the \agentname{} 's wealth process \new{$\costfunc_{\timeidx} = \wealth_{\timeidx} - \wealth_{\timeidx+1}$}, and the states
represent all the information the \agentname{} possesses before making its hedging decision.
Depending on the option and the market assumptions, one may easily include several features into the state space, such as more asset price history, market volatility, inventories of other assets the  \agentname{} holds, and so on.
To keep the experiments brief, here, we focus on just a few features.

We consider the case where the states that determine the \agentname{}'s \stratname{} are tuples of the form $(\timeidx, \price_{\timeidx}, \action_{\timeidx-1})$.
Typically, in continuous time financial modeling for the Heston model, one assumes that the volatility process is observed and used as a feature in obtaining the optimal hedge.
Here, however, we exclude it from the \stratname{} as in real-world trading, volatility itself is not observable. 
The initial wealth $\bank_{0}$ is obtained by forcing the dynamic \new{risk (whether the dynamic \CVaR{} at level $\alpha$ or dynamic penalized \CVaR{} at level $\alpha$ with relative entropy penalty constant $\beta$)} to be $0.2$.
This specific choice is up to the agent to choose and represents the minimal reservation value the \agentname{} is willing to take. In implementation, we achieve this by initially setting the price to the Black-Scholes price using the long-run volatility $\sqrt{\vartheta}$, and then adjusting the price until the dynamic \new{risk} is $0.2$.
As \new{coherent and convex dynamic risk measures are} translation invariant (see \cref{def:coherent_riskmeasure,def:convex_riskmeasure}), the learned \stratname{} is not affected by this price adjustment.

In \cref{fig:final-cost-hedge-dynamic}, we illustrate the P\&L distribution of the optimal strategies for three different confidence levels of dynamic \CVaR{}.
Analogous to the other experiments, the figure suggests that risk-sensitive \stratnames{} lead to reward distributions that become less variable as the confidence level of CVaR increases. 
In \cref{fig:bank-vs-finalprice}, we generate a scatter plot of the underlying asset's price versus the \agentname{}'s terminal bank account (corresponding to their terminal wealth).
In a situation where the \agentname{} can perfectly hedge, we expect to see the ``hockey stick'' payoff.
In the current context, however, while we do see the general ``hockey stick'' shape, there is additional convexity induced by the \agentname{} aiming to hedge in a discrete trading environment with trading frictions and (unobserved) stochastic volatility -- which is far from being a complete market. 



\begin{figure}[htbp]
    \centering
    \begin{minipage}{0.48\textwidth}
		\centering
		\includegraphics[width=0.95\textwidth]{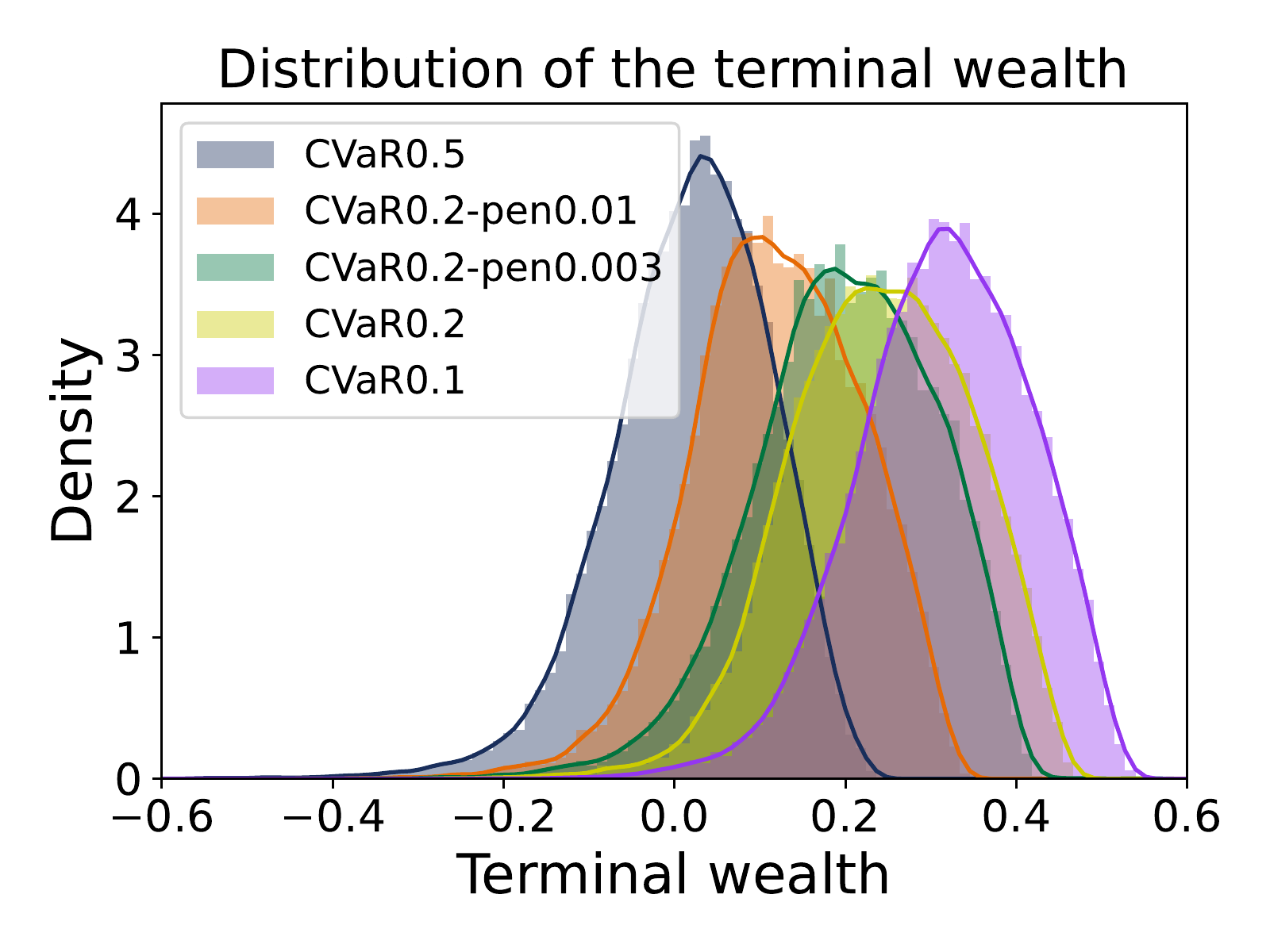}
	    \caption{Estimated distribution of the terminal wealth when following the learned \stratname{} in the hedging example over 30,000 \episodenames{}.}
		\label{fig:final-cost-hedge-dynamic}
	\end{minipage}
	\hfill
	\begin{minipage}{0.48\textwidth}
		\centering
		\includegraphics[width=0.95\textwidth]{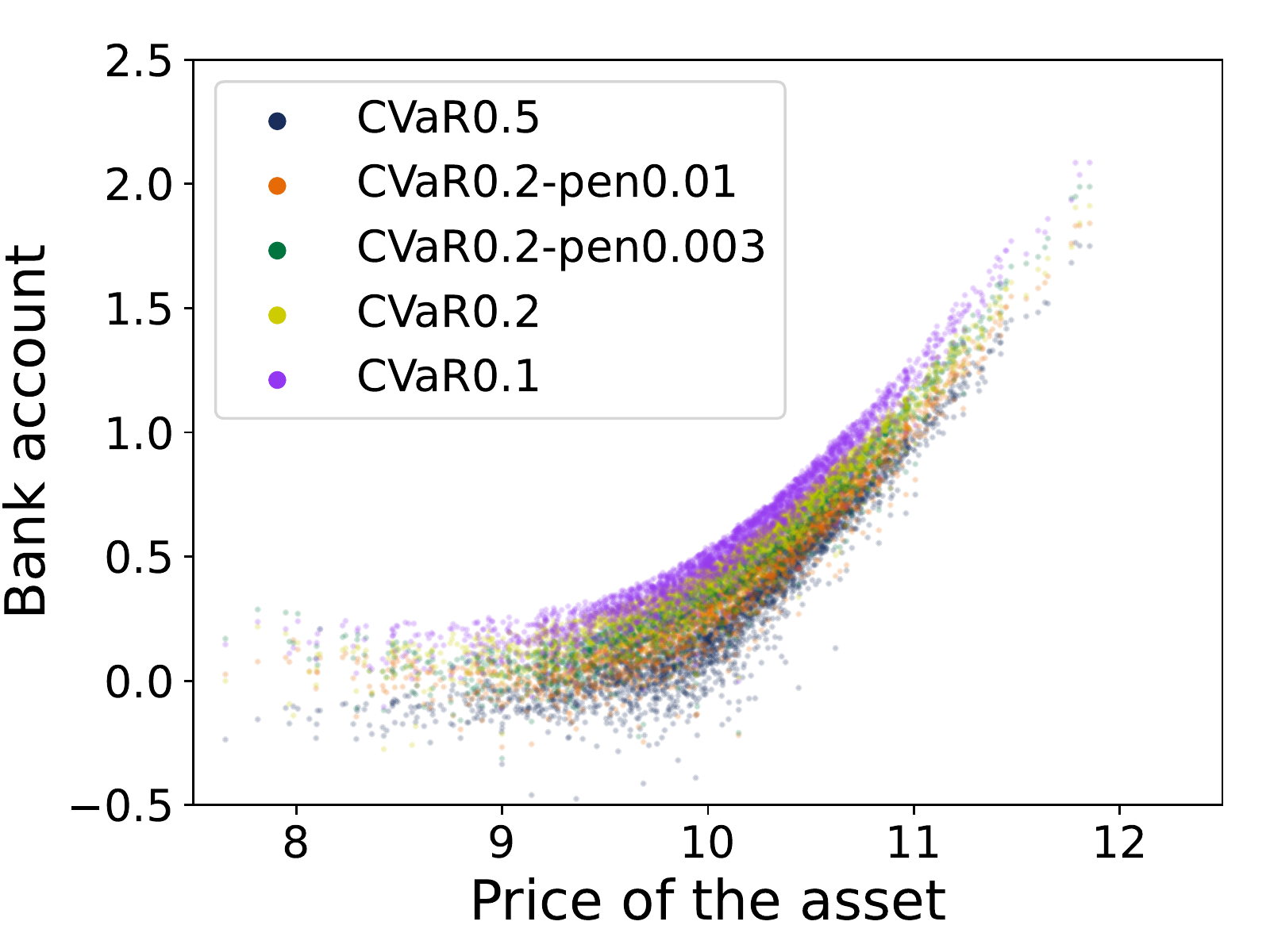}
		\caption{Scatter plot of the bank account before paying the call option against the terminal price of the asset when following the learned \stratname{} in the hedging example over 3,000 \episodenames{}.}
		\label{fig:bank-vs-finalprice}
	\end{minipage}
\end{figure}

\newpage

\subsection{Cliff Walking Example}
\label{ssec:cliff-walking}

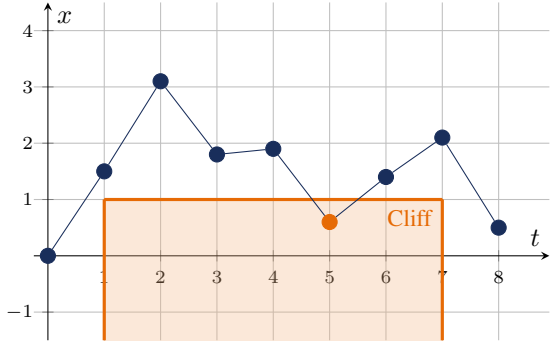
\begin{wrapfigure}{r}{0.5\textwidth}
\centering
\begin{tikzpicture}[
  dot/.style={
    circle,fill,draw,minimum size=2mm,inner sep=0
  }]

\begin{axis}[
    axis equal image,
    max space between ticks=20, 
    ticklabel style={font=\scriptsize},
    axis lines = middle,
    xmin=-0.25, xmax=8.9, 
    ymin=-1.5, ymax=4.5, 
    domain=1:7,         
    grid=both, 
    xlabel=$\timeidx$, ylabel=$x$     
]

 \shadedraw[left color=mred!50,right color=mred!50, draw=mred, fill opacity=0.3]
(1,-1.5) -- (1,1) -- (7,1) -- (7,-1.5);
 \addplot [very thick, mred] {-(0)*x+1} node[below left,font=\small]{Cliff};
 \addplot [very thick, mred,domain=1:-1.5] (1,{x});
 \addplot [very thick, mred,domain=1:-1.5] (7,{x});
 
 \node[dot, mblue, label={[font=\small]above left:$0$}] (a0) at (0,0){};
 \node[dot, mblue] (a1) at (1,1.5){};
 \node[dot, mblue] (a3) at (2,3.1){};
 \node[dot, mblue] (a4) at (3,1.8){};
 \node[dot, mblue] (a5) at (4,1.9){};
 \node[dot, mred] (a6) at (5,0.6){};
 \node[dot, mblue] (a7) at (6,1.4){};
 \node[dot, mblue] (a8) at (7,2.1){};
 \node[dot, mblue] (a9) at (8,0.5){};
 
 \draw[mblue] (a0) -- (a1);
 \draw[mblue] (a1) -- (a3);
 \draw[mblue] (a3) -- (a4);
 \draw[mblue] (a4) -- (a5);
 \draw[mblue] (a5) -- (a6);
 \draw[mblue] (a6) -- (a7);
 \draw[mblue] (a7) -- (a8);
 \draw[mblue] (a8) -- (a9);
 
\end{axis}
\end{tikzpicture}

\caption{\new{Illustration of the modified cliff walking problem. One sample path (in blue) of the rover is drawn where it falls into the cliff (in orange) at $\timeidx=5$.}}
\label{fig:tikz-cliff}
\end{wrapfigure}
This set of experiments is performed on a \new{modified} version of the cliff walking problem \citep{sutton2018reinforcement} \new{with a continuous action space}, illustrated in \cref{fig:tikz-cliff}.
Consider an autonomous rover exploring the land of a new planet, represented as a
Cartesian coordinate system.
The rover starts
at $(0,0)$ and aims to get
to $(\eplength,0)$, with $\eplength = 9$, while avoiding all
coordinates where $x \leq \clifflimit = 1.0$, illustrated as the cliff.
All allowed movements at any \timename{} $\timeidx \in \periodspace$, i.e. moving from $(\timeidx,x_1)$ to $(\timeidx+1, x_2)$, incur a cost of
$1+(x_2-x_1)^2$.
Stepping into the cliff region induces an additional cost of $100$, while landing further from the goal
at $(\eplength,x)$ induces a penalty of size
$x^2$.
Actions taken by the rover are drawn from a Gaussian distribution $\action_{\timeidx}^{\policyparams} \sim \policy^{\policyparams} = \normaldist (\mu^{\policyparams}, \sigma)$, with $\mu^{\policyparams} \in (-\trademax,\trademax)$, $\trademax=4.0$ and $\sigma = 1.5$.
This represents the rover's desire to move in a certain direction $\mu^{\policyparams}$, but its movements are altered by the unknown terrain (e.g. slipping on sand, crossing shallow water, etc.).\footnote{The environment can be modified in order to place different obstacles on the coordinate system, and the policy parametrized with atypical distributions (e.g. one-sided or skewed) to illustrate different terrains.}
This introduces randomness in the \RL{} problem that the autonomous rover must account for while making decisions.
In our \RL{} notation, for all \timenames{} $\timeidx \in \periodspace$, the costs $\costfunc \in \costspace$ are determined by the rover's movements, and the states by the tuples $( \timeidx, x_{\timeidx})$.

We next explore the \emph{\agentname{}'s sensitivity to risk} -- whether the rover should reach the goal as quickly as possible by staying close to the cliff, or take a more circuitous route to avoid inadvertently falling into the cliff.
\cref{fig:preferred-path-cliff}  shows the $10\%$, $50\%$, and $90\%$ quantiles of the region visited by the rover when following the optimal strategy induced by the four dynamic risk measures. The figure illustrates that, indeed, the \agentname{}  takes into account the uncertainty of the unknown terrain by staying further and further away from the cliff as they become more risk-averse.
Notice, the optimal \stratname{} when optimizing the risk-neutral expectation induces the agent to  stay close to the cliff to avoid the costs of vertical movements.
Ultimately, using a risk-sensitive approach gives a reward distribution with mitigated tail risk for the autonomous rover, as shown in \cref{fig:final-cost-dist-cliff}.
\new{We remark that we must consider larger relative entropy penalty constants for the dynamic penalized \CVaR{} than in \cref{ssec:trading-problem,ssec:hedging-problem} due to the large magnitude of the costs in this example.}


\begin{figure}[htbp]
    \centering
    \begin{minipage}{0.48\textwidth}
      \centering
      \includegraphics[width=0.95\textwidth]{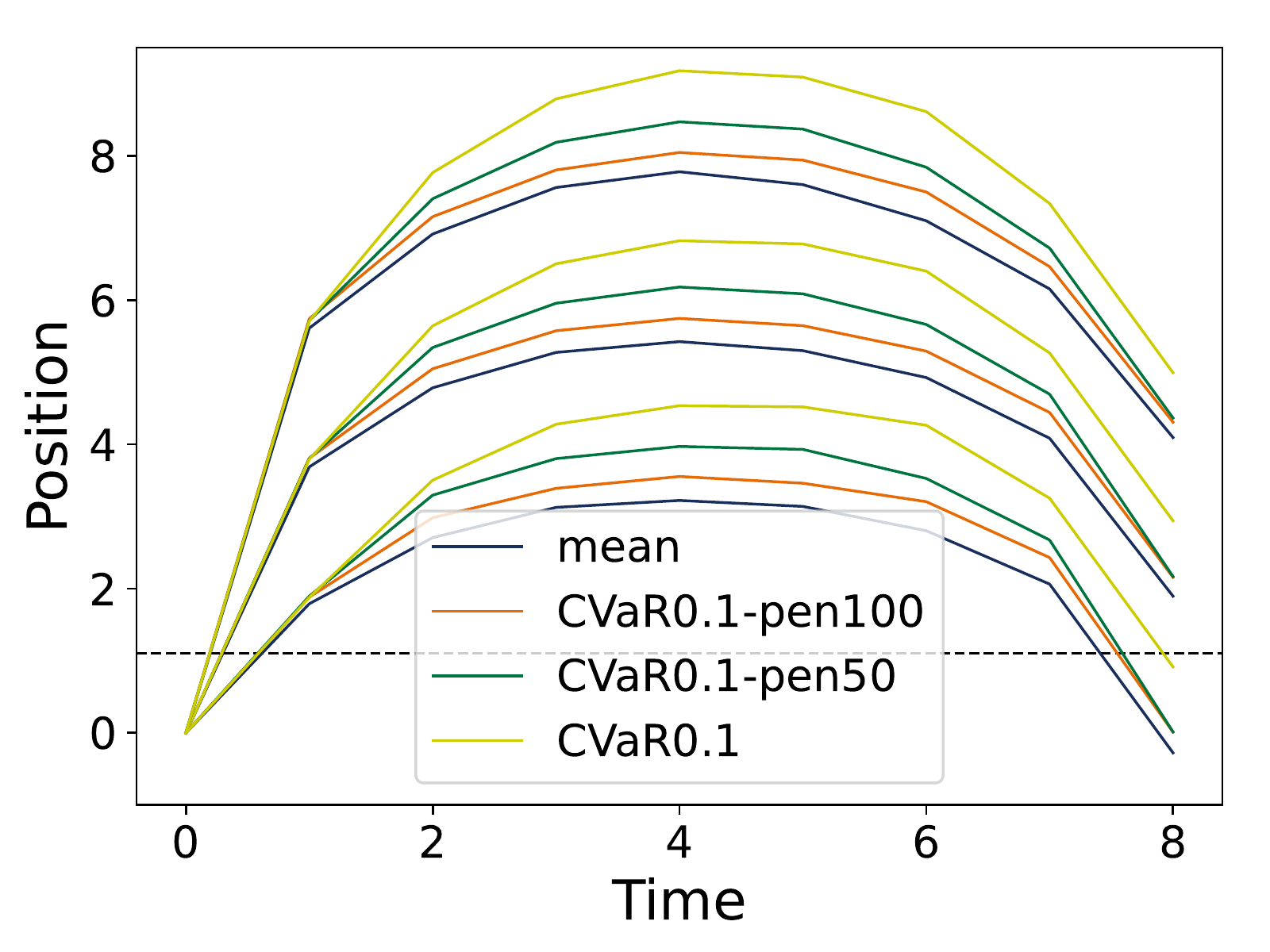}
      \caption{Estimated $10\%$, $50\%$ and $90\%$ quantiles of the region visited by the rover when following the learned policy over 30,000 \episodenames{}}
      \label{fig:preferred-path-cliff}
    \end{minipage}
    \hfill
    \begin{minipage}{0.48\textwidth}
      \centering
      \includegraphics[width=0.95\textwidth]{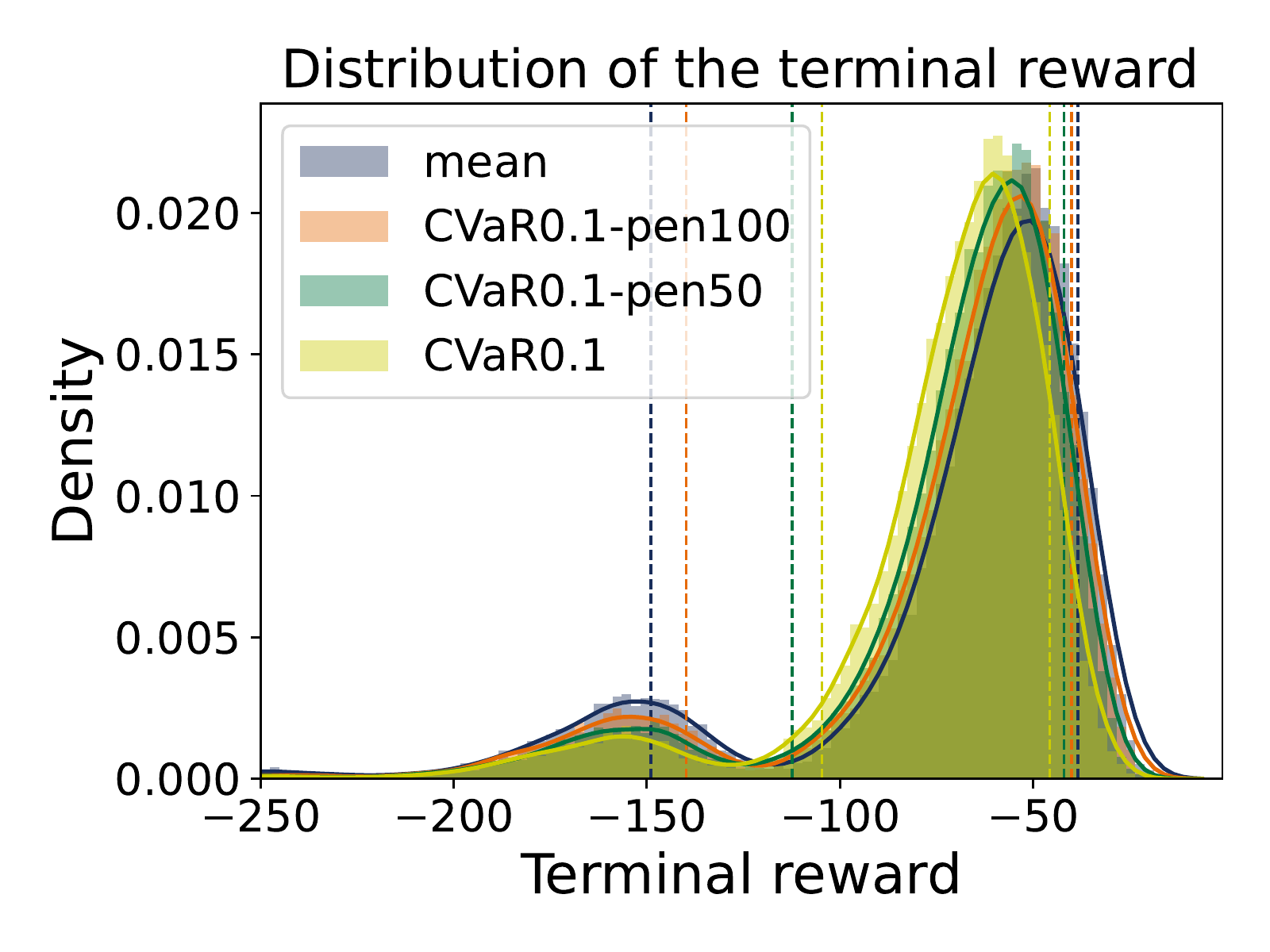}
      \caption{Estimated distribution of the terminal reward when following the learned policy in the cliff walking example for several risk measures over 30,000 \episodenames{}.}
      \label{fig:final-cost-dist-cliff}
    \end{minipage}
\end{figure}

\new{Risk-sensitivity in mathematical finance applications is of fundamental importance and requires special attention, which motivates the work provided in this paper. However, the proposed methodology can be applied more generally to other fields of study, as shown in this robot control example. One may see connections between the cliff walking problem and some financial applications: for instance, suppose that an \agentname{} aims to hedge a barrier option which is knocked-out (e.g. falling into the cliff) if the price of the underlying asset (e.g. position of the rover) goes beyond a predetermined price (e.g. position of the cliff). That is another related, yet distinct financial problem our algorithm could be used on.}


\section{Discussion}
\label{sec:conclusion}

In this work, we extend risk-aware \RL{} procedures found in the literature by developing a methodology for a wide class of sequential decision making problems.
Our approach broadens to the whole class of \emph{dynamic convex} risk measures and allows \emph{non-stationary behaviors} from the \agentname{}.
Our work performs well on three benchmark \RL{} and financial mathematical problems, which opens doors for several applications in credit risk, portfolio optimization and optimal control among others.

There are still some future directions that we could explore to
provide even more flexibility to our proposed approach.
For instance, we could devise a \emph{computationally efficient} methodology for large-scale problems when
simulations can be costly, propose a \emph{robust framework} for time-consistent dynamic risk measures,
and develop a generalization for deterministic \stratnames{} with dynamic risk measures in a similar manner to \emph{deep deterministic policy gradient}.
More recently, efforts were put to verify the theoretical \emph{convergence of policy gradient} methods.
Indeed, \cite{agarwal2021theory} show that the policy gradient approach with a risk-neutral expectation has global convergence guarantees.
On the other hand, it may not converge to a global optimum when the objective function is a dynamic risk measure \citep{huang2021convergence}.
It thus remains an open challenge to prove that actor-critic algorithms with dynamic convex risk measures converge to an optimal \stratname{} when both the value function and \stratname{} are characterized by \ANN{}s.

\bibliographystyle{abbrvnat}
\bibliography{bib-files/references}

\begin{thebibliography}{67}
\providecommand{\natexlab}[1]{#1}
\providecommand{\url}[1]{\texttt{#1}}
\expandafter\ifx\csname urlstyle\endcsname\relax
  \providecommand{\doi}[1]{doi: #1}\else
  \providecommand{\doi}{doi: \begingroup \urlstyle{rm}\Url}\fi

\bibitem[Acciaio and Penner(2011)]{acciaio2011dynamic}
B.~Acciaio and I.~Penner.
\newblock Dynamic risk measures.
\newblock In \emph{Advanced Mathematical Methods for Finance}, pages 1--34.
  Springer, 2011.

\bibitem[Agarwal et~al.(2021)Agarwal, Kakade, Lee, and
  Mahajan]{agarwal2021theory}
A.~Agarwal, S.~M. Kakade, J.~D. Lee, and G.~Mahajan.
\newblock On the theory of policy gradient methods: Optimality, approximation,
  and distribution shift.
\newblock \emph{Journal of Machine Learning Research}, 22\penalty0
  (98):\penalty0 1--76, 2021.

\bibitem[Ahmadi et~al.(2020)Ahmadi, Rosolia, Ingham, Murray, and
  Ames]{ahmadi2020constrained}
M.~Ahmadi, U.~Rosolia, M.~D. Ingham, R.~M. Murray, and A.~D. Ames.
\newblock Constrained risk-averse {Markov} decision processes.
\newblock \emph{arXiv preprint arXiv:2012.02423}, 2020.

\bibitem[Al-Aradi et~al.(2018)Al-Aradi, Correia, Naiff, Jardim, and
  Saporito]{al2018solving}
A.~Al-Aradi, A.~Correia, D.~Naiff, G.~Jardim, and Y.~Saporito.
\newblock Solving nonlinear and high-dimensional partial differential equations
  via deep learning.
\newblock \emph{arXiv preprint arXiv:1811.08782}, 2018.

\bibitem[Artzner et~al.(1999)Artzner, Delbaen, Eber, and
  Heath]{artzner1999coherent}
P.~Artzner, F.~Delbaen, J.-M. Eber, and D.~Heath.
\newblock Coherent measures of risk.
\newblock \emph{Mathematical Finance}, 9\penalty0 (3):\penalty0 203--228, 1999.

\bibitem[B{\"a}uerle and
  Glauner(2020{\natexlab{a}})]{bauerle2020distributionally}
N.~B{\"a}uerle and A.~Glauner.
\newblock Distributionally robust {Markov} decision processes and their
  connection to risk measures.
\newblock \emph{arXiv preprint arXiv:2007.13103}, 2020{\natexlab{a}}.

\bibitem[B{\"a}uerle and Glauner(2020{\natexlab{b}})]{bauerle2020minimizing}
N.~B{\"a}uerle and A.~Glauner.
\newblock Minimizing spectral risk measures applied to markov decision
  processes.
\newblock \emph{arXiv preprint arXiv:2012.04521}, 2020{\natexlab{b}}.

\bibitem[B{\"a}uerle and Glauner(2021)]{bauerle2021markov}
N.~B{\"a}uerle and A.~Glauner.
\newblock Markov decision processes with recursive risk measures.
\newblock \emph{European Journal of Operational Research}, 2021.

\bibitem[Bellemare et~al.(2017)Bellemare, Dabney, and
  Munos]{bellemare2017distributional}
M.~G. Bellemare, W.~Dabney, and R.~Munos.
\newblock A distributional perspective on reinforcement learning.
\newblock In \emph{International Conference on Machine Learning}, pages
  449--458. PMLR, 2017.

\bibitem[Bielecki et~al.(2016)Bielecki, Cialenco, Drapeau, and
  Karliczek]{bielecki2016dynamic}
T.~R. Bielecki, I.~Cialenco, S.~Drapeau, and M.~Karliczek.
\newblock Dynamic assessment indices.
\newblock \emph{Stochastics}, 88\penalty0 (1):\penalty0 1--44, 2016.

\bibitem[Campbell et~al.(2021)Campbell, Chen, Shrivats, and
  Jaimungal]{campbell2021deep}
S.~Campbell, Y.~Chen, A.~Shrivats, and S.~Jaimungal.
\newblock Deep learning for principal-agent mean field games.
\newblock \emph{arXiv preprint arXiv:2110.01127}, 2021.

\bibitem[Carmona and Lauri{\`e}re(2021)]{carmona2021deep}
R.~Carmona and M.~Lauri{\`e}re.
\newblock Deep learning for mean field games and mean field control with
  applications to finance.
\newblock \emph{arXiv preprint arXiv:2107.04568}, 2021.

\bibitem[Casgrain et~al.(2019)Casgrain, Ning, and Jaimungal]{casgrain2019deep}
P.~Casgrain, B.~Ning, and S.~Jaimungal.
\newblock Deep {Q}-learning for {Nash} equilibria: {Nash}-{DQN}.
\newblock \emph{arXiv preprint arXiv:1904.10554}, 2019.

\bibitem[Cheridito and Stadje(2009)]{cheridito2009time}
P.~Cheridito and M.~Stadje.
\newblock Time-inconsistency of var and time-consistent alternatives.
\newblock \emph{Finance Research Letters}, 6\penalty0 (1):\penalty0 40--46,
  2009.

\bibitem[Chow et~al.(2017)Chow, Ghavamzadeh, Janson, and Pavone]{chow2017risk}
Y.~Chow, M.~Ghavamzadeh, L.~Janson, and M.~Pavone.
\newblock Risk-constrained reinforcement learning with percentile risk
  criteria.
\newblock \emph{Journal of Machine Learning Research}, 18\penalty0
  (1):\penalty0 6070--6120, 2017.

\bibitem[Chu and Zhang(2014)]{chu2014markov}
S.~Chu and Y.~Zhang.
\newblock Markov decision processes with iterated coherent risk measures.
\newblock \emph{International Journal of Control}, 87\penalty0 (11):\penalty0
  2286--2293, 2014.

\bibitem[Cuchiero et~al.(2020)Cuchiero, Khosrawi, and
  Teichmann]{cuchiero2020generative}
C.~Cuchiero, W.~Khosrawi, and J.~Teichmann.
\newblock A generative adversarial network approach to calibration of local
  stochastic volatility models.
\newblock \emph{Risks}, 8\penalty0 (4):\penalty0 101, 2020.

\bibitem[Cybenko(1989)]{cybenko1989approximation}
G.~Cybenko.
\newblock Approximation by superpositions of a sigmoidal function.
\newblock \emph{Mathematics of Control, Signals and Systems}, 2\penalty0
  (4):\penalty0 303--314, 1989.

\bibitem[Delage and Mannor(2010)]{delage2010percentile}
E.~Delage and S.~Mannor.
\newblock Percentile optimization for {Markov} decision processes with
  parameter uncertainty.
\newblock \emph{Operations Research}, 58\penalty0 (1):\penalty0 203--213, 2010.

\bibitem[Detlefsen and Scandolo(2005)]{detlefsen2005conditional}
K.~Detlefsen and G.~Scandolo.
\newblock Conditional and dynamic convex risk measures.
\newblock \emph{Finance and Stochastics}, 9\penalty0 (4):\penalty0 539--561,
  2005.

\bibitem[Dhaene et~al.(2006)Dhaene, Vanduffel, Goovaerts, Kaas, Tang, and
  Vyncke]{dhaene2006risk}
J.~Dhaene, S.~Vanduffel, M.~J. Goovaerts, R.~Kaas, Q.~Tang, and D.~Vyncke.
\newblock Risk measures and comonotonicity: a review.
\newblock \emph{Stochastic Models}, 22\penalty0 (4):\penalty0 573--606, 2006.

\bibitem[Di~Castro et~al.(2019)Di~Castro, Oren, and Mannor]{di2019practical}
D.~Di~Castro, J.~Oren, and S.~Mannor.
\newblock Practical risk measures in reinforcement learning.
\newblock \emph{arXiv preprint arXiv:1908.08379}, 2019.

\bibitem[Drapeau et~al.(2016)Drapeau, Kupper, Gianin, and
  Tangpi]{drapeau2016dual}
S.~Drapeau, M.~Kupper, E.~R. Gianin, and L.~Tangpi.
\newblock Dual representation of minimal supersolutions of convex {BSDEs}.
\newblock In \emph{Annales de l'Institut Henri Poincar{\'e}, Probabilit{\'e}s
  et Statistiques}, volume~52, pages 868--887. Institut Henri Poincar{\'e},
  2016.

\bibitem[F{\"o}llmer and Schied(2002)]{follmer2002convex}
H.~F{\"o}llmer and A.~Schied.
\newblock Convex measures of risk and trading constraints.
\newblock \emph{Finance and Stochastics}, 6\penalty0 (4):\penalty0 429--447,
  2002.

\bibitem[F{\"o}llmer et~al.(2004)F{\"o}llmer, Schied, and
  Lyons]{follmer2004stochastic}
H.~F{\"o}llmer, A.~Schied, and T.~J. Lyons.
\newblock Stochastic finance. an introduction in discrete time.
\newblock \emph{The Mathematical Intelligencer}, 26\penalty0 (4):\penalty0
  67--68, 2004.

\bibitem[Frittelli and Gianin(2004)]{frittelli2004dynamic}
M.~Frittelli and E.~R. Gianin.
\newblock Dynamic convex risk measures.
\newblock \emph{Risk measures for the 21st century}, pages 227--248, 2004.

\bibitem[Galichet et~al.(2013)Galichet, Sebag, and
  Teytaud]{galichet2013exploration}
N.~Galichet, M.~Sebag, and O.~Teytaud.
\newblock Exploration vs exploitation vs safety: Risk-aware multi-armed
  bandits.
\newblock In \emph{Asian Conference on Machine Learning}, pages 245--260. PMLR,
  2013.

\bibitem[Goodfellow et~al.(2016)Goodfellow, Bengio, Courville, and
  Bengio]{goodfellow2016deep}
I.~Goodfellow, Y.~Bengio, A.~Courville, and Y.~Bengio.
\newblock \emph{Deep learning}, volume~1.
\newblock MIT press Cambridge, 2016.

\bibitem[Grondman et~al.(2012)Grondman, Busoniu, Lopes, and
  Babuska]{grondman2012survey}
I.~Grondman, L.~Busoniu, G.~A. Lopes, and R.~Babuska.
\newblock A survey of actor-critic reinforcement learning: Standard and natural
  policy gradients.
\newblock \emph{IEEE Transactions on Systems, Man, and Cybernetics, Part C
  (Applications and Reviews)}, 42\penalty0 (6):\penalty0 1291--1307, 2012.

\bibitem[Hambly et~al.(2021)Hambly, Xu, and Yang]{hambly2021policy}
B.~Hambly, R.~Xu, and H.~Yang.
\newblock Policy gradient methods for the noisy linear quadratic regulator over
  a finite horizon.
\newblock \emph{SIAM Journal on Control and Optimization}, 59\penalty0
  (5):\penalty0 3359--3391, 2021.

\bibitem[Han et~al.(2018)Han, Jentzen, and Weinan]{han2018solving}
J.~Han, A.~Jentzen, and E.~Weinan.
\newblock Solving high-dimensional partial differential equations using deep
  learning.
\newblock \emph{Proceedings of the National Academy of Sciences}, 115\penalty0
  (34):\penalty0 8505--8510, 2018.

\bibitem[Heston(1993)]{heston1993closed}
S.~L. Heston.
\newblock A closed-form solution for options with stochastic volatility with
  applications to bond and currency options.
\newblock \emph{The Review of Financial Studies}, 6\penalty0 (2):\penalty0
  327--343, 1993.

\bibitem[Hornik(1991)]{hornik1991approximation}
K.~Hornik.
\newblock Approximation capabilities of multilayer feedforward networks.
\newblock \emph{Neural Networks}, 4\penalty0 (2):\penalty0 251--257, 1991.

\bibitem[Horvath et~al.(2021)Horvath, Muguruza, and Tomas]{horvath2021deep}
B.~Horvath, A.~Muguruza, and M.~Tomas.
\newblock Deep learning volatility: a deep neural network perspective on
  pricing and calibration in (rough) volatility models.
\newblock \emph{Quantitative Finance}, 21\penalty0 (1):\penalty0 11--27, 2021.

\bibitem[Hu(2019)]{hu2019deep}
R.~Hu.
\newblock Deep fictitious play for stochastic differential games.
\newblock \emph{arXiv preprint arXiv:1903.09376}, 2019.

\bibitem[Huang et~al.(2021)Huang, Leqi, Lipton, and
  Azizzadenesheli]{huang2021convergence}
A.~Huang, L.~Leqi, Z.~C. Lipton, and K.~Azizzadenesheli.
\newblock On the convergence and optimality of policy gradient for {Markov}
  coherent risk.
\newblock \emph{arXiv preprint arXiv:2103.02827}, 2021.

\bibitem[Jaimungal(2022)]{jaimungal2022reinforcement}
S.~Jaimungal.
\newblock Reinforcement learning and stochastic optimisation.
\newblock \emph{Finance and Stochastics}, 26\penalty0 (1):\penalty0 103--129,
  2022.

\bibitem[Kalogerias et~al.(2020)Kalogerias, Chamon, Pappas, and
  Ribeiro]{kalogerias2020better}
D.~S. Kalogerias, L.~F. Chamon, G.~J. Pappas, and A.~Ribeiro.
\newblock Better safe than sorry: Risk-aware nonlinear bayesian estimation.
\newblock In \emph{ICASSP 2020-2020 IEEE International Conference on Acoustics,
  Speech and Signal Processing (ICASSP)}, pages 5480--5484. IEEE, 2020.

\bibitem[Kingma and Ba(2014)]{kingma2014adam}
D.~P. Kingma and J.~Ba.
\newblock Adam: A method for stochastic optimization.
\newblock \emph{arXiv preprint arXiv:1412.6980}, 2014.

\bibitem[Konda and Tsitsiklis(2000)]{konda2000actor}
V.~R. Konda and J.~N. Tsitsiklis.
\newblock Actor-critic algorithms.
\newblock In \emph{Advances in Neural Information Processing Systems}, pages
  1008--1014. Citeseer, 2000.

\bibitem[Kose and Ruszczynski(2021)]{kose2021risk}
U.~Kose and A.~Ruszczynski.
\newblock Risk-averse learning by temporal difference methods with {Markov}
  risk measures.
\newblock \emph{Journal of Machine Learning Research}, 22:\penalty0 38--1,
  2021.

\bibitem[LeCun et~al.(2015)LeCun, Bengio, and Hinton]{lecun2015deep}
Y.~LeCun, Y.~Bengio, and G.~Hinton.
\newblock Deep learning.
\newblock \emph{Nature}, 521\penalty0 (7553):\penalty0 436--444, 2015.

\bibitem[Leshno et~al.(1993)Leshno, Lin, Pinkus, and
  Schocken]{leshno1993multilayer}
M.~Leshno, V.~Y. Lin, A.~Pinkus, and S.~Schocken.
\newblock Multilayer feedforward networks with a nonpolynomial activation
  function can approximate any function.
\newblock \emph{Neural Networks}, 6\penalty0 (6):\penalty0 861--867, 1993.

\bibitem[Milgrom and Segal(2002)]{milgrom2002envelope}
P.~Milgrom and I.~Segal.
\newblock Envelope theorems for arbitrary choice sets.
\newblock \emph{Econometrica}, 70\penalty0 (2):\penalty0 583--601, 2002.

\bibitem[Mil’shtejn(1975)]{mil1975approximate}
G.~Mil’shtejn.
\newblock Approximate integration of stochastic differential equations.
\newblock \emph{Theory of Probability \& Its Applications}, 19\penalty0
  (3):\penalty0 557--562, 1975.

\bibitem[Nass et~al.(2019)Nass, Belousov, and Peters]{nass2019entropic}
D.~Nass, B.~Belousov, and J.~Peters.
\newblock Entropic risk measure in policy search.
\newblock \emph{arXiv preprint arXiv:1906.09090}, 2019.

\bibitem[Ning et~al.(2021)Ning, Jaimungal, Zhang, and
  Bergeron]{ning2021arbitrage}
B.~Ning, S.~Jaimungal, X.~Zhang, and M.~Bergeron.
\newblock Arbitrage-free implied volatility surface generation with variational
  autoencoders.
\newblock \emph{arXiv preprint arXiv:2108.04941}, 2021.

\bibitem[Osogami(2012)]{osogami2012robustness}
T.~Osogami.
\newblock Robustness and risk-sensitivity in {Markov} decision processes.
\newblock \emph{Advances in Neural Information Processing Systems},
  25:\penalty0 233--241, 2012.

\bibitem[Peng(1997)]{peng1997backward}
S.~Peng.
\newblock Backward {SDE} and related g-expectation.
\newblock \emph{Pitman Research Notes in Mathematics Series}, pages 141--160,
  1997.

\bibitem[Petrik and Subramanian(2012)]{petrik2012approximate}
M.~Petrik and D.~Subramanian.
\newblock An approximate solution method for large risk-averse {Markov}
  decision processes.
\newblock \emph{arXiv preprint arXiv:1210.4901}, 2012.

\bibitem[Pinkus(1999)]{pinkus1999approximation}
A.~Pinkus.
\newblock Approximation theory of the {MLP} model in neural networks.
\newblock \emph{Acta Numerica}, 8:\penalty0 143--195, 1999.

\bibitem[Prashanth and Ghavamzadeh(2013)]{prashanth2013actor}
L.~Prashanth and M.~Ghavamzadeh.
\newblock Actor-critic algorithms for risk-sensitive {MDPs}.
\newblock 2013.

\bibitem[Rahimian and Mehrotra(2019)]{rahimian2019distributionally}
H.~Rahimian and S.~Mehrotra.
\newblock Distributionally robust optimization: A review.
\newblock \emph{arXiv preprint arXiv:1908.05659}, 2019.

\bibitem[Riedel(2004)]{riedel2004dynamic}
F.~Riedel.
\newblock Dynamic coherent risk measures.
\newblock \emph{Stochastic Processes and their Applications}, 112\penalty0
  (2):\penalty0 185--200, 2004.

\bibitem[Rockafellar and Uryasev(2013)]{rockafellar2013fundamental}
R.~T. Rockafellar and S.~Uryasev.
\newblock The fundamental risk quadrangle in risk management, optimization and
  statistical estimation.
\newblock \emph{Surveys in Operations Research and Management Science},
  18\penalty0 (1-2):\penalty0 33--53, 2013.

\bibitem[Rockafellar et~al.(2000)Rockafellar, Uryasev,
  et~al.]{rockafellar2000optimization}
R.~T. Rockafellar, S.~Uryasev, et~al.
\newblock Optimization of conditional value-at-risk.
\newblock \emph{Journal of Risk}, 2:\penalty0 21--42, 2000.

\bibitem[Ruszczy{\'n}ski(2010)]{ruszczynski2010risk}
A.~Ruszczy{\'n}ski.
\newblock Risk-averse dynamic programming for {Markov} decision processes.
\newblock \emph{Mathematical Programming}, 125\penalty0 (2):\penalty0 235--261,
  2010.

\bibitem[Shapiro et~al.(2014)Shapiro, Dentcheva, and
  Ruszczy{\'n}ski]{shapiro2014lectures}
A.~Shapiro, D.~Dentcheva, and A.~Ruszczy{\'n}ski.
\newblock \emph{Lectures on Stochastic Programming: Modeling and Theory}.
\newblock SIAM, 2014.

\bibitem[Shen et~al.(2014)Shen, Tobia, Sommer, and Obermayer]{shen2014risk}
Y.~Shen, M.~J. Tobia, T.~Sommer, and K.~Obermayer.
\newblock Risk-sensitive reinforcement learning.
\newblock \emph{Neural Computation}, 26\penalty0 (7):\penalty0 1298--1328,
  2014.

\bibitem[Silver et~al.(2016)Silver, Huang, Maddison, Guez, Sifre, Van
  Den~Driessche, Schrittwieser, Antonoglou, Panneershelvam, Lanctot,
  et~al.]{silver2016mastering}
D.~Silver, A.~Huang, C.~J. Maddison, A.~Guez, L.~Sifre, G.~Van Den~Driessche,
  J.~Schrittwieser, I.~Antonoglou, V.~Panneershelvam, M.~Lanctot, et~al.
\newblock Mastering the game of go with deep neural networks and tree search.
\newblock \emph{Nature}, 529\penalty0 (7587):\penalty0 484--489, 2016.

\bibitem[Sutton and Barto(2018)]{sutton2018reinforcement}
R.~S. Sutton and A.~G. Barto.
\newblock \emph{Reinforcement Learning: An Introduction}.
\newblock MIT press, 2018.

\bibitem[Sutton et~al.(2000)Sutton, McAllester, Singh, and
  Mansour]{sutton2000policy}
R.~S. Sutton, D.~A. McAllester, S.~P. Singh, and Y.~Mansour.
\newblock Policy gradient methods for reinforcement learning with function
  approximation.
\newblock In \emph{Advances in Neural Information Processing Systems}, pages
  1057--1063, 2000.

\bibitem[Tamar et~al.(2015)Tamar, Chow, Ghavamzadeh, and
  Mannor]{tamar2015policy}
A.~Tamar, Y.~Chow, M.~Ghavamzadeh, and S.~Mannor.
\newblock Policy gradient for coherent risk measures.
\newblock \emph{Advances in Neural Information Processing Systems},
  28:\penalty0 1468--1476, 2015.

\bibitem[Tamar et~al.(2016)Tamar, Chow, Ghavamzadeh, and
  Mannor]{tamar2016sequential}
A.~Tamar, Y.~Chow, M.~Ghavamzadeh, and S.~Mannor.
\newblock Sequential decision making with coherent risk.
\newblock \emph{IEEE Transactions on Automatic Control}, 62\penalty0
  (7):\penalty0 3323--3338, 2016.

\bibitem[Weber(2006)]{weber2006distribution}
S.~Weber.
\newblock Distribution-invariant risk measures, information, and dynamic
  consistency.
\newblock \emph{Mathematical Finance: An International Journal of Mathematics,
  Statistics and Financial Economics}, 16\penalty0 (2):\penalty0 419--441,
  2006.

\bibitem[Weinan et~al.(2017)Weinan, Han, and Jentzen]{weinan2017deep}
E.~Weinan, J.~Han, and A.~Jentzen.
\newblock Deep learning-based numerical methods for high-dimensional parabolic
  partial differential equations and backward stochastic differential
  equations.
\newblock \emph{Communications in Mathematics and Statistics}, 5\penalty0
  (4):\penalty0 349--380, 2017.

\bibitem[Yu et~al.(2018)Yu, Haskell, and Xu]{yu2018approximate}
P.~Yu, W.~B. Haskell, and H.~Xu.
\newblock Approximate value iteration for risk-aware {Markov} decision
  processes.
\newblock \emph{IEEE Transactions on Automatic Control}, 63\penalty0
  (9):\penalty0 3135--3142, 2018.

\end{thebibliography}

\appendix

\section{Implementation}
\label{sec:implementation}

In this section, we expand on the experimental setup by giving additional details on the implementation of algorithms given in \cref{sec:algorithm}.
All the code written in Python is available at \url{https://github.com/acoache/RL-DynamicConvexRisk}.

The structure of the Python files is similar between all sets of experiments.
The \texttt{envs.py} file contains the environment class for \RL{} problem, as well as functions to interact with it.
It has both the \href{https://pytorch.org/}{PyTorch} and \href{https://numpy.org/}{NumPy} versions of the simulation engine. 
The \texttt{risk\_measure.py} file has the class that creates an instance of a risk measure, with functions to compute the risk and calculate the gradient.
Risk measures currently implemented are the expectation, the \CVaR{}, the penalized \CVaR{}, the mean-semideviation and a linear combination between the mean and \CVaR{}.
There is also a \texttt{utils.py} file which contains some useful functions and variables, such as a function to create new directories and colors for the visualizations.

Models are regrouped under the \texttt{models.py} file with classes to build \ANN{} structures using the \href{https://pytorch.org/}{PyTorch} library.
In our experiments presented in \cref{sec:experiments}, the value function $\valuefunc^{\valueparams}$ has four layers of $16$ hidden nodes each with SiLU activation functions, and no activation function for its output layer.
The \ANN{} for the \stratname{} $\policy^{\policyparams}$ is composed of five layers with $16$ hidden nodes each with SiLU activation function, but with an output layer specific to the application -- e.g. linear transformation of a sigmoid activation function that maps to $(-\trademax, \trademax)$ for the set of experiments in \cref{ssec:trading-problem}.
The learning rates for $\valuefunc^{\valueparams}$ and $\policy^{\policyparams}$ are of the order of respectively $1 \times 10^{-3}$ and $5 \times 10^{-4}$.
Both $\valuefunc^{\valueparams}$ and $\policy^{\policyparams}$ are updated with mini batches of 300 (outer) \episodenames{} and 1,000 (inner) \transitionnames{} during the training phase.\footnote{Hyperparameters depend on the specific experiment and are chosen to accelerate the learning procedure.}

The whole algorithm is wrapped into a single class named \texttt{ActorCriticPG}, where input arguments specify which problem the \agentname{} faces.
The user needs to give an environment, a (convex) risk measure, as well as two neural network structures that play the role of the value function and \agentstratname{}.
Each instance of that class has functions to select actions from the \stratname{}, whether at random or using the best behavior found thus far, and give the set of invalid actions. 
There is also a function to simulate (outer) \episodenames{} and (inner) \transitionnames{} using the simulation upon simulation approach discussed in \cref{sec:algorithm}.
\cref{algo:estimate-value} is wrapped in a function which takes as inputs the mini-batch size $\NbatchsV$, number of epochs $\NepochsV$ and characteristics of the value function neural network structure, such as the learning rate and the number of hidden nodes.
Similarly, another function implements \cref{algo:update-pi} and takes as inputs the mini-batch size $\NbatchsPI$ and number of epochs $\NepochsPI$.

The \texttt{main.py} file contains the program to run the training phase.
The first part concerns the importation of libraries and initialization of all parameters, either for the environment, neural networks or risk measure.
Some notable parameters that need to be specified by the user in the \texttt{hyperparams.py} file are the numbers of epochs, learning rates, size of the neural networks and number of \episodenames{}/\transitionnames{} among others. 
The next section is the training phase and its skeleton is given in \cref{algo:main-steps}.
It uses mostly functions from the \texttt{actor\_critic.py} file.
Finally, the models for the policy and value function are saved in a folder, along with diagnostic plots.
Since the nested simulation approach is computationally expensive, running this Python program can take up to \new{24} hours depending on the application and the hyperparameters, especially the number of \timenames{}, \new{the complexity of the convex risk measures} and the number of (inner) transitions.

This \texttt{main\_plot.py} file contains the program to run the testing phase. The first part concerns the importation of libraries and initialization of all parameters.
Note that parameters must be identical to the ones used in \texttt{main.py}.
The next section evaluates the \stratname{} found by the algorithm.
It runs several simulations using the best behavior found by the actor-critic algorithm.
Finally it outputs graphics to assess the performance of the procedure, such as the preferred action in any possible state and the estimated distribution of the total cost when following the best \stratname{}.

\end{document}